\documentclass{article}

\usepackage[preprint]{neurips_2026}

\usepackage[utf8]{inputenc} 
\usepackage[T1]{fontenc}    
\usepackage{hyperref}       
\usepackage{url}            
\usepackage{booktabs}       
\usepackage{amsfonts}       
\usepackage{nicefrac}       
\usepackage{microtype}      
\usepackage{xcolor}         

\usepackage{algorithm}
\usepackage{algorithmic}
\usepackage{amsmath}
\usepackage{amssymb}
\usepackage{mathtools,multicol}
\usepackage{amsthm}
\usepackage{booktabs}
\usepackage{multirow}
\usepackage{enumitem}
\usepackage{makecell}
\usepackage[table]{xcolor}
\usepackage[most]{tcolorbox}
\usepackage{adjustbox}
\usepackage{graphicx}
\usepackage{xspace}
\usepackage{wrapfig}

\newtheorem{proposition}{Proposition}
\newtheorem{remark}{Remark}

\newcommand{\method}{{\textbf{MPU}}\@\xspace}
\newcommand{\ie}{i.e.,\xspace}
\newcommand{\eg}{e.g.,\xspace}

 \definecolor{mpucolor1}{HTML}
 {BA684F}
  \definecolor{mpucolor1b}{HTML}
 {000000}
\definecolor{mpucolor2}{HTML}{C76F54}
\definecolor{mpucolor3}{HTML}{FBE9C6}
\definecolor{mpucolor4}{HTML}{F5DEB2}
\definecolor{mpucolor5}{HTML}{FCF7F6}
\definecolor{mpucolor1_bak}{HTML}{BA684F}

\newtcolorbox{pmt}[2][]{
  colbacktitle = mpucolor1_bak,
  colframe = mpucolor1_bak,
  colback = mpucolor5,
  coltitle = white,
  fonttitle=\bfseries,
  title={Prompt: #2},
  #1 
}

\title{\method: Towards Secure and Privacy-Preserving\\Knowledge Unlearning for Large Language Models}

\author{%
    \textbf{Tiantong Wang\textsuperscript{1,2}}\quad
    \textbf{Xinyu Yan\textsuperscript{1,2}}\quad
    \textbf{Tiantong Wu\textsuperscript{1,2,\thanks{Corresponding author.}}}\\
    \textbf{Yurong Hao\textsuperscript{1}}\quad
    \textbf{Pengjun Xie\textsuperscript{3}}\quad
    \textbf{Wei Yang Bryan Lim\textsuperscript{1}}
    \\
    \textsuperscript{1}College of Computing and Data Science, Nanyang Technological University \\
    \textsuperscript{2}Alibaba-NTU Global e-Sustainability CorpLab (ANGEL) \\
    \textsuperscript{3}Tongyi Lab, Alibaba Group
}

\begin{document}

\maketitle

\begin{abstract}
    Machine unlearning for large language models often faces a privacy dilemma in which strict constraints prohibit sharing either the server's parameters or the client's forget set.
    To address this dual non-disclosure constraint, we propose \textcolor{mpucolor1b}{\method}, an algorithm-agnostic privacy-preserving \textcolor{mpucolor1b}{\textbf{\underline{M}}}ultiple \textcolor{mpucolor1b}{\textbf{\underline{P}}}erturbed Copies \textcolor{mpucolor1b}{\textbf{\underline{U}}}nlearning framework that primarily introduces two server-side modules: \textbf{Pre-Process} for randomized copy generation and \textbf{Post-Process} for update aggregation.
    In Pre-Process, the server distributes multiple perturbed and reparameterized model instances, allowing the client to execute unlearning locally on its private forget set without accessing the server's exact original parameters. 
    After local unlearning, the server performs Post-Process by inverting the reparameterization and aggregating updates with a harmonic denoising procedure to alleviate the impact of perturbation. Experiments with seven unlearning algorithms show that \method achieves comparable unlearning performance to noise-free baselines, with most algorithms' average degradation well below 1\% up to 10\% noise, and can even outperform the noise-free baseline for some algorithms under 1\% noise. Code is available at \url{https://github.com/Tristan0318/MPU}.
\end{abstract}

\section{Introduction}

As large language models (LLMs) continue to advance, their tendency to memorize and reproduce training data has raised serious concerns about privacy, safety, and intellectual property. These concerns motivate an urgent need for \textbf{machine unlearning}~\citep{cao2015towards}, whose goal is to selectively remove undesired data, knowledge, or behaviors from a trained model while preserving its general utility for normal tasks. Unlearning is particularly challenging for modern LLMs, as full retraining is prohibitively expensive, and deletion requests may arrive continuously over the model’s lifecycle. As a result, a growing body of research has investigated approaches ranging from training-time strategies such as sharding or slicing~\citep{bourtoule2021machine}, post-hoc model editing and selective forgetting~\citep{golatkar2020eternal}, influence-based approximations~\citep{koh2017understanding}, and foundational formulations of deletion guarantees~\citep{ginart2019making}.

However, many real-world deployments impose an additional constraint that is often overlooked: the data to be forgotten may belong confidentially to a client and must remain local, while the deployed model is proprietary to the server. 
This creates a central tension in \textbf{server-client unlearning}: the server requires an update that removes the effect of a client-local forget set, but \textit{\textbf{(i)}} the client should not disclose raw data (or fine-grained sufficient statistics) to the server, and \textit{\textbf{(ii)}} the server may prefer not to reveal its exact current model parameters to the client. 
Consequently, this setting calls for a restricted server-client unlearning framework that enables effective forgetting without direct data sharing, and without exposing the server’s exact parameters.

Existing unlearning approaches do not directly address this challenge.
Training-time methods based on SISA-style sharding or slicing can reduce the deletion cost by retraining only affected subsets, but require maintaining specific training structures and retaining per-shard state to support subsequent retraining upon deletion requests~\citep{bourtoule2021machine}. 
Post-hoc techniques such as selective forgetting can lower the cost of removing particular classes or examples, but typically assume that the entity performing unlearning has direct access to the model and relevant data distributions~\citep{golatkar2020eternal}.
Influence-function-based approximations offer a principled lens on example influence. However, they can be computationally demanding for LLMs and often rely on second-order information that is difficult to obtain robustly at scale~\citep{koh2017understanding}. 
In federated settings, the right to be forgotten has been explored through reconstructing an unlearned model using server-side training histories~\citep{liu2020federated} or by coordinating efficient retraining while keeping data local~\citep{liu2022right}. 
Recent work further highlights that federated unlearning methods can exhibit substantial trade-offs between effectiveness and efficiency across scenarios~\citep{zhang2025oblivionis}.
Overall, many existing solutions rely on substantial server-side state or centralized access to training records, exposing the server's exact current model to clients.

In this paper, we propose \textcolor{mpucolor1b}\method, a privacy-preserving \textcolor{mpucolor1b}{\textbf{\underline{M}}}ultiple \textcolor{mpucolor1b}{\textbf{\underline{P}}}erturbed Copies \textcolor{mpucolor1b}{\textbf{\underline{U}}}nlearning framework tailored to server--client deployments.
Our key idea is to let the server publish perturbed model instances to clients, with the perturbation designed to be self-canceling during server-side aggregation.
Specifically, at each communication round, instead of broadcasting the exact model parameters, the server releases $m\ge2$ copies that are \textit{\textbf{(i)}} perturbed by structured noise and \textit{\textbf{(ii)}} transformed by an invertible, data-independent, function-preserving reparameterization sampled from parameter symmetries. Starting from each published copy, the client runs a local unlearning routine on its private forget set and returns copy-wise updates.
The server then inverts the reparameterizations and aggregates the returned updates using harmonic weights, which cancel the first-order error term introduced by noise. 
As a result, \method yields a server-side update that matches the noise-free unlearning step, while keeping forget set local and obscuring the server's exact model parameters by communicating only perturbed, symmetry-transformed copies. The key contributions are:
\begin{itemize}
    \item \textbf{Dual Non-Disclosure Unlearning Framework.}
    We propose a server-client parameter-unlearning framework where the client keeps the forget set local (sharing neither raw data nor fine-grained sufficient statistics/distribution), while the server avoids disclosing its exact current parameters by communicating perturbed model copies. To our knowledge, this is the first solution to the dual non-disclosure setting without relying on auxiliary statistics, such as surrogate data.

    \item \textbf{Invertible and Secure Function-Preserving Reparameterizations.}
    We generalize invertible, data-independent, function-preserving reparameterizations to modern Transformer architectures, including RoPE-style positional mechanisms, enabling symmetry-based reparameterizations for LLMs (\eg Meta’s Llama family of models). We furthermore prove the security level in theory that breaking the reparameterization key is NP-hard.

    \item \textbf{Theoretical Guarantees for First-Order Noise Cancellation.}
    We provide theoretical guarantees that, under our structured noise injection and harmonic aggregation, the first-order error induced by noise is eliminated after aggregation, resulting in a server update that is consistent with the noise-free unlearning step.

\end{itemize}

\section{Related Work}
\label{sec:related}

\paragraph{LLM Unlearning.}
LLMs acquire vast amounts of knowledge during pre-training, which may include sensitive or otherwise undesired information~\citep{qiu2025survey}. Machine unlearning aims to enable models to ``forget'' specific pieces of knowledge while maintaining performance on remaining data. Recent work has therefore focused on selective unlearning, \ie suppressing undesired outputs for a designated forget set. Text-based strategies include Gradient Ascent fine-tuning that maximizes cross-entropy loss on forget samples~\citep{jang2023knowledge,yao2024large}, preference-inspired objectives (\eg NPO and SimNPO) that constrain updates with a reference model and length-normalized rewards~\citep{zhang2024negative,fan2024simplicity}, and substitute-response training that learns safe alternative answers to forget queries~\citep{maini2024tofu, mekala2025alternate}.

Beyond text-level objectives, distribution-based approaches drive the model's output distribution toward a target distribution aligned with unlearning goals~\citep{liu2025rethinking, wang2025balancing}. Meanwhile, activation-based methods intervene on internal representations rather than only on outputs, for example, by perturbing hidden states for harmful inputs toward random or refusal-like directions~\citep{shen2025lunar}. Rather than weighting multiple losses, recent works such as NGDiff and MolLM~\citep{jin2025unlearning, pan2025multi} formulate the combination as a multi-task problem, normalizing gradients and computing common descent directions to better trade off forgetting target knowledge against retaining overall utility.
However, many existing LLM unlearning algorithms are studied in settings where the unlearning party can access the model parameters and optimize using the forget set. In this work, we focus on a more constrained server-client interaction rule, in which the forget set remains client-local, and the server does not expose its exact parameters.

\paragraph{Direct Model Merging.}
Our framework aggregates multi-copy updates computed from multiple models, which relates to direct model merging and the empirical linearity of parameter updates in the fine-tuning paradigm. 
In this paradigm, models adapted to different tasks from a shared pretrained checkpoint often exhibit cross-task linearity, whereby their weights and feature spaces can be combined through linear interpolation, enabling direct model merging without additional retraining and substantially reducing computational overhead~\citep{zhou2024emergence}. A widely used strategy is weight averaging, where parameters of similarly initialized fine-tuned models are averaged. Model Soups~\citep{wortsman2022model} further demonstrated that such averaging can improve accuracy and out-of-distribution robustness compared to individual models. Alternatively, task arithmetic~\citep{ilharco2022editing} 
operates on task vectors to compose or edit specific model capabilities. 

Recently, \citet{zhou2024metagpt} extended task arithmetic to LLMs, formulating it as an optimization problem that exploits local linearity and near-orthogonality of task updates. \citet{wang2026m} quantified the merging capability of multiple models by output discrepancy, and theoretically proved the plausibility of merging task-vectors given the same pre-trained model. The above works motivate our design of aggregating different perturbed copies by weighted merging, to obtain efficient privacy-aware unlearning.

\section{Proposed Method:~\method}
\label{sec:method}

\begin{figure}[!htp]
    \centering
    \includegraphics[width=\linewidth]{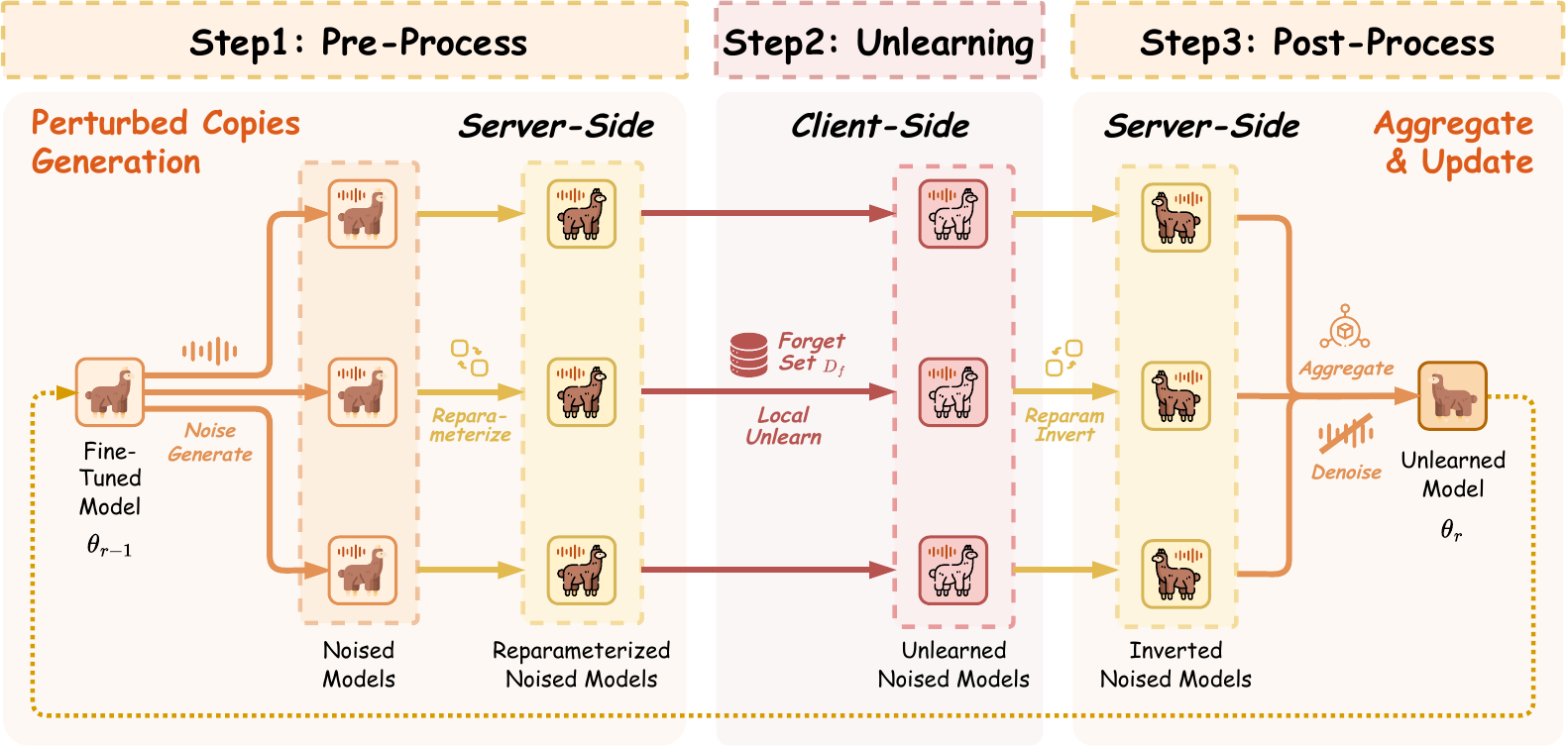}
    \caption{Overview of the proposed \method framework across communication rounds. The server generates perturbed, reparameterized model copies from $\theta_{r-1}$, clients unlearn on $\mathcal{D}_f$, and the server inverts the reparameterization and aggregates updates to obtain $\theta_{r}$.}
    \label{fig:liuchengtu}
    \vspace{-2.5mm}
\end{figure}

\subsection{Algorithm Overview}
\label{sec:pum-setting}

We consider a server-client unlearning framework operating over $R$ communication rounds. Each round consists of three sequential stages. First, the server generates and distributes $m$ perturbed copies of the current global model to the client. Second, the client performs local unlearning on these perturbed models using its private dataset. Third, the server collects the resulting local unlearning updates and aggregates them via a harmonic denoising mechanism, producing the updated global model for the next round.
\begin{remark}
    In Algorithm~\ref{alg:pum}, only the results of the following two operations are public
    \begin{equation}
        \theta^{(k,r)}_{\mathrm{pub}} \gets T_{k,r}\!\big(\theta_{r-1}+\epsilon_k^{(r)}\big)
    \end{equation}
    \begin{equation}
        \Delta^{(k,r)} \gets \textsc{Unlearn}\!\big(\theta^{(k,r)}_{\mathrm{pub}},\mathcal{D}_f\big)
    \end{equation}
    
    and all other operations are kept private to the server.
\end{remark}

\begin{wrapfigure}{R}{0.6\textwidth}
\begin{minipage}{0.6\textwidth}
\vspace{-0.85cm}
\begin{algorithm}[H]
\caption{\textcolor{mpucolor1b}\method}
\label{alg:pum}
\begin{algorithmic}[1]
\REQUIRE $\theta_0$, $R$, $L$,  $m\ge2$, $\{\sigma_\ell\}_{\ell=1}^{L}$, $\{\alpha_k>0\}_{k=1}^m$, $\eta_{\mathrm{srv}}$, $\{s_r\}_{r=1}^R$, $\{t_r\}_{r=1}^R$
\FOR{$r \in [R]$}
     \FOR {all $k\in[m]$ and  $\ell \in [L]$}
        \STATE Generate $\{\epsilon^{0,(r)}_{k,\ell}\}$
         using Eq.~\eqref{eq:pum-noise-construct}
     \ENDFOR
  \FOR {$k \in [m]$}
    \STATE $\epsilon_{k,\ell}^{(r)} \gets \alpha_k\,\epsilon_{k,\ell}^{0,(r)}$
    \STATE $\epsilon_k^{(r)}:= \text{stack}_\ell\big(\epsilon^{(r)}_{k,\ell}\big)$
  \ENDFOR
  \STATE $S_0\gets 0$;\; $S_1\gets \mathbf{0}\in\mathbb{R}^{d}$
  \FOR{$k \in [m]$}
    \STATE $T_{k,r}\gets \textsc{SampleReparam}(t_r,k)$
    \STATE Client Receives: $\theta^{(k,r)}_{\mathrm{pub}} \gets T_{k,r}\!\big(\theta_{r-1}+\epsilon_k^{(r)}\big)$
    \STATE Client Updates: $\Delta^{(k,r)} \gets \textsc{Unlearn}\!\big(\theta^{(k,r)}_{\mathrm{pub}},\mathcal{D}_f\big)$
    \STATE $\widehat{\Delta}^{(k,r)} \gets T_{k,r}^{-1}\!\big(\Delta^{(k,r)}\big)$
    \STATE
    $S_0\gets S_0+\alpha_k^{-1}$;\quad $S_1\gets S_1+\alpha_k^{-1}\,\widehat{\Delta}^{(k,r)}$
  \ENDFOR
  \STATE $\bar\Delta^{(r)} \gets S_1 / S_0$
  \STATE $\theta_r \gets \theta_{r-1} + \eta_{\mathrm{srv}}\,\bar\Delta^{(r)}$
\ENDFOR
\STATE Return $\theta_R$
\end{algorithmic}
\end{algorithm}
\vspace{-1.5cm}
\end{minipage}
\end{wrapfigure}

\subsection{Pre-Process: Perturbed Copies Generation}
\label{sec:pum-perturb}

In each round, the server perturbs the current model before transmitting it to the client. The generation of perturbed copies consists of two components: \textit{\textbf{(i)}} structured noise injection and \textit{\textbf{(ii)}} an invertible, function-preserving reparameterization. Noise injection mitigates privacy leakage risks during inference. Meanwhile, the reparameterization shifts the original parameter space, preventing the client from reconstructing the original parameters even when multiple perturbed copies are accessed.

\subsubsection{Privacy Guarantee: Noise Generation}
\label{sec:pum-noise}

Noise injection is a common and effective defense against inference attacks. In \method, noise is generated independently per block $\ell$ with scale $\sigma_\ell$. We set $\sigma_\ell$ based on a reference \textbf{task vector} $v_\ell$, defined as the difference between the current model parameters and a public reference model parameter. Typically, the public reference model refers to a released pretrained model before fine-tuning. This design is motivated by the fact that the task vector captures the parameter update induced by fine-tuning, which encodes sensitive information about the underlying private data, which requires protection. We define $\sigma_{\ell}$ as:
\begin{equation}
\sigma_\ell=\kappa\cdot \mathrm{RMS}(v_\ell),
\quad
\mathrm{RMS}(v_\ell):=
\sqrt{\frac{1}{d_\ell}\sum_{t=1}^{d_\ell} v_\ell(t)^2},
\end{equation}
where $\kappa>0$ is a controllable noise-level hyperparameter.

For each round $r \in [R]$ and block $\ell$, the server draws i.i.d.\ Gaussian vectors
$z_{k,\ell}^{(r)} \sim \mathcal{N}\!\left(0, \sigma_\ell^2 I_{d_\ell}\right)$
for $k \in [m]$ from seeds $s_r$ and $\ell$, subtracts their mean, and rescales the resulting vectors so that each perturbed copy maintains the prescribed marginal variance. The resulting base noise vectors satisfy a block-wise zero-sum constraint:
\begin{equation}
\epsilon^{0,(r)}_{k,\ell}
=
\sqrt{\frac{m}{m-1}}\,
\Big(z_{k,\ell}^{(r)}-\bar z_\ell^{(r)}\Big)
\Rightarrow
\sum_{k=1}^m \epsilon^{0,(r)}_{k,\ell}\equiv 0,
\label{eq:pum-noise-construct}
\end{equation}
where $\bar z_\ell^{(r)}=\frac{1}{m}\sum_{k=1}^m z_{k,\ell}^{(r)}$. Intuitively, the zero-sum structure forces the $m$ copy noises to lie in an $(m-1)$-dimensional subspace, which is the key algebraic property enabling noise cancellation during the server's aggregation.

We then apply a per-copy positive scaling $\alpha_k>0$:
\begin{equation}
\epsilon^{(r)}_{k,\ell} = \alpha_k\,\epsilon^{0,(r)}_{k,\ell},
\quad
\epsilon_k^{(r)} := \text{stack}_\ell\big(\epsilon^{(r)}_{k,\ell}\big)\in\mathbb{R}^d.
\end{equation}
where the stack operator combines the layer-level noises $\epsilon^{(r)}_{k,\ell}$ together to a model-level noise $\epsilon_k^{(r)}$.
The $\alpha_k$'s introduce heterogeneous noise magnitudes across copies, work as a secondary protection, avoiding parameter reconstruction even if reparameterization information is leaked.
Finally, we add the noise $\epsilon_k^{(r)}$ to the original model parameters to obtain $m$ first-stage perturbed models prior to reparameterization.

\paragraph{Formal Privacy Guarantee by Differential Privacy}
In standard noise-injection formal privacy mechanisms such as Differential Privacy (DP)~\citep{dwork2006calibrating}, one typically specifies a privacy budget first (\eg $(\varepsilon,\delta)$), and then computes the noise level according to a chosen DP mechanism and its accounting method.
In practice, the resulting noise scale can vary across different DP mechanisms, and is influenced by hyperparameters used in DP mechanisms (i.e., sensitivity, clipping bound in Gaussian Mechanism~\citep{dwork2006ourgaussian}, search range in Renyi-DP Mechanism~\citep{Mironov2017}). For consistency and avoiding additional hyperparameters' influence, we therefore use a fixed noise level directly in \method. See Appendix~\ref{app:mpu_dp_combination} for a detailed discussion on possible DP combinations to achieve a formal privacy guarantee.

\subsubsection{Security Guarantee: Reparameterization}
\label{sec:pum-reparam}

To prevent client-side reconstruction of the original model from multiple copies when scalars $\{\alpha_k\}_{k=1}^{m}$ are leaked, since a small number of scalar keys alone provides insufficient security, the server applies a reparameterization to each block prior to releasing the perturbed models. We extend previous work on neural network functional invariance~\citep{kuurkova1994functionally} to derive functionally invariant reparameterization on modern Transformer model architectures.

Let $\Theta$ denote the parameter tuple of an attention or feed-forward network (FFN) block, and let
$f_{\Theta} : \mathbb{R}^{d_{\mathrm{model}}} \rightarrow \mathbb{R}^{d_{\mathrm{model}}}$ denote the forward function induced by $\Theta$.
We consider a family of \emph{reparameterizations} acting on the parameter space, defined as invertible mappings
$T : \Theta \mapsto T(\Theta)$ equipped with explicit inverses $T^{-1}$.

For each copy $k$ and round $r$, the server samples a distinct data-independent reparameterization
\begin{equation}
T_{k,r} \gets \textsc{SampleReparam}(t_r, k),
\end{equation}
where $T_{k,r}$ is dependent on solely seed $t_r$ and $k$, which is \emph{function-preserving}, i.e.,
\begin{equation}
\label{eq:func-preserve}
f_{T_{k,r}(\Theta)}(x) = f_{\Theta}(x), \quad \forall x \in \mathbb{R}^{d_{\mathrm{model}}}.
\end{equation}
Equivalently, letting $\mathcal{G}$ denote the set of all such invertible function-preserving maps, $\mathcal{G}$ forms a \emph{parameter-symmetry group} under composition. Each $T_{k,r} \in \mathcal{G}$ acts on $\Theta$ while leaving the realized function invariant.(See Eq.~\ref{eq:mpu-attn-forward},~\ref{eq:ffn_reparam} in Appendix for the detailed operation rule of $T$)

\paragraph{Feed-Forward Network Reparameterization} For feed-forward network reparameterization, we adopt the conventional MLP channel permutation reparameterization. We give the formal reparameterization steps in Appendix~\ref {app:mpu-ffn}

\paragraph{Attention Reparameterization}
More generally, Attention blocks admit a rich parameter-symmetry group that includes continuous attention symmetries that leave the attention computation invariant.
We derive detailed reparameterization constructions and invariance proofs for the full Transformer architecture in Appendix~\ref{app:pum-reparam}.

Specifically, for RoPE-based~\citep{Su2021RoPE} models such as Llama-3~\citep{grattafiori2024llama} series models, we restrict attention transformations to those that commute with the RoPE operators, ensuring that the reparameterization preserves functional equivalence for both RoPE and non-RoPE network structures. Concrete RoPE-aware reparameterization is presented in Appendix.~\ref{app:mpu-rope}.

\paragraph{Optimization Trajectory Invariance}
The reparameterization is not only function-preserving, but also induces an invariant optimization (learning) trajectory.
Specifically, suppose $\mathcal{A}$ is a deterministic learning algorithm (\eg stochastic gradient descent with fixed mini-batch ordering). Then the learning dynamics satisfy Eq.~\ref{eq: traj_inv}:
\begin{equation}
\label{eq: traj_inv}
\mathcal{A}(\theta) = T^{-1}\!\left(\mathcal{A}(T(\theta))\right),
\end{equation}
indicating that optimization commutes with the reparameterization, so the local unlearning process will not be affected by reparameterization.
Meanwhile, we also provide the detailed analysis of trajectory invariance in Appendix~\ref{app:mpu-reparam-trajectory}.

\paragraph{Theoretical Security Guarantee of Reparameterization}
We formally show that the alignment task for breaking the reparameterization key is mathematically equivalent to the Multi-Dimensional Assignment Problem (MDAP) for discrete Feed-Forward Network permutations, and the Generalized Orthogonal Procrustes Problem (GOPP) for continuous Attention symmetries. Both alignment problems are computationally prohibitive: they require $\mathcal{O}(d^3)$ operations for $m=2$ and are strictly \textbf{NP-hard} for $m \ge 3$. Consequently, this spatial misalignment mathematically prevents the client from discovering cross-copy correlations, effectively blocking reconstruction of the server's parameters. We provide formal theoretical analysis in Appendix~\ref{app:security_hardness}.

\subsection{Client-Side Local Unlearning}
Upon receiving the reparameterized model parameters, the client performs local unlearning using its private data, employing standard unlearning algorithms such as GradAscent, NPO, and DPO. 
Notably, \method is algorithm-agnostic and can be integrated with any parameter-based unlearning methods.

\subsection{Performance Guarantee: Update Aggregation}
\label{sec:pum-aggregation}

After the client returns the local unlearning update $\Delta^{(k,r)}$, the server maps it back to the original parameter coordinates using the explicit inverse reparameterization:
\begin{equation}
\widehat{\Delta}^{(k,r)} \;\gets\; T_{k,r}^{-1}\!\big(\Delta^{(k,r)}\big),
\end{equation}
so that all copy-wise updates are expressed in a common parameterization prior to aggregation. 
Note that $T_{k,r}^{-1}$ is efficiently computable by transpose due to the property of orthogonal reparameterization matrices.

After inversion, the server aggregates the $m$ returned updates using harmonic aggregation to cancel the first-order noise:
\begin{equation}
\bar{\Delta}^{(r)}
=
\frac{\sum_{k=1}^m \alpha_k^{-1}\,\widehat{\Delta}^{(k,r)}}
     {\sum_{k=1}^m \alpha_k^{-1}}.
\end{equation}
Here, we briefly explain why the aggregation cancels the first-order noise. Let $\Delta^\star(\theta)$ denote the ideal (noise-free) unlearning displacement, and let
$J$ be the Jacobian of $\Delta^\star$ evaluated at $\theta_{r-1}$. 
Under a local linearization assumption, the inverted update first-order approximately satisfies
\begin{equation}
\widehat{\Delta}^{(k,r)} 
\;\approx\; 
\Delta^\star(\theta_{r-1}) + J\,\epsilon_k^{(r)}
=
\Delta^\star(\theta_{r-1}) + J\,\alpha_k\,\epsilon_k^{0,(r)},
\end{equation}
where $\epsilon_k^{0,(r)}$ is the stacked noise before scaling.
Substituting this expression into the harmonic average, the injected term becomes
\begin{equation}
\frac{\sum_{k=1}^m \alpha_k^{-1}\,J\,\alpha_k\,\epsilon_k^{0,(r)}}{\sum_{k=1}^m\alpha_k^{-1}}
=
\frac{J \sum_{k=1}^m \epsilon_k^{0,(r)}}{\sum_{k=1}^m\alpha_k^{-1}}
=
0,
\end{equation}
where the last equality follows from the block-wise zero-sum property in
Eq.~\ref{eq:pum-noise-construct}. 
Therefore, the aggregation eliminates the correlated first-order noise error, while requiring only the scalar coefficients $\{\alpha_k\}$ rather than storing the full noise parameters.
A more detailed error analysis for higher orders is provided in Appendix~\ref{app:mpu-aggregation}.

At the end of each round, the server applies the aggregated update with step size $\eta_{\mathrm{srv}}$:
\begin{equation}
\theta_r \gets \theta_{r-1} + \eta_{\mathrm{srv}}\,\bar{\Delta}^{(r)}.
\end{equation}
\subsection{Memory-Efficiency}
\label{sec:pum-streaming}

Although Algorithm~\ref{alg:pum} conceptually adopts $m$ published copies per round, the server and client do not need to store all $m$ perturbed models (nor the $m$ returned updates) in memory. This observation follows from the fact that the harmonic
aggregation coefficient depends solely on $\alpha_k$'s.
Consequently, the sufficient statistics for each round reduce to the two accumulators:
\begin{equation}
S_0 \;=\; \sum_{k=1}^m \alpha_k^{-1},
\quad
S_1 \;=\; \sum_{k=1}^m \alpha_k^{-1}\,\widehat{\Delta}^{(k,r)} \in \mathbb{R}^d,
\end{equation}
after which the update is given by $\bar{\Delta}^{(r)} = S_1 / S_0$.

Concretely, the server can implement each round in a streaming manner.
For $k = 1, \dots, m$, the server publishes only a single perturbed copy
$\theta_{\mathrm{pub}}^{(k,r)} = T_{k,r}(\theta_{r-1} + \epsilon_k^{(r)})$, receives the
corresponding client update $\Delta^{(k,r)}$, inverts it as
$\widehat{\Delta}^{(k,r)} = T_{k,r}^{-1}(\Delta^{(k,r)})$, and updates the accumulators:
\begin{equation}
S_0 \leftarrow S_0 + \alpha_k^{-1},
\quad
S_1 \leftarrow S_1 + \alpha_k^{-1}\,\widehat{\Delta}^{(k,r)}.
\end{equation}
At no point does the server need to store all $m$ models or all $m$ updates. The peak server-side memory footprint (beyond the base parameters $\theta_{r-1}$) is therefore dominated by a single $d$-dimensional accumulator $S_1$ and the currently processed copy, yielding an $O(d)$ memory requirement rather than $O(md)$. The same streaming procedure applies on the client side, where the client processes one published copy at a time, avoiding the need to store $m$ models simultaneously.

\subsection{Computational Overhead}
We provide the derivation of computational overhead in Appendix~\ref{app:overhead}. The arithmetic intensity is 
\begin{equation}
    A = \frac{6N + 4d_hN_{attn}}{9Nb}
\end{equation}
which is only 4\% for a modern A100 GPU's arithmetic intensity ridge point.
The server latency is memory-bound as
\begin{equation}
    T_{server} = \frac{9mNb}{C_{mem}}
\end{equation}
which takes around 0.13 seconds for an A100 GPU. Thus, the computational overhead is negligible.
\section{Experiments}
\label{sec:experiments}

\subsection{Experimental Setup}
To evaluate the effectiveness of the proposed \method framework when coupled with diverse unlearning algorithms, we design and conduct a comprehensive set of experiments.

\subsubsection{Models and Benchmark} 
We conduct experiments using five representative base models: \textbf{Llama-3.2-1B-Instruct}, \textbf{Llama-3.2-3B-Instruct}, \textbf{Llama-3.1-8B-Instruct}~\citep{grattafiori2024llama}, \textbf{Qwen2.5-1.5B-Instruct} and \textbf{Qwen2.5-3B-Instruct}~\citep{qwen2024qwen2}. Models are evaluated on the widely adopted \textbf{TOFU}~\citep{maini2024tofu}, and \textbf{MUSE}~\citep{shi2024muse} benchmarks, following established experimental setups in prior works~\citep{wang2024llm, openunlearning2025}, using full-finetuning by default and LoRA~\citep{hu2022lora} for \textbf{Llama-3.1-8B model}. The evaluation metrics are detailed in Appendix~\ref{app:Metrics}.

\subsubsection{Unlearning Algorithms}
To contextualize our results, we benchmark \method{} against representative unlearning objectives spanning distinct algorithmic paradigms, organized by the primary mechanism used to suppress information associated with the forget set (Appendix~\ref{Unlearning}): \textit{\textbf{(i)}} Loss-Reversal, First-Order Unlearning: \textbf{GradAscent}~\citep{jang2023knowledge} and \textbf{GradDiff}~\citep{liu2022continual}.
\textit{\textbf{(ii)}}  Preference-Style, Bounded Objectives: \textbf{DPO}~\citep{rafailov2023direct}, \textbf{NPO}~\citep{zhang2024negative}, and \textbf{SimNPO}~\citep{fan2024simplicity}.
\textit{\textbf{(iii)}} Distribution Shaping via Self-Distillation: \textbf{UnDIAL}~\citep{dong2025undial}. \textit{\textbf{(iv)}} Loss Reweighting for Targeted Forgetting: \textbf{SatImp}~\citep{yang2025exploring}.

\subsubsection{Baselines}

Since \method{} is designed to perform denoising after noisy perturbation, we compare it against two baselines. \textit{\textbf{(i)}} \textsc{\textbf{Clean}}: a noise-free, centralized single-copy unlearning setting, corresponding to the standard unlearning framework with full access to the data and model and without any noise injection. The \textsc{\textbf{Clean}} baseline serves as an approximate upper bound on unlearning performance. \textit{\textbf{(ii)}} \textsc{\textbf{Noised}} (Appendix~\ref{sec:appendix-mpu-vs-noiseonly}): a single-copy baseline in which noise is directly injected before model publication, and the server adopts the client’s returned update without any denoising. \textsc{\textbf{Noised}} provides a lower-bound reference for isolating the effect of denoising.

\subsection{Experimental Results}

\begin{table}[t]
    \caption{
    Performance comparison of different unlearning algorithms using the \textbf{Llama-3.2-1B} model on the TOFU benchmark (Split99). Results are reported under three settings: \textsc{\textbf{Clean}}, a noise-free baseline; \textsc{\textbf{Noised}}, a single-copy noise baseline with the same noise magnitude but without denoising; and \textsc{\textbf{\method}}, using $m{=}2$ copies with noise level $\kappa{=}0.01$. Higher values indicate better performance for Forget Quality, Forget Truth Ratio, and Model Utility, while values of $\text{PrivLeak}$ closer to $0$ are preferred.
    }
    \centering
    \begin{adjustbox}{width=\linewidth}
    \begin{tabular}{ccccccccccccc}
        \toprule
        \multirow{2}[2]{*}{\textbf{\shortstack{Unlearning\\Algorithms}}}
        & \multicolumn{3}{c}{\textbf{Forget Quality} $\uparrow$}
        & \multicolumn{3}{c}{\textbf{Forget Truth Ratio} $\uparrow$}
        & \multicolumn{3}{c}{\textbf{Model Utility} $\uparrow$}
        & \multicolumn{3}{c}{\textbf{$\text{PrivLeak}$}} \\
        \cmidrule(lr){2-4}\cmidrule(lr){5-7}\cmidrule(lr){8-10}\cmidrule(lr){11-13}
        & \textsc{Clean} & \textsc{Noised} & \textsc{MPU}
        & \textsc{Clean} & \textsc{Noised} & \textsc{MPU}
        & \textsc{Clean} & \textsc{Noised} & \textsc{MPU}
        & \textsc{Clean} & \textsc{Noised} & \textsc{MPU} \\
        \midrule
        
        \textsc{GradAscent~\textcolor{gray}{[ACL 2023]}} &6.58e-5&2.81e-8& \colorbox{mpucolor1!15}{\textbf{5.41e-2}} &0.355&0.246& \colorbox{mpucolor1!15}{\textbf{0.468}} &0.000 & 0.000 & \colorbox{mpucolor1!15}{\textbf{2.31e-4}} &65.8& \colorbox{mpucolor1!15}{\textbf{58.9}} & 69.6 \\
        
        \textsc{GradDiff~\textcolor{gray}{[PMLR 2022]}} & \colorbox{mpucolor1!15}{0.405} &0.266& \colorbox{mpucolor1!15}{\textbf{0.405}} &0.535&0.533& \colorbox{mpucolor1!15}{\textbf{0.547}} & 0.461 & 0.461 & \colorbox{mpucolor1!15}{\textbf{0.464}} &77.1& \colorbox{mpucolor1!15}{\textbf{73.3}} & 77.2 \\

        \textsc{DPO~\textcolor{gray}{[NeurIPS 2023]}} &0.165&0.165& \colorbox{mpucolor1!15}{\textbf{0.266}} & 0.637 &0.620& \colorbox{mpucolor1!15}{\textbf{0.641}} & 0.591 & \colorbox{mpucolor1!15}{\textbf{0.595}} & 0.591 &-25.5& \colorbox{mpucolor1!15}{\textbf{-19.8}} & -28.9 \\
        
        \textsc{NPO~\textcolor{gray}{[COLM 2024]}} & \colorbox{mpucolor1!15}{0.919} &0.766& \colorbox{mpucolor1!15}{\textbf{0.919}} &0.624& \colorbox{mpucolor1!15}{\textbf{0.640}} &0.628& 0.599 & \colorbox{mpucolor1!15}{\textbf{0.600}} & 0.597 &30.6&32.9& \colorbox{mpucolor1!15}{\textbf{28.2}} \\
        
        \textsc{SimNPO~\textcolor{gray}{[NeurIPS 2025]}} &5.41e-2 & 5.41e-2& \colorbox{mpucolor1!15}{\textbf{9.71e-2}} & \colorbox{mpucolor1!15}{\textbf{0.526}} &0.522& 0.525 & \colorbox{mpucolor1!15}{0.598} &0.592& \colorbox{mpucolor1!15}{\textbf{0.598}} & \colorbox{mpucolor1!15}{\textbf{-68.4}} &-70.2& -71.8 \\
        
        \textsc{UnDIAL~\textcolor{gray}{[NAACL 2025]}} & \colorbox{mpucolor1!15}{1.43e-2} & \colorbox{mpucolor1!15}{1.43e-2} & \colorbox{mpucolor1!15}{\textbf{1.43e-2}} & \colorbox{mpucolor1!15}{\textbf{0.530}} & 0.527 & 0.529 & 0.613 & 0.614 & \colorbox{mpucolor1!15}{\textbf{0.615}} & \colorbox{mpucolor1!15}{\textbf{-76.4}} &-77.4& -78.0 \\
        
        \textsc{SatImp~\textcolor{gray}{[ICML 2025]}} &3.02e-3& \colorbox{mpucolor1!15}{6.76e-3} & \colorbox{mpucolor1!15}{\textbf{6.76e-3}} &0.474&0.470& \colorbox{mpucolor1!15}{\textbf{0.476}} & 0.600 & 0.597 & \colorbox{mpucolor1!15}{\textbf{0.601}} & \colorbox{mpucolor1!15}{-98.9} & -99.1 & \colorbox{mpucolor1!15}{\textbf{-98.9}} \\
        \bottomrule
    \end{tabular}
    \end{adjustbox}
    \vspace{-2mm}
     \label{tab:pum_main_comparison}
\end{table}

For the complete experiment results and analysis, please refer to Appendix~\ref{Supplementary Experiments} and \ref{app:pum_supp_ablations}. We provide a brief analysis of the comparison of the \textbf{Llama-3.2-1B} model on the TOFU benchmark here. 

\paragraph{Privacy}
Privacy measures whether sensitive data in the forget set can still be retrieved from the model. As shown in Table~\ref{tab:pum_main_comparison}, we find that \method{} under a low noise level consistently outperforms single-copy noised/noise-free unlearning in Forget Quality (FQ). Specifically, for unlearning algorithms with high FQ (\textbf{GradDiff}, \textbf{NPO}), \method{} matches the FQ of noise-free unlearning while substantially outperforming single-copy noisy unlearning, with $0.405$ vs.\ $0.266$ for GradDiff, and $0.919$ vs.\ $0.766$ for NPO. For other low-FQ unlearning algorithms (\textbf{GradAscent}, \textbf{SimNPO}, and \textbf{DPO}), we observe that the single-copy noisy and noise-free frameworks yield similar FQ scores, whereas \method{} improves over both baselines significantly: $0.054$ vs.\ near zero for GradAscent, $0.097$ vs.\ $0.054$ for SimNPO, and $0.266$ vs.\ $0.165$ for DPO. The increase in FQ relative to noise-free unlearning can be attributed to the multi-copy stability effect. For more details, please refer to Appendix~\ref{sec:mpu-stability}.

For Forget Truth Ratio (FTR) and PrivLeak (PL), we do not observe significant differences across the compared settings. This suggests that, under these two privacy-related metrics, all unlearning algorithms except \textbf{GradAscent} exhibit stable behavior. Comparing across unlearning algorithms, \textbf{NPO} performs best overall, achieving the highest average FQ together with the smallest absolute value of PL. Conversely, \textbf{GradAscent} exhibits almost zero FQ and MU under both the noise-free and noised unlearning settings, indicating a complete breakdown of the model.

\paragraph{Utility}
Utility measures model performance on general tasks, reflecting whether general capability is preserved after unlearning. Table~\ref{tab:pum_main_comparison} shows that single-copy noise, noise-free unlearning, and \method{} achieve very similar MU under each unlearning algorithm, with variations below $0.01$. This indicates that utility preservation is not particularly sensitive to noise injection in our setup.

\paragraph{Memorization}
Memorization measures the extent to which the model retains information from the training data. From Figure~\ref{fig:QA}, the Forget QA Probability of \method{} is higher than the single-copy noise-free/no-denoise frameworks, except for the \textbf{SimNPO} algorithm (while only $0.002$ lower compared with the noise-free one). The Forget QA ROUGE results show that \method{} and the noise-free framework attain similar scores, while both outperform the no-denoise framework. Moreover, \method{} achieves the best ROUGE for the unlearning algorithms like \textbf{GradAscent}, \textbf{DPO}, and \textbf{SatImp}, whereas the noise-free framework yields the best ROUGE for \textbf{NPO}, \textbf{SimNPO}, and \textbf{UnDIAL}. Overall, these memorization results suggest that direct noise injection can introduce undesirable memorization artifacts, while \method{} mitigates this effect via harmonic denoising aggregation. Beyond the low-noise comparison in Table~\ref{tab:pum_main_comparison}, we further evaluate \method{} with extensive experiments on the number of published noisy copies $m$, which measures the computational overhead, and noise level $\kappa$.
Overall, \method{} is robust under moderate choices of $m$, and $\kappa$ (Appendix~\ref{app:pum_supp_ablations} and Table~\ref{tab:pum_kappa},~\ref{tab:pum_mval}), indicating that \method is tolerable for large noise levels, and few copies is the best choice for resource consideration.

\begin{figure}[t]
    \centering
    \includegraphics[width=1.0\linewidth]{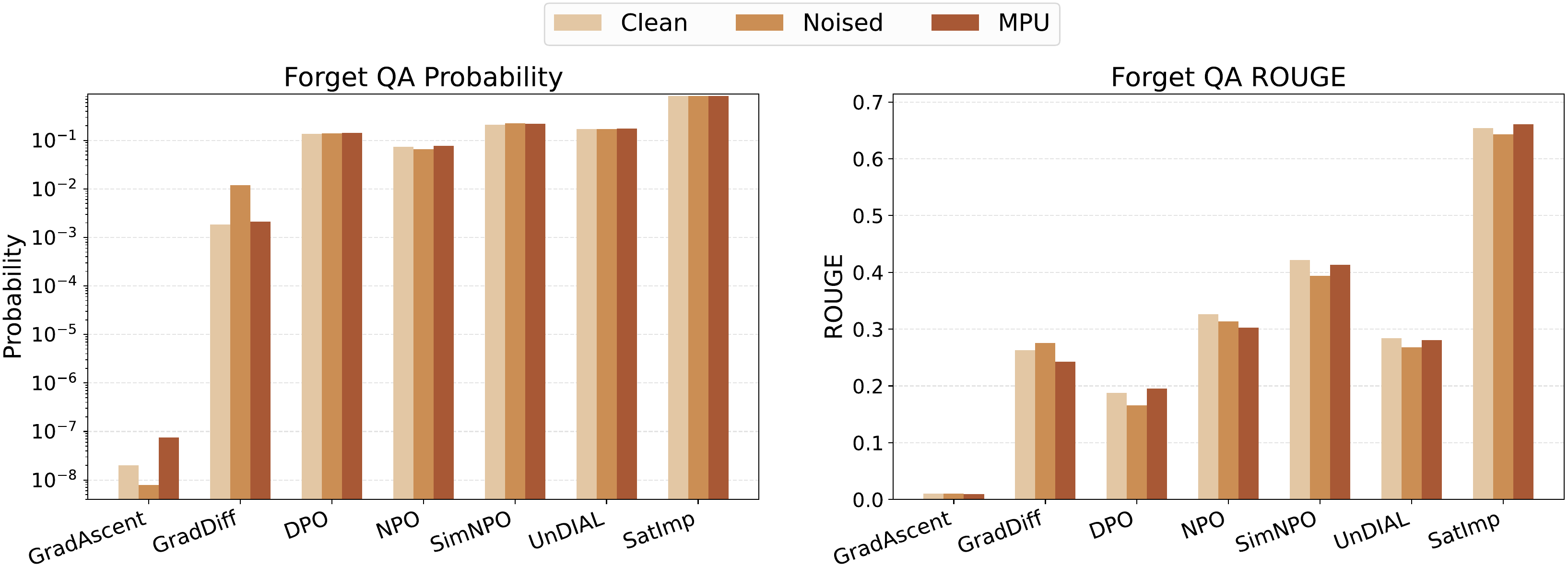}
    \caption{Performance comparison of different unlearning algorithms using the \textbf{Llama-3.2-1B} model. Results are reported under three settings: \textit{\textbf{Clean}}, a noise-free baseline; \textit{\textbf{Noised}}, a single-copy noise baseline with the same noise magnitude but without denoising; and \method, using $m{=}2$ copies with noise level $\kappa{=}0.01$. Higher values indicate better performance for Forget QA Probability and ROUGE.}
    \label{fig:QA}
\end{figure}

\section{Conclusion}
\label{sec:conclusion}
We propose \method{}, a server-client privacy-preserving unlearning framework under a dual non-disclosure constraint: the server does not reveal its exact model parameters, and the client does not share its data. \method{} is the first, to our knowledge, to enable unlearning under this strict setting without additional assumptions (\eg distributional constraints or surrogate models/data), while achieving performance comparable to, and sometimes better than, no-privacy noise-free unlearning. We provide theoretical guarantees that harmonic aggregation eliminates first-order noise error, multi-copy unlearning improves stability, and the reparameterization preserves both functionality and optimization trajectories, while ensuring the breaking of the security key is NP-hard. Empirically, \method{} consistently outperforms privacy baselines and can even surpass the noise-free baseline due to multi-copy stability. Although local computation grows linearly with the copy number, two copies suffice in all our experiments. Future work may design improved communication protocols and further reduce computational cost.

\newpage

{
\small
\bibliography{reference}

@inproceedings{cao2015towards,
  title={Towards making systems forget with machine unlearning},
  author={Cao, Yinzhi and Yang, Junfeng},
  booktitle={2015 IEEE symposium on security and privacy},
  pages={463--480},
  year={2015},
  organization={IEEE}
}

@article{ginart2019making,
  title={Making ai forget you: Data deletion in machine learning},
  author={Ginart, Antonio and Guan, Melody and Valiant, Gregory and Zou, James Y},
  journal={Advances in neural information processing systems},
  volume={32},
  year={2019}
}

@inproceedings{bourtoule2021machine,
  title={Machine unlearning},
  author={Bourtoule, Lucas and Chandrasekaran, Varun and Choquette-Choo, Christopher A and Jia, Hengrui and Travers, Adelin and Zhang, Baiwu and Lie, David and Papernot, Nicolas},
  booktitle={2021 IEEE symposium on security and privacy (SP)},
  pages={141--159},
  year={2021},
  organization={IEEE}
}

@inproceedings{koh2017understanding,
  title={Understanding black-box predictions via influence functions},
  author={Koh, Pang Wei and Liang, Percy},
  booktitle={International conference on machine learning},
  pages={1885--1894},
  year={2017},
  organization={PMLR}
}

@inproceedings{golatkar2020eternal,
  title={Eternal sunshine of the spotless net: Selective forgetting in deep networks},
  author={Golatkar, Aditya and Achille, Alessandro and Soatto, Stefano},
  booktitle={Proceedings of the IEEE/CVF conference on computer vision and pattern recognition},
  pages={9304--9312},
  year={2020}
}

@article{liu2020federated,
  title={Federated unlearning},
  author={Liu, Gaoyang and Ma, Xiaoqiang and Yang, Yang and Wang, Chen and Liu, Jiangchuan},
  journal={arXiv preprint arXiv:2012.13891},
  year={2020}
}

@inproceedings{liu2022right,
  title={The right to be forgotten in federated learning: An efficient realization with rapid retraining},
  author={Liu, Yi and Xu, Lei and Yuan, Xingliang and Wang, Cong and Li, Bo},
  booktitle={IEEE INFOCOM 2022-IEEE conference on computer communications},
  pages={1749--1758},
  year={2022},
  organization={IEEE}
}

@inproceedings{dwork2006calibrating,
  title={Calibrating noise to sensitivity in private data analysis},
  author={Dwork, Cynthia and McSherry, Frank and Nissim, Kobbi and Smith, Adam},
  booktitle={Theory of cryptography conference},
  pages={265--284},
  year={2006},
  organization={Springer}
}

@inproceedings{Mironov2017,
  author    = {Ilya Mironov},
  title     = {R{\'e}nyi Differential Privacy},
  booktitle = {IEEE Computer Security Foundations Symposium (CSF)},
  year      = {2017}
}

@article{Su2021RoPE,
  title={Roformer: Enhanced transformer with rotary position embedding},
  author={Su, Jianlin and Ahmed, Murtadha and Lu, Yu and Pan, Shengfeng and Bo, Wen and Liu, Yunfeng},
  journal={Neurocomputing},
  volume={568},
  pages={127063},
  year={2024},
  publisher={Elsevier}
}

@article{fan2024simplicity,
  title={Simplicity Prevails: Rethinking Negative Preference Optimization for LLM Unlearning},
  author={Fan, Chongyu and Liu, Jiancheng and Lin, Licong and Jia, Jinghan and Zhang, Ruiqi and Mei, Song and Liu, Sijia},
  journal={arXiv preprint arXiv:2410.07163},
  year={2024}
}

@article{zhou2024emergence,
  title={On the Emergence of Cross-Task Linearity in the Pretraining-Finetuning Paradigm},
  author={Zhou, Zhanpeng and Chen, Zijun and Chen, Yilan and Zhang, Bo and Yan, Junchi},
  journal={arXiv preprint arXiv:2402.03660},
  year={2024}
}

@inproceedings{wortsman2022model,
  title={Model soups: averaging weights of multiple fine-tuned models improves accuracy without increasing inference time},
  author={Wortsman, Mitchell and Ilharco, Gabriel and Gadre, Samir Ya and Roelofs, Rebecca and Gontijo-Lopes, Raphael and Morcos, Ari S and Namkoong, Hongseok and Farhadi, Ali and Carmon, Yair and Kornblith, Simon and others},
  booktitle={International conference on machine learning},
  pages={23965--23998},
  year={2022},
  organization={PMLR}
}

@article{ilharco2022editing,
  title={Editing models with task arithmetic},
  author={Ilharco, Gabriel and Ribeiro, Marco Tulio and Wortsman, Mitchell and Gururangan, Suchin and Schmidt, Ludwig and Hajishirzi, Hannaneh and Farhadi, Ali},
  journal={arXiv preprint arXiv:2212.04089},
  year={2022}
}

@article{zhou2024metagpt,
  title={Metagpt: Merging large language models using model exclusive task arithmetic},
  author={Zhou, Yuyan and Song, Liang and Wang, Bingning and Chen, Weipeng},
  journal={arXiv preprint arXiv:2406.11385},
  year={2024}
}

@article{openunlearning2025,
  title={{OpenUnlearning}: Accelerating {LLM} Unlearning via Unified Benchmarking of Methods and Metrics},
  author={Dorna, Vineeth and Mekala, Anmol and Zhao, Wenlong and McCallum, Andrew and Lipton, Zachary C and Kolter, J Zico and Maini, Pratyush},
  journal={arXiv preprint arXiv:2506.12618},
  year={2025},
}

@article{grattafiori2024llama,
  title={The llama 3 herd of models},
  author={Grattafiori, Aaron and Dubey, Abhimanyu and Jauhri, Abhinav and Pandey, Abhinav and Kadian, Abhishek and Al-Dahle, Ahmad and Letman, Aiesha and Mathur, Akhil and Schelten, Alan and Vaughan, Alex and others},
  journal={arXiv preprint arXiv:2407.21783},
  year={2024}
}

@article{wang2024llm,
  title={{LLM} unlearning via loss adjustment with only forget data},
  author={Wang, Yaxuan and Wei, Jiaheng and Liu, Chris Yuhao and Pang, Jinlong and Liu, Quan and Shah, Ankit Parag and Bao, Yujia and Liu, Yang and Wei, Wei},
  journal={arXiv preprint arXiv:2410.11143},
  year={2024}
}

@article{maini2024tofu,
  title={Tofu: A task of fictitious unlearning for llms},
  author={Maini, Pratyush and Feng, Zhili and Schwarzschild, Avi and Lipton, Zachary C and Kolter, J Zico},
  journal={arXiv preprint arXiv:2401.06121},
  year={2024}
}

@article{zhang2024negative,
  title={Negative preference optimization: From catastrophic collapse to effective unlearning},
  author={Zhang, Ruiqi and Lin, Licong and Bai, Yu and Mei, Song},
  journal={arXiv preprint arXiv:2404.05868},
  year={2024}
}

@article{qiu2025survey,
  title={A Survey on Unlearning in Large Language Models},
  author={Qiu, Ruichen and Tan, Jiajun and Pu, Jiayue and Wang, Honglin and Gao, Xiao-Shan and Sun, Fei},
  journal={arXiv preprint arXiv:2510.25117},
  year={2025}
}

@inproceedings{jang2023knowledge,
  title={Knowledge unlearning for mitigating privacy risks in language models},
  author={Jang, Joel and Yoon, Dongkeun and Yang, Sohee and Cha, Sungmin and Lee, Moontae and Logeswaran, Lajanugen and Seo, Minjoon},
  booktitle={Proceedings of the 61st Annual Meeting of the Association for Computational Linguistics (Volume 1: Long Papers)},
  pages={14389--14408},
  year={2023}
}

@article{yao2024large,
  title={Large language model unlearning},
  author={Yao, Yuanshun and Xu, Xiaojun and Liu, Yang},
  journal={Advances in Neural Information Processing Systems},
  volume={37},
  pages={105425--105475},
  year={2024}
}

@inproceedings{mekala2025alternate,
  title={Alternate preference optimization for unlearning factual knowledge in large language models},
  author={Mekala, Anmol Reddy and Dorna, Vineeth and Dubey, Shreya and Lalwani, Abhishek and Koleczek, David and Rungta, Mukund and Hasan, Sadid A and Lobo, Elita AA},
  booktitle={Proceedings of the 31st International Conference on Computational Linguistics},
  pages={3732--3752},
  year={2025}
}

@article{liu2025rethinking,
  title={Rethinking machine unlearning for large language models},
  author={Liu, Sijia and Yao, Yuanshun and Jia, Jinghan and Casper, Stephen and Baracaldo, Nathalie and Hase, Peter and Yao, Yuguang and Liu, Chris Yuhao and Xu, Xiaojun and Li, Hang and others},
  journal={Nature Machine Intelligence},
  pages={1--14},
  year={2025},
  publisher={Nature Publishing Group UK London}
}

@inproceedings{wang2025balancing,
  title={Balancing Forget Quality and Model Utility: A Reverse KL-Divergence Knowledge Distillation Approach for Better Unlearning in LLMs},
  author={Wang, Bichen and Zi, Yuzhe and Sun, Yixin and Zhao, Yanyan and Qin, Bing},
  booktitle={Proceedings of the 2025 Conference of the Nations of the Americas Chapter of the Association for Computational Linguistics: Human Language Technologies (Volume 1: Long Papers)},
  pages={1306--1321},
  year={2025}
}

@article{shen2025lunar,
  title={Lunar: Llm unlearning via neural activation redirection},
  author={Shen, William F and Qiu, Xinchi and Kurmanji, Meghdad and Iacob, Alex and Sani, Lorenzo and Chen, Yihong and Cancedda, Nicola and Lane, Nicholas D},
  journal={arXiv preprint arXiv:2502.07218},
  year={2025}
}

@inproceedings{jin2025unlearning,
  title={Unlearning as multi-task optimization: A normalized gradient difference approach with an adaptive learning rate},
  author={Jin, Xiaomeng and Bu, Zhiqi and Vinzamuri, Bhanukiran and Ramakrishna, Anil and Chang, Kai-Wei and Cevher, Volkan and Hong, Mingyi},
  booktitle={Proceedings of the 2025 Conference of the Nations of the Americas Chapter of the Association for Computational Linguistics: Human Language Technologies (Volume 1: Long Papers)},
  pages={11278--11294},
  year={2025}
}

@inproceedings{pan2025multi,
  title={Multi-Objective Large Language Model Unlearning},
  author={Pan, Zibin and Zhang, Shuwen and Zheng, Yuesheng and Li, Chi and Cheng, Yuheng and Zhao, Junhua},
  booktitle={ICASSP 2025-2025 IEEE International Conference on Acoustics, Speech and Signal Processing (ICASSP)},
  pages={1--5},
  year={2025},
  organization={IEEE}
}

@article{zhang2025oblivionis,
  title={Oblivionis: A Lightweight Learning and Unlearning Framework for Federated Large Language Models},
  author={Zhang, Fuyao and Yan, Xinyu and Wu, Tiantong and Li, Wenjie and Chen, Tianxiang and Cao, Yang and Yan, Ran and Huang, Longtao and Lim, Wei Yang Bryan and Yang, Qiang},
  journal={arXiv preprint arXiv:2508.08875},
  year={2025}
}

@inproceedings{cho-etal-2014-properties,
    title = "On the Properties of Neural Machine Translation: Encoder{--}Decoder Approaches",
    author = {Cho, Kyunghyun  and
      van Merri{\"e}nboer, Bart  and
      Bahdanau, Dzmitry  and
      Bengio, Yoshua},
    editor = "Wu, Dekai  and
      Carpuat, Marine  and
      Carreras, Xavier  and
      Vecchi, Eva Maria",
    booktitle = "Proceedings of {SSST}-8, Eighth Workshop on Syntax, Semantics and Structure in Statistical Translation",
    month = oct,
    year = "2014",
    address = "Doha, Qatar",
    publisher = "Association for Computational Linguistics",
    url = "https://aclanthology.org/W14-4012/",
    doi = "10.3115/v1/W14-4012",
    pages = "103--111"
}

@inproceedings{lin2004rouge,
  title={Rouge: A package for automatic evaluation of summaries},
  author={Lin, Chin-Yew},
  booktitle={Text summarization branches out},
  pages={74--81},
  year={2004}
}

@article{kuurkova1994functionally,
  title={Functionally equivalent feedforward neural networks},
  author={K{\r{u}}rkov{\'a}, V{\v{e}}ra and Kainen, Paul C},
  journal={Neural Computation},
  volume={6},
  number={3},
  pages={543--558},
  year={1994},
  publisher={MIT Press One Rogers Street, Cambridge, MA 02142-1209, USA journals-info~…}
}

@article{yang2025exploring,
  title={Exploring Criteria of Loss Reweighting to Enhance LLM Unlearning},
  author={Yang, Puning and Wang, Qizhou and Huang, Zhuo and Liu, Tongliang and Zhang, Chengqi and Han, Bo},
  journal={arXiv preprint arXiv:2505.11953},
  year={2025}
}

@inproceedings{liu2022continual,
  title={Continual learning and private unlearning},
  author={Liu, Bo and Liu, Qiang and Stone, Peter},
  booktitle={Conference on Lifelong Learning Agents},
  pages={243--254},
  year={2022},
  organization={PMLR}
}

@article{rafailov2023direct,
  title={Direct preference optimization: Your language model is secretly a reward model},
  author={Rafailov, Rafael and Sharma, Archit and Mitchell, Eric and Manning, Christopher D and Ermon, Stefano and Finn, Chelsea},
  journal={Advances in neural information processing systems},
  volume={36},
  pages={53728--53741},
  year={2023}
}

@inproceedings{dong2025undial,
  title={Undial: Self-distillation with adjusted logits for robust unlearning in large language models},
  author={Dong, Yijiang River and Lin, Hongzhou and Belkin, Mikhail and Huerta, Ramon and Vuli{\'c}, Ivan},
  booktitle={Proceedings of the 2025 Conference of the Nations of the Americas Chapter of the Association for Computational Linguistics: Human Language Technologies (Volume 1: Long Papers)},
  pages={8827--8840},
  year={2025}
}

@article{qwen2024qwen2,
  title={Qwen2. 5 technical report},
  author={Yang, An and Yang, Baosong and Zhang, B and Hui, B and Zheng, B and Yu, B and Li, Chengpeng and Liu, D and Huang, F and Wei, H and others},
  journal={arXiv preprint},
  year={2024}
}

@inproceedings{wang2026m,
  title={M-Loss: Quantifying Model Merging Compatibility with Limited Unlabeled Data},
  author={Wang, Tiantong and Duan, Yiyang and Chen, Haoyu and Wu, Tiantong and Lim, Wei Yang Bryan},
  booktitle={Proceedings of the AAAI Conference on Artificial Intelligence},
  volume={40},
  number={31},
  pages={26471--26479},
  year={2026}
}

@inproceedings{dwork2006ourgaussian,
  title={Our data, ourselves: Privacy via distributed noise generation},
  author={Dwork, Cynthia and Kenthapadi, Krishnaram and McSherry, Frank and Mironov, Ilya and Naor, Moni},
  booktitle={Annual international conference on the theory and applications of cryptographic techniques},
  pages={486--503},
  year={2006},
  organization={Springer}
}

@article{shi2024muse,
  title={Muse: Machine unlearning six-way evaluation for language models},
  author={Shi, Weijia and Lee, Jaechan and Huang, Yangsibo and Malladi, Sadhika and Zhao, Jieyu and Holtzman, Ari and Liu, Daogao and Zettlemoyer, Luke and Smith, Noah A and Zhang, Chiyuan},
  journal={arXiv preprint arXiv:2407.06460},
  year={2024}
}

@article{hu2022lora,
  title={Lora: Low-rank adaptation of large language models.},
  author={Hu, Edward J and Shen, Yelong and Wallis, Phillip and Allen-Zhu, Zeyuan and Li, Yuanzhi and Wang, Shean and Wang, Liang and Chen, Weizhu and others},
  journal={Iclr},
  volume={1},
  number={2},
  pages={3},
  year={2022}
}
\bibliographystyle{plainnat}
}

\appendix
\newpage
\section*{Appendix Overview}
\label{app:overview}

This appendix provides supplementary materials for \method{} that expand the main paper
along two axes: \textit{\textbf{(i)}} mathematical derivations and analysis supporting the
core design (noise construction, denoising aggregation, and function-preserving
reparameterizations), and \textit{\textbf{(ii)}} experimental details, evaluation, additional results, and ablations.

To keep the appendix navigable, we organize it into two parts.

\paragraph{Part A: Mathematical Details and Analysis}
This part provides the mathematical foundations behind
\method{}, including the structured zero-sum noise, harmonic denoising aggregation,
Transformer reparameterization symmetries, and a formal comparison against a naive
single-copy ``noise-only'' baseline.
\begin{itemize}
    \item \textbf{Appendix~\ref{app:mpu-noise}}: Structured zero-sum noise construction and statistics, including marginal variance, cross-covariance, and the stacked covariance form.
    \item \textbf{Appendix~\ref{app:mpu_dp_combination}}: How the DP mechanism can be integrated in MPU to ensure a formal privacy guarantee.
    \item \textbf{Appendix~\ref{app:mpu-aggregation}}: Harmonic denoising aggregation, including first-order exact cancellation, second-order remainder bounds (Appendix~\ref{app:mpu-second-order}), and the uniqueness/optimality of harmonic weights (Appendix~\ref{app:mpu-optimality}).
    \item \textbf{Appendix~\ref{app:pum-reparam}}: Function-preserving reparameterizations for Transformers, including attention head basis transforms (Appendix~\ref{app:mpu-attn}), RoPE-aware commutant restrictions (Appendix~\ref{app:mpu-rope}), Feed-Forward Neural Network hidden-channel permutations (Appendix~\ref{app:mpu-ffn}), and linearity of the reparameterization map (Appendix~\ref{app:mpu-linear-T}).
    \item \textbf{Appendix~\ref{app:mpu-reparam-trajectory}}: Discussion of whether reparameterization affects client-side optimization trajectories and loss smoothness (equivariance and Euclidean smoothness invariance).
    \item \textbf{Appendix~\ref{app:security_hardness}}: Discuss the hardness level of breaking the reparameterization. Provide a theoretical security guarantee.
    \item \textbf{Appendix~\ref{app:overhead}}: Discuss the overhead of MPU. Showing that MPU only adds a negligible computation burden.
    \item \textbf{Appendix~\ref{sec:appendix-mpu-vs-noiseonly}}: Comparison of update error between MPU and single-copy noisy unlearning, including one-round bias and variance analysis (Appendix~\ref{app:mpu-compare-1round}), multi-round implications (Appendix~\ref{app:mpu-compare-multiround}), and an SNR interpretation (Appendix~\ref{app:mpu-snr}).
    \item \textbf{Appendix~\ref{sec:mpu-stability}}: Method discussion on stability enhancement via multi-copy learning.
\end{itemize}

\paragraph{Part B: Implementation Details and Supplementary Experiments}
This part provides benchmark descriptions, evaluation metrics,
unlearning algorithm formulations, implementation details, and extensive supplementary experiments and ablations.
\begin{itemize}
    \item \textbf{Appendix~\ref{Benchmarks}}: Benchmark description.
    \item \textbf{Appendix~\ref{app:Metrics}}: Evaluation metrics, including memorization (Appendix~\ref{metrics:mem}), privacy (Appendix~\ref{metrics:pri}), and utility (Appendix~\ref{metrics:utl}).
    \item \textbf{Appendix~\ref{Unlearning}}: Unlearning algorithms instantiated within \method{}.
    \item \textbf{Appendix~\ref{Experimental}}: Implementation details, including testbed configuration (Appendix~\ref{Testbed}), default hyperparameters (Appendix~\ref{Hyperparameters}), and the prompt template used for training and evaluation (Appendix~\ref{Prompt}).
    \item \textbf{Appendix~\ref{Supplementary Experiments}}: Additional experimental results.
    \item \textbf{Appendix~\ref{app:pum_supp_ablations}}: Detailed supplementary analysis and ablations, including the effect of copy number $m$ (Appendix~\ref{app:pum_copy_number}), noise level $\kappa$ (Appendix~\ref{app:pum_noise_level}), round--epoch allocation (Appendix~\ref{app:pum_round_epoch_alloc}), the no-denoise ablation (Appendix~\ref{app:pum_nodenoise}), robustness to larger forget splits (Appendix~\ref{app:pum_forget_splits}), and scaling across model sizes (Appendix~\ref{app:pum_model_scaling}).
\end{itemize}

\clearpage
\section{Mathematical Details and Analysis}
\label{app:mpu}\label{app:pum}

This appendix provides the mathematical details supporting the method described in Section~\ref{sec:method}, including: \textit{\textbf{(i)}} the structured zero-sum noise construction (Section~\ref{sec:pum-noise}), \textit{\textbf{(ii)}} the harmonic denoising aggregation and its associated error terms (Section~\ref{sec:pum-aggregation}), and \textit{\textbf{(iii)}} the function-preserving reparameterization family used to obfuscate the original parameter space (Section~\ref{sec:pum-reparam}). We additionally present a comparison with a naive single-copy ``noise-only'' baseline, which injects noise but performs no denoising.

\subsection{Structured Zero-Sum Noise Construction and Statistics}
\label{app:mpu-noise}

Fix a communication round $r$ and a parameter block (\eg a layer) $\ell$ with dimension
$d_\ell$. Recall the construction in Eq.~\eqref{eq:pum-noise-construct}: draw i.i.d.
$z_{k,\ell}^{(r)} \sim \mathcal{N}(0, \sigma_\ell^2 I_{d_\ell})$ for $k \in [m]$, define
$\bar z_\ell^{(r)} = \frac{1}{m}\sum_{k=1}^m z_{k,\ell}^{(r)}$, and set
\begin{equation}
\epsilon^{0,(r)}_{k,\ell}
\;=\;
\sqrt{\frac{m}{m-1}}\Big(z_{k,\ell}^{(r)} - \bar z_\ell^{(r)}\Big),
\quad
\epsilon^{(r)}_{k,\ell} = \alpha_k\,\epsilon^{0,(r)}_{k,\ell}.
\end{equation}
By construction, $\sum_{k=1}^m \epsilon^{0,(r)}_{k,\ell} \equiv 0$ for every block $\ell$, and hence $\sum_k \epsilon_k^{0,(r)} \equiv 0$ for the stacked vector. 

We define
\begin{equation}
    \epsilon_k^{0,(r)} = \text{stack}_{\ell}(\epsilon^{0,(r)}_{k,\ell}),
\end{equation}
where the stack operation means to stack the block-wise noises together to obtain a model-level noise.

\paragraph{Marginal Distribution and Cross-Covariance of Zero-Sum Base}
Let $z_k := z_{k,\ell}^{(r)} \in \mathbb{R}^{d_\ell}$ and
$\bar z = \frac{1}{m}\sum_{k=1}^m z_k$.
Since $z_k \sim \mathcal{N}(0, \sigma_\ell^2 I_{d_\ell})$ are i.i.d., we have
\begin{equation}
\mathrm{Var}(z_k - \bar z)
=
\Big(1 - \frac{1}{m}\Big)\sigma_\ell^2 I_{d_\ell}.
\end{equation}
The scaling factor $\sqrt{\frac{m}{m-1}}$ therefore restores the desired marginal variance:
\begin{equation}
\mathrm{Var}\!\big(\epsilon^{0,(r)}_{k,\ell}\big)
=
\frac{m}{m-1}\,\mathrm{Var}(z_k - \bar z)
=
\sigma_\ell^2 I_{d_\ell}
\Rightarrow
\epsilon^{0,(r)}_{k,\ell} \sim \mathcal{N}(0, \sigma_\ell^2 I_{d_\ell}) \;\;\text{marginally}.
\end{equation}
For $k \neq j$, using $\mathrm{Cov}(z_k, \bar z) = \frac{1}{m}\sigma_\ell^2 I_{d_\ell}$ and
$\mathrm{Var}(\bar z) = \frac{1}{m}\sigma_\ell^2 I_{d_\ell}$, we obtain
\begin{equation}
\mathrm{Cov}(z_k - \bar z,\; z_j - \bar z)
=
-\frac{1}{m}\sigma_\ell^2 I_{d_\ell},
\end{equation}
and hence, after scaling,
\begin{equation}
\mathrm{Cov}\!\big(\epsilon^{0,(r)}_{k,\ell}, \epsilon^{0,(r)}_{j,\ell}\big)
=
-\frac{1}{m-1}\sigma_\ell^2 I_{d_\ell},
\quad k \neq j.
\end{equation}
After applying the per-copy scaling, this becomes
\begin{equation}
\mathrm{Cov}\!\big(\epsilon^{(r)}_{k,\ell}, \epsilon^{(r)}_{j,\ell}\big)
=
-\frac{\alpha_k \alpha_j}{m-1}\sigma_\ell^2 I_{d_\ell},
\quad k \neq j.
\end{equation}

\paragraph{Scaled-Copy Covariance (Matrix Form)}
Let
$\boldsymbol{\epsilon}_\ell := [\epsilon^{(r)}_{1,\ell};\ldots;\epsilon^{(r)}_{m,\ell}]
\in\mathbb{R}^{m d_\ell}$ be the stacked noise vector for block $\ell$. Then
\begin{equation}
\label{eq:mpu-noise-cov}
\mathrm{Cov}\!\big[\boldsymbol{\epsilon}_{\ell}\big]
=
\sigma_\ell^2\,(D B D)\otimes I_{d_\ell},
\end{equation}
where
\begin{equation}
D = \mathrm{diag}(\alpha_1,\ldots,\alpha_m),
\quad
B = \frac{m}{m-1}I_m - \frac{1}{m-1}\mathbf{1}\mathbf{1}^\top.
\end{equation}
Since $B$ has eigenvalues $\frac{m}{m-1}$ (multiplicity $m-1$) and $0$ (multiplicity $1$),
$\mathrm{Cov}[\boldsymbol{\epsilon}_\ell]$ has rank $(m-1)d_\ell$. This expresses the core
property used by harmonic denoising: the $m$ copy noises live in an $(m-1)$-dimensional
subspace (per block), and the ``missing'' direction is precisely the all-ones direction that harmonic aggregation cancels.

\subsection{Possible Combination of MPU and Differential Privacy}
\label{app:mpu_dp_combination}

While the noise injection serves as an effective defense, certain deployments may require formal, theoretical privacy guarantees. In this section, we provide the preliminaries of Differential Privacy (DP) and detail how MPU's noise injection mechanism can be naturally combined with DP to offer rigorous mathematical privacy guarantees for the server's parameters.

\subsubsection{Preliminaries: Differential Privacy}

DP is a rigorous mathematical framework for privacy-preserving data analysis that quantifies and strictly bounds the privacy risks associated with algorithmic outputs.

Formally, a randomized algorithm $\mathcal{M}$ satisfies $(\epsilon, \delta)$-DP if, for all adjacent datasets $\mathcal{D}$ and $\mathcal{D}'$ (which differ by exactly one data record) and for all sets of possible outputs $\mathcal{S} \subseteq \text{Range}(\mathcal{M})$, the following inequality holds:
\begin{equation}
    \mathbb{P}[\mathcal{M}(\mathcal{D}) \in \mathcal{S}] \le e^\epsilon \mathbb{P}[\mathcal{M}(\mathcal{D}') \in \mathcal{S}] + \delta
\end{equation}
where $\epsilon > 0$ is the privacy budget and $\delta \in [0, 1)$ is a cryptographically small probability of a strict privacy breach. A smaller $\epsilon$ corresponds to a stronger privacy guarantee.

\textbf{DP provides formal privacy guarantee} \\
DP ensures that the probability distribution of the algorithm's output remains almost indistinguishable whether any single individual's data is included in the dataset or not. This provides \textit{plausible deniability} to all individuals in the training data. Because the guarantee formally bounds the worst-case likelihood ratio of the outputs, an adversary cannot reliably infer the presence, absence, or specific content of any individual training sample. This holds true regardless of how much arbitrary side information or computational power the adversary possesses, thereby provably preventing attacks such as membership inference and data reconstruction.

\textbf{The Gaussian Mechanism and Post-Processing} \\
To achieve $(\epsilon, \delta)$-DP for a deterministic vector-valued function $f(\mathcal{D}) \in \mathbb{R}^d$ (e.g., releasing model parameters), the standard approach is the Gaussian mechanism. This involves adding independent noise drawn from $\mathcal{N}(0, \sigma_{DP}^2)$ to each coordinate. The required noise scale $\sigma_{DP}$ is determined by the $L_2$-sensitivity of the function, defined as $\Delta f = \max_{\mathcal{D}, \mathcal{D}'} \|f(\mathcal{D}) - f(\mathcal{D}')\|_2$. Specifically, returning $f(\mathcal{D}) + \mathcal{N}(0, \sigma_{DP}^2 I)$ guarantees $(\epsilon, \delta)$-DP if:
\begin{equation}
    \sigma_{DP} \ge \frac{\Delta f \sqrt{2 \ln(1.25/\delta)}}{\epsilon}
\end{equation}
A crucial property of DP is its immunity to \textit{post-processing}: if a randomized output $\mathcal{M}(\mathcal{D})$ satisfies DP, then for any arbitrary data-independent function $g$, the composition $g(\mathcal{M}(\mathcal{D}))$ also satisfies the exact same DP guarantees.

\subsubsection{Combining MPU with Computational Differential Privacy}

In the standard DP framework, privacy guarantees are typically established against computationally unbounded adversaries (i.e., information-theoretic DP). In the context of MPU, relying on information-theoretic DP would be problematic: because the zero-sum noise vectors are algebraically constrained ($\sum_{k=1}^m \epsilon_k^{0,(r)} \equiv 0$), an unbounded adversary could hypothetically break the reparameterizations $T_{k,r}$, align the $m$ copies, average them, and perfectly eliminate the noise, thereby bypassing the privacy guarantee. To prevent this, one would traditionally be forced to inject an additional, independent shared Gaussian noise that cannot be canceled, which inherently degrades the unlearning utility.

However, as formally established in Appendix~\ref{app:security_hardness}, attempting to reverse the reparameterizations to align the mismatched coordinate spaces is \textbf{NP-hard} (equivalent to the Multi-Dimensional Assignment Problem and the Generalized Orthogonal Procrustes Problem). Therefore, against any realistic, probabilistic polynomial-time (PPT) adversary, we can safely establish our guarantees under the framework of \textbf{Computational DP (CDP)}. CDP provides the same robust mathematical privacy guarantees as standard DP, but relaxes the adversary assumption to computationally bounded entities. Because the computationally bounded adversary cannot solve the NP-hard alignment problem, the zero-sum structural dependency between the copies remains cryptographically inaccessible. Consequently, we can use the zero-sum noise itself to provide the formal privacy guarantee.

The integration of MPU with CDP operates as follows:

\paragraph{Bounding Sensitivity and Calibrating Zero-Sum Noise.}
The server first bounds the global $L_2$-sensitivity $\Delta \theta$ of its model parameters $\theta_{r-1}$ via standard DP-training techniques (e.g., DP-SGD, gradient clipping). To achieve a target computational privacy budget $(\epsilon_{srv}, \delta_{srv})$, the server ensures that the marginal variance of each $\epsilon_k^{(r)}$ is $\sigma_{DP}^2 I_d$, where $\sigma_{DP}$ is determined strictly according to the Gaussian mechanism:
\begin{equation}
    \sigma_{DP} = \frac{\Delta \theta \sqrt{2 \ln(1.25/\delta_{srv})}}{\epsilon_{srv}}
\end{equation}
The server scales the base zero-sum noise vectors (Eq. 4) such that their marginal distributions match $\mathcal{N}(0, \sigma_{DP}^2 I_d)$, and adds them to the base parameters: $\tilde{\theta}_k^{(r)} = \theta_{r-1} + \epsilon_k^{(r)}$.

\paragraph{Reparameterization and Computational DP Guarantee}
The server then applies the data-independent reparameterization $T_{k,r} \leftarrow \textsc{SampleReparam}(t_r, k)$ to yield the published models distributed to the clients:
\begin{equation}
    \theta_{pub}^{(k,r)} = T_{k,r}(\tilde{\theta}_k^{(r)})
\end{equation}
To a PPT adversary, the transformed copies $\{ \theta_{pub}^{(k,r)} \}_{k=1}^m$ are computationally indistinguishable from copies perturbed by truly independent Gaussian noise, because discovering the zero-sum correlation requires solving an NP-hard alignment problem. Since independent Gaussian noise of scale $\sigma_{DP}$ satisfies $(\epsilon_{srv}, \delta_{srv})$-DP, the published copies in MPU satisfy $(\epsilon_{srv}, \delta_{srv})$-CDP.

\paragraph{Perfect Privacy-Utility Decoupling (Zero First-Order Utility Cost)}
This formulation yields a profound advantage for the MPU framework: it completely bypasses the fundamental privacy-utility trade-off inherent in standard DP. During the Post-Process update aggregation (Section 3.4), the server uses its secret keys $t_r$ to invert the reparameterizations in $\mathcal{O}(d)$ time and applies harmonic denoising. Because \textit{all} the injected DP noise is constructed with the zero-sum constraint ($\sum_{k=1}^m \epsilon_k^{0,(r)} \equiv 0$), the server's harmonic aggregation perfectly eliminates the first-order noise error. 

Consequently, the server can inject an arbitrarily massive noise scale $\sigma_{DP}$ to achieve an exceptionally small privacy budget $\epsilon_{srv}$ (extremely strong privacy) without suffering the catastrophic utility degradation that typically plagues DP frameworks. The formal privacy guarantee is satisfied during client-side exposure, while the noise is mathematically canceled upon server-side aggregation.

\paragraph{Remark: Client-Side Differential Privacy.}
While the above details the protection of the server's model, MPU also naturally complements client-side DP. If the client wishes to formally protect its local forget set, it can perform local unlearning using standard DP-SGD. As proved in Appendix~\ref{app:mpu-optimality}, the server's harmonic aggregation inherently provides a variance reduction effect for any independent client-side noise. Thus, MPU not only guarantees server privacy without first-order cost, but also actively suppresses the utility degradation associated with the client's local DP mechanism.

\subsection{Harmonic Denoising Aggregation: Cancellation, Remainder, and Streaming}
\label{app:mpu-aggregation}

We \textit{\textbf{(i)}} formalize the first-order cancellation argument, \textit{\textbf{(ii)}} derive a clean second-order remainder bound under a local smoothness assumption, \textit{\textbf{(iii)}} establish the uniqueness and optimality of harmonic weights for zero-sum cancellation, and \textit{\textbf{(iv)}} justify the streaming implementation used by \method{}
(Section~\ref{sec:pum-streaming}).

\paragraph{Local Linearization Model}
Let $\Delta^\star(\theta)$ denote the ideal (noise-free) unlearning displacement produced by the chosen unlearning trainer when initialized at parameters $\theta$. For analytical clarity, we assume fixed client data and fixed algorithmic randomness.
Define the Jacobian of $\Delta^\star$ at the current iterate as
\begin{equation}
    J
\;=\;
\left.
\frac{\partial \Delta^\star(\theta)}{\partial \theta}
\right|_{\theta_{r-1}}.
\end{equation}
After inverting the reparameterization (Section~\ref{sec:pum-aggregation}), we model each aligned update via a first-order expansion in the injected perturbation:
\begin{equation}
\label{eq:mpu-firstorder}
\widehat{\Delta}^{(k,r)}
=
\Delta^\star(\theta_{r-1})
+
J\,\epsilon_k^{(r)}
+
\rho_k^{(r)},
\quad
\epsilon_k^{(r)}=\alpha_k\,\epsilon_k^{0,(r)},
\end{equation}
where the remainder term $\rho_k^{(r)}$ captures second-order (and higher-order) effects, as well as any deviation from modeling the client-side unlearning routine as a deterministic map.

\subsubsection{First-Order Exact Cancellation via Harmonic Weights}

The server aggregates the aligned updates using harmonic weights
$w_k \propto \alpha_k^{-1}$:
\begin{equation}
\bar{\Delta}^{(r)}
=
\sum_{k=1}^m w_k\,\widehat{\Delta}^{(k,r)},
\quad
w_k
:=
\frac{\alpha_k^{-1}}{\sum_{j=1}^m \alpha_j^{-1}}.
\end{equation}
Substituting the first-order model in Eq.~\eqref{eq:mpu-firstorder} yields
\begin{equation}
\bar{\Delta}^{(r)}
=
\Delta^\star(\theta_{r-1})
+
J \sum_{k=1}^m w_k \epsilon_k^{(r)}
+
\sum_{k=1}^m w_k \rho_k^{(r)}.
\end{equation}
Since
\begin{equation}
w_k \epsilon_k^{(r)}
=
\frac{\alpha_k^{-1}}{S_0}\,\alpha_k \epsilon_k^{0,(r)}
=
\frac{1}{S_0}\,\epsilon_k^{0,(r)},
\quad
S_0 := \sum_{j=1}^m \alpha_j^{-1},
\end{equation}
we obtain
\begin{equation}
\sum_{k=1}^m w_k \epsilon_k^{(r)}
=
\frac{1}{S_0} \sum_{k=1}^m \epsilon_k^{0,(r)}
=
0,
\end{equation}
where the last equality follows from the zero-sum property in
Eq.~\eqref{eq:pum-noise-construct}.

Therefore,
\begin{equation}
\bar{\Delta}^{(r)}
=
\Delta^\star(\theta_{r-1})
+
\sum_{k=1}^m w_k \rho_k^{(r)},
\end{equation}
which formally establishes that the injected noise is canceled \emph{exactly to first order}.

\subsubsection{Second-Order Remainder under Lipschitz Jacobian}
\label{app:mpu-second-order}

A standard approach to control the remainder term $\rho_k^{(r)}$ is to assume local
smoothness of the Jacobian.

\paragraph{Assumption (Local Lipschitz Jacobian)}
Assume that $\Delta^\star(\theta)$ is Fr\'echet differentiable in a neighborhood of
$\theta_{r-1}$, and that its Jacobian $J(\theta)$ is $L_J$-Lipschitz in this neighborhood:
\begin{equation}
\|J(\theta) - J(\theta')\|
\;\le\;
L_J \, \|\theta - \theta'\|.
\end{equation}
\paragraph{Remainder Bound}
Let $\delta_k^{(r)} := \epsilon_k^{(r)}$.
By the integral remainder form of Taylor's theorem, we have
\begin{equation}
\begin{aligned}
\rho_k^{(r)}
&=
\Delta^\star(\theta_{r-1} + \delta_k^{(r)})
-
\Delta^\star(\theta_{r-1})
-
J\,\delta_k^{(r)} \\
&=
\int_0^1
\Big(
J(\theta_{r-1} + t\,\delta_k^{(r)}) - J(\theta_{r-1})
\Big)\,
\delta_k^{(r)}\,dt.
\end{aligned}
\end{equation}
Taking norms and applying the Lipschitz condition yields
\begin{equation}
\|\rho_k^{(r)}\|
\;\le\;
\int_0^1 L_J\, t \, \|\delta_k^{(r)}\|^2 \, dt
=
\frac{L_J}{2}\,\|\epsilon_k^{(r)}\|^2.
\end{equation}
Consequently,
\begin{equation}
\big\|
\bar{\Delta}^{(r)} - \Delta^\star(\theta_{r-1})
\big\|
=
\Big\|
\sum_{k=1}^m w_k \rho_k^{(r)}
\Big\|
\;\le\;
\frac{L_J}{2}
\sum_{k=1}^m
w_k\,\|\epsilon_k^{(r)}\|^2,
\label{eq:mpu-second-order-bound}
\end{equation}
which matches the $O(\|\epsilon\|^2)$ behavior stated in
Section~\ref{sec:pum-aggregation}.

\subsubsection{Uniqueness and Optimality of Harmonic Weights}
\label{app:mpu-optimality}

In this section, we show that harmonic weights are the unique linear weights that cancel
the first-order perturbation for \emph{all} zero-sum noise realizations.

\begin{proposition}[Uniqueness of Harmonic Weights for Zero-Sum Cancellation]
\label{prop:mpu-unique-weights}
Consider a linear estimator of the form
$\sum_{k=1}^m w_k \widehat{\Delta}^{(k,r)}$ with $\sum_{k=1}^m w_k = 1$.
Assume the linear response model
\begin{equation}
\widehat{\Delta}^{(k,r)}
=
\Delta^\star(\theta_{r-1})
+
J\,\alpha_k\,\epsilon_k^{0,(r)}
+
\rho_k^{(r)},
\end{equation}
where the base noises satisfy $\sum_{k=1}^m \epsilon_k^{0,(r)} \equiv 0$.
If the first-order term cancels for \emph{all} such zero-sum realizations, \ie
\begin{equation}
\sum_{k=1}^m w_k \alpha_k \,\epsilon_k^{0,(r)} \equiv 0
\quad
\text{for all } \{\epsilon_k^{0,(r)}\}_{k=1}^m \text{ with }
\sum_{k=1}^m \epsilon_k^{0,(r)} = 0,
\end{equation}
then necessarily
\begin{equation}
w_k
=
\frac{\alpha_k^{-1}}{\sum_{j=1}^m \alpha_j^{-1}},
\quad k=1,\dots,m.
\end{equation}
\end{proposition}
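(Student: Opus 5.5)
The plan is to reduce the hypothesis to a statement about scalar coefficients by testing it on well-chosen zero-sum noise realizations. Set $c_k := w_k\alpha_k$. The cancellation condition then says that $\sum_{k=1}^m c_k\,\epsilon_k^{0,(r)} = 0$ whenever $\sum_{k=1}^m \epsilon_k^{0,(r)} = 0$. The goal is to show that all $c_k$ are equal. The normalization $\sum_k w_k = 1$ will then fix their common value.

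\textbf{Step 1 (equal coefficients).} Fix any nonzero vector $v\in\mathbb{R}^d$ and any pair of indices $i\neq j$. Choose $\epsilon_i^{0,(r)} = v$, $\epsilon_j^{0,(r)} = -v$, and $\epsilon_k^{0,(r)} = 0$ for all other $k$. This family is zero-sum, so the hypothesis gives $(c_i - c_j)\,v = 0$, and since $v\neq 0$ we get $c_i = c_j$.

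One point needs care. The hypothesis quantifies over realizations of the structured noise, and the construction in Eq.~\eqref{eq:pum-noise-construct} produces Gaussian vectors whose support is the whole zero-sum subspace. That subspace has dimension $(m-1)d_\ell$ per block, by the rank computation following Eq.~\eqref{eq:mpu-noise-cov}. I read ``for all such zero-sum realizations'' as ``for all vectors in that subspace,'' which contains the test configuration above. As a fallback, if only almost-sure cancellation is assumed, the map $\epsilon\mapsto\sum_k c_k\epsilon_k$ is linear and vanishes on a full-measure subset of the zero-sum subspace. A linear map that vanishes on a set not contained in any proper subspace vanishes on the whole subspace, so the same conclusion holds.

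\textbf{Step 2 (normalize).} From Step 1, $w_k\alpha_k = c$ for every $k$, so $w_k = c\,\alpha_k^{-1}$. Summing over $k$ and using $\sum_k w_k = 1$ gives $c\sum_j \alpha_j^{-1} = 1$. Because every $\alpha_j > 0$, the sum is positive and we can solve for $c = 1/\sum_j \alpha_j^{-1}$. This yields the harmonic weights stated in the proposition.

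\textbf{Main obstacle.} The algebra is routine. The only real work is the interpretive step in Step 1: justifying that the test configurations, such as the pair $(v,-v)$ with zeros elsewhere, are admissible zero-sum realizations. I would handle this with the subspace and support argument above.
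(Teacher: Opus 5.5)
Your proposal is correct and follows essentially the same route as the paper. The paper notes that cancellation forces $(w_k\alpha_k)_k$ to be orthogonal to the zero-sum subspace, which is spanned by the differences $e_i-e_j$ (precisely your test configurations $\epsilon_i^{0,(r)}=v$, $\epsilon_j^{0,(r)}=-v$); this gives $w_k\alpha_k=c$, and the normalization then yields $c=1/\sum_j\alpha_j^{-1}$. Your explicit treatment of the vector-valued noise through an arbitrary $v\neq 0$, and your almost-sure fallback, are small rigor additions the paper omits, but they do not change the argument.
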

\begin{proof}
Cancellation for all zero-sum realizations is equivalent to requiring that the vector
$(w_k \alpha_k)_{k=1}^m$ is orthogonal to the entire subspace
$\{(\epsilon_k) \in \mathbb{R}^m : \sum_{k=1}^m \epsilon_k = 0\}$.
This subspace is spanned by pairwise difference vectors of the form $e_i - e_j$.
Hence, orthogonality implies $w_i \alpha_i = w_j \alpha_j$ for all $i,j$, i.e.,
$w_k \alpha_k = c$ for some constant $c$.
Imposing the normalization $\sum_{k=1}^m w_k = 1$ yields
$c = 1 / \sum_{j=1}^m \alpha_j^{-1}$, which gives the harmonic form.
\end{proof}

\paragraph{Variance Reduction of Client-Side Randomness}
Suppose the aligned updates additionally include zero-mean stochasticity from local
training,
\begin{equation}
\widehat{\Delta}^{(k,r)}
=
\Delta^\star(\theta_{r-1})
+
J\,\epsilon_k^{(r)}
+
\eta_k^{(r)}
+
\rho_k^{(r)},
\end{equation}
where $\mathbb{E}[\eta_k^{(r)}] = 0$ and
$\mathrm{Cov}(\eta_k^{(r)}) = \Sigma_\eta$, independently across $k$.
Ignoring the $O(\|\epsilon\|^2)$ remainder terms for clarity, the covariance of the aggregated update satisfies
\begin{equation}
\mathrm{Cov}(\bar{\Delta}^{(r)})
=
\sum_{k=1}^m w_k^2 \,\Sigma_\eta
=
\Bigg(
\frac{\sum_{k=1}^m \alpha_k^{-2}}
     {(\sum_{k=1}^m \alpha_k^{-1})^2}
\Bigg)\Sigma_\eta.
\end{equation}
When the scales $\{\alpha_k\}$ are of comparable magnitude, the prefactor scales as $\asymp 1/m$, formalizing the $1/m$ variance-reduction intuition discussed in Section~\ref{sec:mpu-stability}.

\subsection{Function-Preserving Reparameterizations}
\label{app:pum-reparam}

This section expands Section~\ref{sec:pum-reparam} by providing full Transformer-level
details. We describe the reparameterizations used by \method{} from a group-theoretic
perspective and present explicit forward and inverse formulas together with invariance
proofs, including the RoPE-commutation constraint.

\paragraph{Parameter-Symmetry Group Viewpoint}
Let $\Theta$ denote the parameter tuple of a Transformer block, and let
$f_\Theta:\mathbb{R}^{L\times d_{\mathrm{model}}}\to\mathbb{R}^{L\times d_{\mathrm{model}}}$
denote the forward function induced by $\Theta$ on a length-$L$ sequence.
Define the set of \emph{function-preserving reparameterizations} as
\begin{equation}
\mathcal{G}
\;=\;
\Big\{\,T:\Theta\mapsto T(\Theta)\;\Big|\;
T \text{ is invertible and } f_{T(\Theta)}(X)=f_\Theta(X)\ \forall X\,\Big\}.
\end{equation}
Under composition, $\mathcal{G}$ forms a group, and each $T\in\mathcal{G}$ constitutes a
parameter-symmetry of the model.

In \method, we take $L=d_{\mathrm{ff}}$ and sample $T_{k,r}$ \emph{data-independently}
from a structured subset $\mathcal{T}\subseteq\mathcal{G}$ constructed from:
\textit{\textbf{(i)}} discrete channel permutations within FFN submodules, and
\textit{\textbf{(ii)}} continuous orthogonal basis changes within attention head subspaces.
For RoPE-based models, we further restrict the attention basis changes to lie in the
commutant of the RoPE operators, ensuring functional invariance.

\subsubsection{Attention Head Reparameterization}
\label{app:mpu-attn}
\label{app:sa-orth}

Let $d_{\mathrm{model}}$ denote the model width, $d_h$ the head dimension,
$H_Q$ the number of query heads, and $H_{KV}$ the number of key/value heads
(GQA/MQA are covered by $H_Q \ge H_{KV}$).
We use the following parameters:
\begin{subequations}
\begin{align}
W_Q&\in\mathbb{R}^{d_{\mathrm{model}}\times(H_Q d_h)},\\
W_K, W_V&\in\mathbb{R}^{d_{\mathrm{model}}\times(H_{KV} d_h)},\\
W_O&\in\mathbb{R}^{(H_Q d_h)\times d_{\mathrm{model}}}.
\end{align}
\end{subequations}

with parameter tuple 
\begin{equation}
\Theta = (W_Q, W_K, W_V, W_O)
\end{equation}

\paragraph{Group Element and Lifted Action}
Let $\pi : [H_Q] \to [H_{KV}]$ denote the fixed assignment mapping each query head to its
corresponding key/value head.
Sample per-KV orthogonal blocks $S_j \in O(d_h)$ for $j = 1,\ldots,H_{KV}$, and define
\begin{subequations}
\begin{align}
S_{KV} &:= \operatorname{diag}(S_1,\ldots,S_{H_{KV}})\in O(H_{KV}d_h),\\
U(\pi,S_{KV}) &:= \operatorname{blkdiag}\!\big(S_{\pi(1)},\ldots,S_{\pi(H_Q)}\big)\in O(H_Q d_h).
\end{align}
\end{subequations}
Both $S_{KV}$ and $U(\pi, S_{KV})$ are orthogonal matrices.

\paragraph{Forward and Inverse Formulas}
The reparameterization acts on the attention weights as
\begin{subequations}
\label{eq:mpu-attn-forward}
\begin{align}
W_Q' &= W_Q\,U(\pi, S_{KV}),
&
W_K' &= W_K\,S_{KV}, \\
W_V' &= W_V\,S_{KV},
&
W_O' &= U(\pi, S_{KV})^\top\,W_O. \\
T(\Theta) &= (W_Q',W_K',W_V',W_O')
\end{align}
\end{subequations}

If biases are present, apply the same right-multiplication:
$b_Q' = b_Q U(\pi,S_{KV})$, $b_K' = b_K S_{KV}$, and $b_V' = b_V S_{KV}$.
The inverse transformation is given by transposition:
\begin{subequations}
\label{eq:mpu-attn-inverse}\label{eq:pum-attn-inverse}
\begin{align}
W_Q &= W_Q'\,U(\pi, S_{KV})^\top,
&
W_K &= W_K'\,S_{KV}^\top, \\
W_V &= W_V'\,S_{KV}^\top,
&
W_O &= U(\pi, S_{KV})\,W_O'.
\end{align}
\end{subequations}

\paragraph{Function Invariance (without RoPE)}
Let $X \in \mathbb{R}^{L \times d_{\mathrm{model}}}$ and define
$Q = X W_Q$, $K = X W_K$, and $V = X W_V$, with head-wise partitions
$Q = [Q_1 \mid \cdots \mid Q_{H_Q}]$,
$K = [K_1 \mid \cdots \mid K_{H_{KV}}]$, and
$V = [V_1 \mid \cdots \mid V_{H_{KV}}]$,
where each $Q_i, K_j, V_j \in \mathbb{R}^{L \times d_h}$.
For GQA attention, the outputs are given by
\begin{equation}
A_i
=
\mathrm{softmax}\!\Big(\tfrac{1}{\sqrt{d_h}} Q_i K_{\pi(i)}^\top \Big),
\quad
O_i = A_i V_{\pi(i)},
\quad
Y = [O_1 \mid \cdots \mid O_{H_Q}]\, W_O .
\end{equation}

\textbf{Claim}
Under the reparameterization in Eq.~\eqref{eq:mpu-attn-forward}, the realized attention
function is invariant, i.e., $Y' = Y$.

\begin{proof}
From Eq.~\eqref{eq:mpu-attn-forward}, we have
$Q' = Q\,U(\pi,S_{KV})$ and $K' = K\,S_{KV}$, $V' = V\,S_{KV}$.
Consequently, $Q'_i = Q_i S_{\pi(i)}$ and $K'_{\pi(i)} = K_{\pi(i)} S_{\pi(i)}$.
Since $S_{\pi(i)}^\top S_{\pi(i)} = I$, it follows that
\begin{equation}
Q'_i K'_{\pi(i)}{}^\top
=
\left(Q_i S_{\pi\left(i\right)}\right)\left(S_{\pi\left(i\right)}^\top K_{\pi\left(i\right)}^\top\right)
=
Q_i K_{\pi(i)}^\top,
\end{equation}
and hence the attention logits, and therefore $A'_i$, are unchanged.

Moreover,
\begin{equation}
O'_i
=
A'_i V'_{\pi(i)}
=
A_i \left(V_{\pi\left(i\right)} S_{\pi\left(i\right)}\right)
=
(O_i) S_{\pi(i)}.
\end{equation}
Concatenation yields
$\left[O'_1 \mid \cdots \mid O'_{H_Q}\right]
=
\left[O_1 \mid \cdots \mid O_{H_Q}\right]\, U\left(\pi,S_{KV}\right)$.
Finally, using $W_O' = U(\pi,S_{KV})^\top W_O$, we obtain
\begin{equation}
Y'
=
\left[O'\right] W_O'
=
\left(\left[O\right] U\right)\left(U^\top W_O\right)
=
[O] W_O
=
Y,
\end{equation}
which completes the proof.
\end{proof}

\subsubsection{RoPE-Aware Specialization}
\label{app:mpu-rope}

Llama-type decoders apply rotary positional embeddings (RoPE) to queries and keys via
right-multiplication by a position-dependent orthogonal operator
$\Phi(p) \in \mathbb{R}^{d_h \times d_h}$:
\begin{equation}
\widetilde Q_i[p,:] = Q_i[p,:]\,\Phi(p)^\top,
\quad
\widetilde K_j[p,:] = K_j[p,:]\,\Phi(p)^\top.
\end{equation}
RoPE acts as independent $2\times 2$ rotations on $d_h/2$ disjoint planes,
\begin{equation}
\Phi(p)
=
\operatorname{blkdiag}\!\big(R(\omega_1 p),\ldots,R(\omega_{d_h/2} p)\big),
\quad
R(\varphi)
=
\begin{bmatrix}
\cos\varphi & -\sin\varphi \\
\sin\varphi & \cos\varphi
\end{bmatrix}
\in \mathrm{SO}(2).
\end{equation}
To preserve the attention logits, we require the head-basis transform to commute with all
$\Phi(p)$, \ie
\begin{equation}
S_j \Phi(p) = \Phi(p) S_j,
\quad \forall\, p.
\end{equation}

\paragraph{Commutant under Distinct Frequencies}
When the RoPE frequencies $\{\omega_r\}$ are distinct across the $d_h/2$ planes, the
orthogonal transformations that commute with all $\Phi(p)$ are exactly the per-plane
rotations:
\begin{equation}
S_j
=
\operatorname{blkdiag}\!\left(R\left(\varphi_{j,1}\right),\ldots,R\left(\varphi_{j,d_h/2}\right)\right)
\;\in\;
\mathrm{SO}(2)^{\,d_h/2},
\quad
j = 1,\ldots,H_{KV}.
\label{eq:mpu-rope-commuting}
\end{equation}
Reflections in $O(2)^{d_h/2} \setminus \mathrm{SO}(2)^{d_h/2}$ generally fail to commute with $\Phi(p)$,
except at degenerate angles.
The rotation angles $\{\varphi_{j,r}\}$ may be sampled deterministically from the round
seed and indices $(t_r, k, \text{layer}, j, r)$.

\paragraph{Function Invariance with RoPE}
Under the commutation condition
$S_{\pi(i)} \Phi(p) = \Phi(p) S_{\pi(i)}$, the proof of function invariance follows
identically to the non-RoPE case.
In particular, commutativity implies
\begin{equation}
S_{\pi(i)} \Phi(p)^\top \Phi(p') S_{\pi(i)}^\top
=
\Phi(p)^\top \Phi(p'),
\end{equation}
inside the attention logits, so the attention weights remain unchanged.
The output invariance then follows as before via cancellation with
$W_O' = U(\pi,S_{KV})^\top W_O$.

\subsubsection{Feed-Forward Blocks: Hidden-Channel Permutations}
\label{app:mpu-ffn}
\label{app:ffn-perm}

\paragraph{Standard Two-Layer Feed-Forward Neural Network}
Consider an FFN with parameters $\Theta = (W_1,b_1,W_2,b_2)$ and an element-wise nonlinearity $\sigma$:
\begin{equation}
f_\Theta(x)=W_2\,\sigma(W_1x+b_1)+b_2.
\end{equation}
Let $P\in\mathbb{R}^{d_{\mathrm{ff}}\times d_{\mathrm{ff}}}$ be a randomly drawn permutation matrix
($P^{-1}=P^\top$) acting on hidden channels. Define the reparameterization
\begin{equation}
W_1' = P W_1,\quad
b_1' = P b_1,\quad
W_2' = W_2 P^\top,\quad
b_2' = b_2.
\end{equation}
so that 
\begin{equation}
\label{eq:ffn_reparam}
T(\Theta) = (W_1',b_1',W_2',b_2')
\end{equation}
Since element-wise activations are permutation-equivariant, \ie $\sigma(Pz)=P\sigma(z)$,
we have
\begin{equation}
\begin{aligned}
f_{T(\Theta)}(x)
&= W_2'\sigma(W_1'x+b_1')+b_2' \\
&= W_2 P^\top \sigma\!\big(P(W_1x+b_1)\big)+b_2 \\
&= W_2 P^\top P \sigma(W_1x+b_1)+b_2 \\
&= W_2 \sigma(W_1x+b_1)+b_2 \\
&= f_\Theta(x).
\end{aligned}
\end{equation}
Thus, hidden-channel permutations are function-preserving.

\paragraph{SwiGLU/GEGLU-Style Gated Feed-Forward Neural Networks}
Modern LLMs (\eg the Llama 3.2 family) employ gated FFNs such as SwiGLU.
For gated architectures with \emph{gate} and \emph{up} branches combined via an element-wise
product, the same permutation must be applied to both branches to preserve consistency.

Concretely, for parameters
$(W_{1,\mathrm{gate}}, W_{1,\mathrm{up}}, b_1, W_2, b_2)$, define
\begin{equation}
W_{1,\mathrm{gate}}' = P W_{1,\mathrm{gate}},\quad
W_{1,\mathrm{up}}' = P W_{1,\mathrm{up}},\quad
b_1' = P b_1,\quad
W_2' = W_2 P^\top,\quad
b_2' = b_2.
\end{equation}
Permutation-equivariance together with
$P^\top\!\big((P a)\odot(P b)\big)=a\odot b$
ensures functional invariance.

\subsubsection{Linearity of Reparameterization Map}
\label{app:mpu-linear-T}
\label{app:linear-T}

In \method, each $T_{k,r}$ is implemented via multiplication by orthogonal or permutation
matrices acting on parameter tensors (through left/right multiplication and block-diagonal
composition). Consequently, $T_{k,r}$ is a \emph{linear isomorphism} on the parameter space.
For conformable parameter collections $a,b$ and any scalar $c$,
\begin{equation}
T(a+b)=T(a)+T(b),
\quad
T(ca)=c\,T(a),
\quad
T^{-1}\ \text{exists and is linear}.
\end{equation}
This property justifies applying $T_{k,r}^{-1}$ directly to the returned parameter updates
in Algorithm~\ref{alg:pum}. Specifically, if the client returns a displacement in the
transformed coordinates, mapping it back via $T^{-1}$ yields the corresponding displacement
in the canonical coordinates:
\begin{equation}
T^{-1}\!\big(T(\theta)+\eta\,\Delta(T(\theta))\big)
=
\theta+\eta\,T^{-1}\!\big(\Delta(T(\theta))\big).
\end{equation}

\subsection{Discussion: Effects of Reparameterization on Optimization Trajectory and Smoothness}
\label{app:mpu-reparam-trajectory}

A natural concern is whether the client-side reparameterization introduced in
Section~\ref{sec:pum-reparam} (and Appendix~\ref{app:pum-reparam}) alters the optimization
dynamics of local unlearning.
We address two questions:
\textit{\textbf{(i)}} when the optimization trajectory is \emph{equivariant} under the
reparameterization, and
\textit{\textbf{(ii)}} whether the reparameterization can change the apparent smoothness
of the loss landscape.

\paragraph{Setup}
Let $\mathcal{U}(\theta)$ denote the client-side unlearning objective (\eg the loss used
by GradAscent, NPO, or GradDiff) as a function of the model parameters
$\theta \in \mathbb{R}^d$.
For a function-preserving reparameterization $T \in \mathcal{T}$
(Section~\ref{sec:pum-reparam}), we have
$f_{T(\theta)}(\cdot) \equiv f_\theta(\cdot)$ and therefore, for any objective depending
only on model outputs,
\begin{equation}
\mathcal{U}(T(\theta))
\;=\;
\mathcal{U}(\theta).
\label{eq:mpu-U-invariant}
\end{equation}
In \method{}, each $T_{k,r}$ is implemented via orthogonal or permutation actions on weight
matrices (Appendix~\ref{app:pum-reparam}).
Viewed as a linear map on the vectorized parameter space, this implies that $T$ is an
\emph{isometry}:
\begin{equation}
\langle T a,\; T b\rangle
\;=\;
\langle a,\; b\rangle,
\quad
\|T a\|_2
\;=\;
\|a\|_2,
\quad
T^{-1}
=
T^\top.
\label{eq:mpu-T-isometry}
\end{equation}

\subsubsection{Equivariance of Gradient Descent under Orthogonal or Permutation Reparameterizations}
Define the reparameterized objective
$\mathcal{U}_T(\vartheta) := \mathcal{U}(T^{-1}\vartheta)$
in the published coordinates $\vartheta = T(\theta)$.
By the chain rule together with Eq.~\eqref{eq:mpu-T-isometry},
\begin{equation}
\nabla_{\vartheta}\mathcal{U}_T(\vartheta)
\;=\;
T\,\nabla_{\theta}\mathcal{U}(\theta),
\quad
\theta = T^{-1}\vartheta.
\label{eq:mpu-grad-transform}
\end{equation}
Consider $t$ steps of (stochastic) gradient descent in the published coordinates:
\begin{equation}
\vartheta_{s+1}
\;=\;
\vartheta_s
-
\eta\, g_s(\vartheta_s),
\label{eq:mpu-sgd-pub}
\end{equation}
where $g_s(\vartheta_s)$ is an unbiased estimator of
$\nabla_{\vartheta}\mathcal{U}_T(\vartheta_s)$ (\eg mini-batch SGD).
Mapping back via $\theta_s := T^{-1}\vartheta_s$ and using
Eq.~\eqref{eq:mpu-grad-transform} yields
\begin{equation}
\theta_{s+1}
=
T^{-1}\vartheta_{s+1}
=
\theta_s
-
\eta\, T^{-1} g_s(\vartheta_s)
\;\approx\;
\theta_s
-
\eta\, \nabla_{\theta}\mathcal{U}(\theta_s),
\label{eq:mpu-sgd-equiv}
\end{equation}
\ie the mapped-back iterates follow the same optimization trajectory as if the client
had optimized directly in the canonical coordinates from the corresponding initialization.
This equivariance holds \emph{exactly} for deterministic gradient descent, and holds in
distribution for SGD under the natural coupling in which the same mini-batches are used
and the stochastic gradients are transformed consistently by $T$.
The argument also extends to common modifications such as momentum and $\ell_2$ weight
decay, since Eq.~\eqref{eq:mpu-T-isometry} preserves the Euclidean norm.

\subsubsection{Invariance of Euclidean Smoothness}
Because $T$ is an orthogonal or permutation change of coordinates, it does \emph{not}
alter curvature measured under the Euclidean metric.
Let $H(\theta) := \nabla^2_{\theta}\mathcal{U}(\theta)$ and
$H_T(\vartheta) := \nabla^2_{\vartheta}\mathcal{U}_T(\vartheta)$.
Differentiating Eq.~\eqref{eq:mpu-grad-transform} yields the similarity relation
\begin{equation}
H_T(\vartheta)
\;=\;
T\, H(\theta)\, T^\top,
\quad
\theta = T^{-1}\vartheta.
\label{eq:mpu-hess-similarity}
\end{equation}
Consequently, the Hessian spectrum is preserved:
$\lambda_{\max}(H_T(\vartheta)) = \lambda_{\max}(H(\theta))$, and likewise for the operator
norm $\|H_T\|_2$.
Equivalently, if $\mathcal{U}$ is $L$-smooth in $\theta$ (i.e., its gradient is
$L$-Lipschitz), then $\mathcal{U}_T$ is also $L$-smooth in $\vartheta$.
Thus, under Euclidean geometry, the reparameterization does \emph{not} make the loss landscape appear smoother or sharper; it merely rotates or permutes the coordinate axes
within a functionally equivalent parameter orbit.

\subsubsection{Influence of Nonlinear Optimizers on Trajectory} 
While optimization trajectory invariance rigorously holds for linear optimizers like SGD, it is theoretically violated by adaptive optimizers such as AdamW in the Attention blocks. Specifically, AdamW scales updates using an element-wise moving average of squared gradients ($v_t$). For Attention blocks reparameterized by dense continuous orthogonal matrices $U \in O(d_h)$, the non-commutativity of element-wise squaring and matrix multiplication---that is, $(gU)^{\odot 2} \neq g^{\odot 2} U$---breaks the exact equivariance condition $\mathcal{A}(\theta) = T^{-1}(\mathcal{A}(T(\theta)))$. Consequently, the adaptive preconditioning becomes copy-dependent, slightly altering the local unlearning Jacobians and preventing the mathematically perfect cancellation of first-order injected noise. Nevertheless, this deviation remains highly bounded in practice, allowing the empirical results to remain robust due to three key mitigating factors. First, Feed-Forward Networks (FFNs), which account for the vast majority (roughly 65\%) of standard LLM parameters, are reparameterized via discrete permutations. Because permutations strictly commute with element-wise operations, exact trajectory invariance and noise cancellation are perfectly preserved for the bulk of the model. Second, since orthogonal transformations are isometric, they preserve the Euclidean geometry and overall gradient $L_2$ norms, ensuring AdamW still optimizes over a functionally equivalent and well-conditioned landscape. Finally, because unlearning operates over very few epochs with exceedingly small learning rates (e.g., $1 \times 10^{-5}$), the accumulated coordinate-wise divergence in the second-moment estimator $v_t$ remains small. Any residual trajectory variations introduced by this mismatch act as mild, independent stochastic noise that is effectively smoothed out by the variance-reducing properties of the server's multi-copy harmonic aggregation.

\subsection{Security Analysis: Theoretical Hardness of Reversing Reparameterization}
\label{app:security_hardness}

In this section, we formally analyze the computational security guarantees of the MPU reparameterization module. We demonstrate that client-side reconstruction of the exact server parameters from multiple published copies reduces to intractable alignment optimization problems.

\subsubsection{Double Protection Against Known-Scaling Attacks}
A naive multi-copy noise scheme without reparameterization relies entirely on keeping the noise scaling factors $\{\alpha_k\}_{k=1}^m$ secret. Because the base noise satisfies a zero-sum constraint ($\sum_{k=1}^m \epsilon_k^{0,(r)} \equiv 0$), an attacker who compromises the exact values of $\{\alpha_k\}$ could theoretically recover the server's exact original parameters $\theta_{r-1}$ by performing a harmonic aggregation: $\sum_{k=1}^m \alpha_k^{-1} \theta_{pub}^{(k,r)} \propto \theta_{r-1}$.

The MPU reparameterization introduces a second, mathematically robust layer of protection. The client observes the published copies:
\begin{equation}
    \theta_{pub}^{(k,r)} = T_{k,r}(\theta_{r-1} + \alpha_k \epsilon_k^{0,(r)}) \quad \text{for } k \in [m],
\end{equation}
where $T_{k,r} \in \mathcal{G}$ denotes the data-independent, function-preserving transformations. Because the transformations (permutations and orthogonal matrices) act as coordinate basis changes, they do not commute with cross-copy addition. Consequently, attempting to perform harmonic aggregation on the mismatched spaces yields a completely scrambled and meaningless result. Therefore, even if the scaling factors $\alpha_k$ are completely leaked, the information is mathematically useless unless the attacker can simultaneously break the reparameterization keys $T_{k,r}$ to realign the parameter spaces.

\subsubsection{Alignment as NP-Hard Optimization}
To bypass the double protection, the attacker must blindly find the inverse transformations $T_{k,r}^{-1}$ strictly from the noisy, transformed observations. Without a clean template model to anchor the search, the attacker is forced to find a set of candidate transformations $\{\hat{T}_k \in \mathcal{G}\}$ that mutually align all $m$ copies to minimize their structural discrepancy. Formally, this alignment objective is:
\begin{equation}
    \min_{\hat{T}_1, \dots, \hat{T}_m \in \mathcal{G}} \sum_{1 \le i < j \le m} \left\| \hat{T}_i^{-1}\left(\theta_{pub}^{(i,r)}\right) - \hat{T}_j^{-1}\left(\theta_{pub}^{(j,r)}\right) \right\|_F^2.
\end{equation}
We show that for both FFN and Attention blocks, this translates to well-known computationally hard optimization problems.

\paragraph{Discrete Symmetries (FFN) and the Multi-Dimensional Assignment Problem (MDAP).}
For FFN blocks (Appendix~\ref{app:mpu-ffn}), the transformations are hidden-channel permutation matrices $P_k \in \mathbb{S}_{d_{ff}}$. The attacker's objective is to find $m$ permutations to align the row/column spaces of the noisy weight matrices:
\begin{equation}
    \min_{P_1, \dots, P_m \in \mathbb{S}_{d_{ff}}} \sum_{1 \le i < j \le m} \left\| P_i^\top W_{pub}^{(i)} - P_j^\top W_{pub}^{(j)} \right\|_F^2.
\end{equation}
When $m=2$, finding the optimal matching between two sets of noisy features is equivalent to the Linear Assignment Problem (or Bipartite Graph Matching), which requires $\mathcal{O}(d_{ff}^3)$ time using the Hungarian algorithm. However, for $m \ge 3$, enforcing transitive consistency to simultaneously align three or more sets of items constitutes the \textit{Multi-Dimensional Assignment Problem} (MDAP). MDAP is famously NP-hard (acting as a generalization of the 3-Dimensional Matching problem, one of Karp's 21 NP-complete problems). The discrete combinatorial search space expands factorially as $(d_{ff}!)^{m-1}$, effectively neutralizing any exact alignment strategy for modern LLM architectures where $d_{ff}$ is massively large.

\paragraph{Continuous Symmetries (Attention) and the Generalized Orthogonal Procrustes Problem (GOPP).}
For Attention blocks (Appendix~\ref{app:mpu-attn}), the transformations are orthogonal basis changes $U_k \in O(d_h)$. The attacker must align $m$ continuous spaces:
\begin{equation}
    \min_{U_1, \dots, U_m \in O(d_h)} \sum_{1 \le i < j \le m} \left\| W_{pub}^{(i)} U_i - W_{pub}^{(j)} U_j \right\|_F^2.
\end{equation}
When $m=2$, this reduces to the classical Orthogonal Procrustes Problem, which admits a closed-form global optimum computed via Singular Value Decomposition (SVD) in $\mathcal{O}(d_h^3)$ time. However, for $m \ge 3$, this formulation is exactly the \textit{Generalized Orthogonal Procrustes Problem} (GOPP). GOPP requires finding mutually consistent orthogonal rotations to centralize multiple matrices. Unlike the $m=2$ case, GOPP lacks a closed-form solution and has been rigorously proven to be NP-hard. The simultaneous constraints across multiple non-convex orthogonal manifolds prevent exact polynomial-time alignment.

In modern LLMs, the hidden dimensions governing these transformations are massive. Furthermore, the presence of large independent obfuscation noise severely perturbs the distance metrics, heavily degrading the reliability of analytical solvers like SVD or the Hungarian algorithm even in the $m=2$ case. In summary, the space misalignment mathematically prevents the client from exploiting cross-copy correlations. The fundamental parameter alignment tasks map directly to non-convex, NP-hard continuous and discrete optimization problems, ensuring the theoretical security of the MPU framework.

\begin{remark}
    In our experiments, as empirically verified in Appendix~\ref{app:pum_copy_number}, varying copy number $m$ has no typical trend in performance, we use $m=2$ by default to save computational resources. While in real-world applications, we recommend at least $m=3$ for strong security.
\end{remark}
\subsection{Overhead Analysis}
\label{app:overhead}
To model the computational overhead, we theoretically analyze the exact Memory Reads and Writes (MRW) and FLOPs. By evaluating Arithmetic Intensity (A = FLOPs / MRW bytes), we can determine whether operations are compute-bound or memory-bound.

Let $N$ be the total parameter count, $N_{attn}$ the attention parameters, $d_h$ the head dimension, $b$ the bytes per parameter ($b=2$ for 16-bit precision), and $m$ the number of perturbed copies.

\paragraph{1. Profiling Server-Side Operations (per copy):}
\begin{itemize}
    \item \textbf{Pre-Process:}
    \begin{itemize}
        \item Noise injection requires $2Nb$ MRW and $4N$ FLOPs.
        \item Reparameterization takes $2N_{attn}b$ MRW alongside $2d_hN_{attn}$ FLOPs for Attention (block-diagonal orthogonal matmuls), and $2N_{ffn}b$ MRW alongside $0$ FLOPs for FFN (pure memory routing).
        \item \textbf{Total:} $4Nb$ MRW, $(4N + 2d_hN_{attn})$ FLOPs.
    \end{itemize}
    \item \textbf{Post-Process:}
    \begin{itemize}
        \item Inverse reparameterization takes $2Nb$ MRW alongside $2d_hN_{attn}$ FLOPs.
        \item Streaming harmonic aggregation takes $3Nb$ MRW (read copy, read accumulator, write accumulator) alongside $2N$ FLOPs.
        \item \textbf{Total:} $5Nb$ MRW, $(2N + 2d_hN_{attn})$ FLOPs.
    \end{itemize}
\end{itemize}

\paragraph{2. Latency Modeling:}
For $m$ copies, the total MRW is $9mNb$ and the total FLOPs is $m(6N + 4d_hN_{attn})$. The server's Arithmetic Intensity is:
\begin{equation}
    A = \frac{6N + 4d_hN_{attn}}{9Nb}
\end{equation}

For a typical Llama architecture, $A \approx 6.0$ FLOPs/Byte. Because this is vastly lower than the operational ridge point of modern GPUs (e.g., $\sim150$ FLOPs/Byte for an A100), \textbf{MPU server operations are strictly memory-bound}.

Thus, the server latency $T_{server}$ is modeled entirely by memory bandwidth ($C_{mem}$):
\begin{equation}
    T_{server} = \frac{9mNb}{C_{mem}}
\end{equation}

For a 7B model ($m=2$) on an A100 GPU, sweeping $\sim252$ GB of memory traffic takes merely $\sim0.13$ seconds, \textbf{mathematically proving the server overhead is negligible}.

\subsection{Comparison of Update Error: \method vs.\ Single-Copy Noisy Unlearning}
\label{sec:appendix-mpu-vs-noiseonly}

In this section, we compare \method with a noise-injection baseline that performs no denoising, which serves as a controlled reference for isolating the effect of our denoising mechanism.

To ensure a fair comparison between \method and the NOISED baseline, we scale the noise parameter $\kappa$ of the NOISED baseline by the expected value of the scaling factor in \method. As a result, the noise level of the NOISED baseline matches the average noise level across multiple copies in \method.

\paragraph{Baseline (Noise-Only, No Denoising)}
Define a single-copy procedure that, in each round $r$:
\textit{\textbf{(i)}} samples a noise vector $\varepsilon^{(r)}$ with the same block-wise
scales as \method{} (\eg $\varepsilon^{(r)}_\ell \sim \mathcal{N}(0,\sigma_\ell^2 I_{d_\ell})$)
and overall scale matched to $\mathbb{E}_k(\alpha_k)$;
\textit{\textbf{(ii)}} publishes $\theta_{r-1} + \varepsilon^{(r)}$ to the client;
\textit{\textbf{(iii)}} runs local unlearning from this noisy initialization; and
\textit{\textbf{(iv)}} uses the resulting parameters as the next-round model.

Let the unlearning routine induce a displacement map $\Delta(\theta)$ so that the updated
parameters are $\theta + \Delta(\theta)$. The baseline state evolution is
\begin{equation}
\theta_r^{\mathrm{NO}}
\;=\;
\theta_{r-1}^{\mathrm{NO}}
+
\varepsilon^{(r)}
+
\eta_{\mathrm{srv}}\,
\Delta\!\left(\theta_{r-1}^{\mathrm{NO}} + \varepsilon^{\left(r\right)}\right),
\label{eq:noiseonly-update}
\end{equation}
where $\eta_{\mathrm{srv}}$ matches the server step size in Algorithm~\ref{alg:pum}.

\paragraph{Linear Response Model}
Fix an anchor point $\theta$ (\eg $\theta=\theta_{r-1}$) and write the perturbation as $u$.
Under the same local linearization used in Section~\ref{sec:pum-aggregation},
\begin{equation}
    \Delta(\theta + u)
=
\Delta^\star(\theta)
+
J u
+
\rho(u),
\quad
\|\rho(u)\|
\le
\frac{L_J}{2}\,\|u\|^2.
\end{equation}

\subsubsection{One-Round Comparison: Bias and Variance from Injected Noise}
\label{app:mpu-compare-1round}

\paragraph{Noise-Only Baseline Error}
Substituting the linear response into Eq.~\eqref{eq:noiseonly-update} yields
\begin{equation}
\theta_r^{\mathrm{NO}}
=
\theta
+
\eta_{\mathrm{srv}}\,\Delta^\star(\theta)
+
\underbrace{\big(I + \eta_{\mathrm{srv}} J\big)\varepsilon^{(r)}}_{\text{First-Order Injected-Noise Term}}
+
\eta_{\mathrm{srv}}\,\rho\left(\varepsilon^{(r)}\right).
\label{eq:noiseonly-expanded}
\end{equation}
Thus, even if $\mathbb{E}[\varepsilon^{(r)}]=0$, the state $\theta_r^{\mathrm{NO}}$ inherits
a \emph{first-order} random perturbation $\big(I + \eta_{\mathrm{srv}} J\big)\varepsilon^{(r)}$.

\paragraph{\method{} Error}
In contrast, \method{} updates the \emph{clean anchor} $\theta$ using the harmonically
denoised estimate $\bar{\Delta}^{(r)}$:
\begin{equation}
\theta_r^{\mathrm{MPU}}
=
\theta
+
\eta_{\mathrm{srv}}\,\bar{\Delta}^{(r)}
=
\theta
+
\eta_{\mathrm{srv}}\,\Delta^\star(\theta)
+
\eta_{\mathrm{srv}}\sum_{k=1}^m w_k\,\rho\left(\varepsilon_k^{\left(r\right)}\right),
\end{equation}
where the first-order injected-noise term cancels exactly
(Section~\ref{app:mpu-aggregation}). Consequently, the dominant injected-noise contribution is
\emph{second order} in the noise scale (cf.~Eq.~\eqref{eq:mpu-second-order-bound}).

\paragraph{Variance Comparison}
Let $\Sigma_\varepsilon$ denote the covariance of the single-copy baseline noise (\eg
block-diagonal with blocks $\sigma_\ell^2 I_{d_\ell}$). Ignoring the
$O(\|\varepsilon\|^2)$ remainder for clarity, Eq.~\eqref{eq:noiseonly-expanded} implies
\begin{equation}
\mathrm{Cov}\!\left(
\theta_r^{\mathrm{NO}}
-
\theta
-
\eta_{\mathrm{srv}}\Delta^\star\left(\theta\right)
\right)
\;\approx\;
\big(I + \eta_{\mathrm{srv}} J\big)\,
\Sigma_\varepsilon\,
\big(I + \eta_{\mathrm{srv}} J\big)^\top,
\end{equation}
whereas \method{} exhibits \emph{no} first-order injected-noise covariance term.

\subsubsection{Multi-Round Implications: Noise Accumulation vs.\ Denoised Tracking}
\label{app:mpu-compare-multiround}

The contrast becomes more pronounced over multiple rounds. The recursion in
Eq.~\eqref{eq:noiseonly-update} injects fresh noise into the model state at each round,
leading—under the linearized view—to a random-walk-like accumulation whose variance grows
approximately linearly with $R$ (modulated by $I + \eta_{\mathrm{srv}} J$). This accumulation
can degrade both model utility and unlearning stability.

By contrast, \method{} never commits the injected noise to the global model state
(Algorithm~\ref{alg:pum}). Instead, noise is used solely as a \emph{private publishing
perturbation} and its first-order effect is canceled via multi-copy harmonic denoising.
As a result, the global iterate tracks the intended unlearning trajectory up to the
second-order remainder and reduced client-side randomness.

\subsubsection{SNR View: Task-Relevant Subspace}
\label{app:mpu-snr}

If the desired unlearning displacement lies approximately in a low-dimensional
task-relevant subspace $\mathcal{S}$ with projector $P_{\mathcal S}$, a convenient summary
metric is the signal-to-noise ratio (SNR) of the \emph{projected} estimate.
Let $\widehat{\Delta}$ be an estimator of $\Delta^\star(\theta)$ and define
\begin{equation}
\mathrm{SNR}=\frac{\|P_{\mathcal S}\Delta^\star(\theta)\|_F^2}
     {\mathrm{tr}\!\big(P_{\mathcal S}\,\mathrm{Cov}(\widehat{\Delta})\big)}.
\end{equation}
In the noise-only baseline, $\mathrm{Cov}(\widehat{\Delta})$ contains the first-order term $J\,\Sigma_\varepsilon\, J^\top$, whereas in \method{} this term is eliminated by design (up to second-order effects). Any independent client-side randomness is further averaged down by $\sum_k w_k^2 \asymp 1/m$.
This explains why \method{} more reliably preserves the task-relevant update direction than a single-copy noise-only approach, particularly when $\mathrm{rank}(\mathcal S)\ll d$.

\subsection{Stability Benefits of Multi-Copy Aggregation}
\label{sec:mpu-stability}
A practical challenge in LLM unlearning is that local optimization can be unstable: the forget set may be small and high-variance, the unlearning objective may be sharp
near the current iterate, and common unlearning trainers can produce oscillatory updates when executed from a single initialization.
Multi-copy unlearning improves stability by averaging local updates over a small neighborhood around the clean server iterate.

Consider the common case where the local unlearning routine corresponds to (approximately) a gradient step on an unlearning objective $\mathcal{U}(\theta)$,
so that $\Delta^\star(\theta) \approx -\eta\,\nabla \mathcal{U}(\theta)$ for some local step size $\eta$.
Then the aggregated update becomes
\begin{equation}
-\eta \sum_{k=1}^m w_k\,\nabla \mathcal{U}\left(\theta_{r-1}+\epsilon_k^{\left(r\right)}\right)
\approx
-\eta \;\mathbb{E}_{\epsilon \sim \mathcal{Q}}\!\left[\nabla \mathcal{U}\left(\theta_{r-1}+\epsilon\right)\right],
\end{equation}
where $\mathcal{Q}$ is the discrete distribution placing mass $w_k$ on $\epsilon_k^{(r)}$.
Equivalently, this is the gradient of a locally averaged objective
$\widetilde{\mathcal{U}}(\theta) := \mathbb{E}_{\epsilon\sim\mathcal{Q}}[\mathcal{U}(\theta+\epsilon)]$.
Such local averaging reduces sensitivity to sharp directions near $\theta_{r-1}$ and makes the effective update direction less erratic, improving stability across rounds.

Importantly, averaging is performed with a \emph{centered, symmetric stencil} of perturbations:
our base noises satisfy $\sum_{k=1}^m \epsilon_{k,\ell}^{0,(r)}\equiv 0$ for every block $\ell$ (Eq.~\eqref{eq:pum-noise-construct}),
and the harmonic weights are chosen to cancel the resulting first-order perturbation in the local linearization (Section~\ref{sec:pum-aggregation}).
Expanding $\Delta^\star$ around $\theta_{r-1}$ yields
\begin{equation}
\Delta^\star\left(\theta_{r-1}+\epsilon_k^{\left(r\right)}\right)
=
\Delta^\star(\theta_{r-1})
+ J\,\epsilon_k^{(r)}
+ O\left(\left\|\epsilon_k^{\left(r\right)}\right\|^2\right),
\end{equation}
so harmonic aggregation cancels the $J\,\epsilon_k^{(r)}$ term to first order, leaving
\begin{equation}
\bar{\Delta}^{(r)}
=
\Delta^\star(\theta_{r-1})
+
O\!\left(\sum_{k=1}^m w_k \left\|\epsilon_k^{\left(r\right)}\right\|^2\right).
\end{equation}
Thus, for small perturbation scales, \method{} tracks the intended (noise-free) unlearning direction while still benefiting from the stabilizing effect of local averaging (the remaining higher-order term acts as a mild, local regularization).

In summary, beyond privacy, the multi-copy mechanism can be viewed as estimating a locally averaged unlearning update around $\theta_{r-1}$, which empirically improves
stability in settings where single-start unlearning is brittle.

\subsection{Limitations and Broader Impacts}
The MPU is discussed within the field of model unlearning, as the gradients in the unlearning process are typically small, and result in the first-order noise dominance, so that the harmonic denoising can perfectly alleviate the noise influence. Future works can discuss whether higher-order noise influences can be alleviated so that MPU can be generalized to general collaborative learning cases.

\newpage
\section{Implementation Details and Supplementary Experiments}
\label{Details}

This section provides supplementary materials to support a deeper understanding of \method. It includes detailed descriptions of the benchmarks, evaluation metrics, models, and unlearning algorithms used in our framework, as well as comprehensive implementation details covering the experimental setup, hyperparameters, and prompt templates. We further report additional experimental results and discuss the limitations of \method.

\subsection{Benchmarks}
\label{Benchmarks}

\textbf{TOFU} (Task of Fictitious Unlearning), introduced by~\citet{maini2024tofu}, is a question–answer (QA) benchmark specifically designed to evaluate the unlearning capabilities of large language models. The dataset comprises QA pairs derived from synthetic autobiographies of 200 fictitious authors, with all content generated by GPT-4 to ensure exclusion from the pretraining corpora of existing LLMs. Each author profile contains 20 QA pairs covering biographical attributes such as name, birthplace, gender, birth year, literary genre, awards, and parental occupations, with book titles seeded from the Goodreads Books dataset to enhance topical diversity.

\textbf{MUSE} (Machine Unlearning Six-Way Evaluation)~\citep{shi2024muse} is a comprehensive benchmark designed to assess the effectiveness and practicality of machine unlearning algorithms for language models. Moving beyond narrow, task-specific evaluations, MUSE incorporates six distinct criteria reflecting the needs of both data owners and model deployers. From the data owner's perspective, it measures verbatim memorization, knowledge memorization, and privacy leakage (assessed via membership inference attacks). For deployers, it evaluates the algorithm's ability to preserve general model utility, scale to large forget sets, and sustainably handle sequential unlearning requests. Evaluated on unlearning Harry Potter books and news articles, MUSE reveals that while current algorithms can reduce memorization, they struggle significantly with preventing privacy leakage, maintaining model utility, and scaling effectively.

\subsection{Metrics}
\label{app:Metrics}
For clarity, we categorize the evaluation metrics into three types, summarized below.
\subsubsection{Memorization Metrics}
\label{metrics:mem}
These metrics measure the degree to which information from the training data remains encoded in the model after unlearning.
\paragraph{Probability} We quantify the model’s confidence in generating correct answers by measuring the conditional probability $P(a \mid q)$ assigned to the ground-truth answer $a$ given the question $q$, evaluated on \textbf{Retain Set}. The resulting score is reported as a normalized probability in $[0,1]$. Following standard practice~\citep{cho-etal-2014-properties}, we normalize for answer length by exponentiating the sequence probability by $1 / |a|$, as formalized in Equation~\ref{eq:P_Retain}:
\begin{equation}
\label{eq:P_Retain}
    P_{\text{retain}}(x) = P(a \mid q)^{1 / |a|}.
\end{equation}
    
For the \textbf{Real Authors} and \textbf{World Facts} subsets, we evaluate model performance using a relative probability formulation in a multiple-choice setting:
\begin{equation}
        P_{\text{real/world}}(x) = \frac{P(a_1 \mid q)}{\sum_{i=1}^{n} P(a_i \mid q)},
\end{equation}
where $q$ denotes a multiple-choice question with candidate answers $\{a_1, \dots, a_n\}$, and $a_1$ corresponds to the ground-truth correct answer. This formulation measures the probability mass assigned to the correct option relative to all candidate choices.
    
\paragraph{Recall-Oriented Understudy for Gisting Evaluation (ROUGE)}
We adopt ROUGE to quantify the overlap between model-generated answers and the ground-truth responses. In particular, we report ROUGE-L recall~\citep{lin2004rouge}, which measures similarity via the length of the longest common subsequence (LCS). This metric provides a robust estimate of answer correctness in QA settings by tolerating minor paraphrasing and surface-form variations.

\begin{equation}
\text{ROUGE}_L(x) = \frac{\text{LCS}(a, \hat{a})}{|a|},
\end{equation}
where $\hat{a}$ is the generated answer, and $\text{LCS}(a, \hat{a})$ is the LCS length between $a$ and $\hat{a}$.

\paragraph{Truth Ratio}
For a given question, we define the \emph{truth ratio} $R_{\text{Truth}}$ to approximate the relative likelihood assigned by the model to correct versus incorrect answers. Since the model is fine-tuned on a specific canonical phrasing of the ground-truth answer, the corresponding probability may be artificially inflated compared to alternative but semantically equivalent formulations. To mitigate this bias, we evaluate the probability of a paraphrased version of the correct answer rather than the original ground truth.

Likewise, instead of contrasting against a single incorrect response, we consider a set of syntactically similar but factually incorrect answers and compute their average probability. This design yields a more stable and representative estimate of the model’s preference for incorrect information.

Formally, let $\tilde{a}$ denote a paraphrased correct answer to question $q$, and let $\tilde{x} = [q, \tilde{a}]$. Let $A_{\mathrm{err}}$ be a set of incorrect answers $\{a_{\mathrm{err}}\}$, constructed by preserving the general textual structure of $\tilde{a}$ while introducing factual errors. The truth ratio $R_{\text{Truth}}$ is then defined as:

\begin{equation}
    R_{\text{truth}}(x) = \frac{\frac{1}{|A_{\text{err}}|} \sum_{a_{\text{err}} \in A_{\text{err}}} P(a_{\text{err}} \mid q)^{1 / |a_{\text{err}}|}}{P(\tilde{a} \mid q)^{1 / |\tilde{a}|}}.
\end{equation}

Additionally, as shown in Equation~\ref{eq:R_adj}, we normalize and rescale the metric to ensure that all values lie within the interval $[0,1]$, with larger values indicating better model performance.

\begin{equation}
    \label{eq:R_adj}
    R_{\text{adjusted}}(x) = \max(0, 1 - R_{\text{Truth}}(x)).
\end{equation}

\paragraph{Verbatim Memorization (Data Extraction)} 
To evaluate whether a model inappropriately extracts or replicates exact details from a removed dataset, the MUSE benchmark utilizes the Verbatim Memorization ($VerbMem$) metric. This is measured by prompting the model $f$ with the first $l$ tokens of a sequence from the forget set, denoted as $x_{[:l]}$. The model's generated continuation, $f(x_{[:l]})$, is then compared to the actual ground-truth continuation, $x_{[l+1:]}$, using the ROUGE-L F1 score. The overall metric is calculated by averaging these scores across the entire forget set, formalized as:
$$VerbMem(f, \mathcal{D}) := \frac{1}{|\mathcal{D}_{forget}|} \sum_{x \in \mathcal{D}_{forget}} ROUGE(f(x_{[:l]}), x_{[l+1:]})$$

\subsubsection{Privacy Metrics}
\label{metrics:pri}
These metrics evaluate whether sensitive information from the forget set can still be inferred or extracted from the model after unlearning. We note that such metrics often rely on idealized assumptions, such as access to perfectly i.i.d.\ holdout samples or an oracle retain model, which may limit their applicability in practical deployment scenarios.
\paragraph{Membership Inference Leakage}
We could evaluate privacy leakage through membership inference attacks (MIAs), which assess a model’s tendency to memorize training data. Specifically, MIAs test whether an adversary can distinguish between examples drawn from the forget set $\mathcal{D}_{\mathrm{forget}}$ (members) and unseen examples from a holdout set $\mathcal{D}_{\mathrm{holdout}}$ (non-members), based on model confidence or loss statistics.

Ideally, a model that has not been trained on $\mathcal{D}_{\mathrm{forget}}$ should yield an AUC of $0.5$, indicating indistinguishability between member and non-member samples. 
In practice, however, constructing perfectly i.i.d.\ holdout splits is challenging, and even retrained models may exhibit nontrivial membership signals. 
Accordingly, following prior benchmarks such as MUSE, we calibrate membership inference results using the AUC score of a retrained reference model, rather than relying on the absolute AUC value alone.

Unlearning generally increases the loss on forgotten samples, but privacy leakage may still arise in two failure modes: 
\textit{\textbf{(i)}} \textbf{under-unlearning}, where the loss increase is insufficient, and membership information remains detectable; and 
\textit{\textbf{(ii)}} \textbf{over-unlearning}, where the loss becomes abnormally large, again yielding a distinguishable signal.
Both cases induce separable loss distributions between $\mathcal{D}_{\mathrm{forget}}$ and $\mathcal{D}_{\mathrm{holdout}}$.

To quantify this effect, we compare the AUC-ROC achieved by the unlearned model $f_{\mathrm{unlearn}}$ against that of a retrained reference model $f_{\mathrm{retrain}}$, and define the relative privacy leakage as
\begin{equation}
\mathrm{PrivLeak}
\coloneqq
\frac{
\mathrm{AUC}(f_{\mathrm{Unlearn}};\mathcal{D}_{\mathrm{Forget}},\mathcal{D}_{\mathrm{Holdout}})
- \mathrm{AUC}(f_{\mathrm{Retrain}};\mathcal{D}_{\mathrm{Forget}},\mathcal{D}_{\mathrm{Holdout}})
}{\mathrm{AUC}(f_{\mathrm{Retrain}};\mathcal{D}_{\mathrm{Forget}},\mathcal{D}_{\mathrm{Holdout}})}.
\end{equation}

A well-behaved unlearning algorithm should yield a $\mathrm{PrivLeak}$ value close to zero, indicating privacy leakage comparable to retraining. 
In contrast, under-unlearning and over-unlearning result in large negative and positive deviations, respectively, reflecting increased membership distinguishability.

\paragraph{Forget Quality}
In our setting, let $F_U(x)$ and $F_R(x)$ denote the empirical cumulative distribution functions (CDFs) of a chosen privacy-related statistic (e.g., Truth Ratio) computed from the unlearned and retained models, based on $n$ and $m$ samples, respectively. We employ the Kolmogorov--Smirnov (KS) test to quantify the discrepancy between these two distributions, with the test statistic defined as
\begin{equation}
    D_{n,m} = \sup_{x} \left| F_U(x) - F_R(x) \right|.
\end{equation}
This statistic measures the maximum deviation between the two empirical CDFs, providing a non-parametric assessment of the distributional shift induced by the unlearning procedure.

Under the null hypothesis that the two sample sets are drawn from the same underlying distribution, the hypothesis is rejected at significance level $\alpha$ if
\begin{equation}
    D_{n,m} > c(\alpha)\sqrt{\frac{n + m}{nm}},
\end{equation}
where $c(\alpha)$ is the KS critical value given by
\begin{equation}
    c(\alpha) = \sqrt{-\frac{1}{2} \ln\left(\frac{\alpha}{2}\right)}.
\end{equation}

We define the corresponding $p$-value as the smallest significance level $\alpha$ at which the null hypothesis can be rejected:
\begin{equation}
    Q_{\text{forget}} = p = \min \left\{ \alpha \,\middle|\, D_{n,m} > c(\alpha)\sqrt{\frac{n + m}{nm}} \right\}.
\end{equation}

Consequently, \textbf{Forget Quality} quantifies the statistical confidence with which we can assert that the distributions of Truth Ratio values over the forget set differ between the unlearned and retained models.

\subsubsection{Utility Metrics}
\label{metrics:utl}
The objective of unlearning is to effectively remove the influence of targeted data while preserving the model’s performance on non-forget data. 
Utility metrics evaluate whether the unlearned model maintains its capabilities on tasks beyond the retain set, thereby ensuring that unlearning does not degrade general performance on real-world data distributions.

\paragraph{Model Utility}
Model Utility (MU) measures the retained performance of a model after unlearning, covering both the closely related retain set and broader general-knowledge tasks.

Following the TOFU benchmark protocol, MU is computed as the harmonic mean of nine metrics spanning three data levels: \textbf{Retain Set}, \textbf{Real Authors}, and \textbf{World Facts}. At each level, three metrics are evaluated: \textbf{Probability}, \textbf{ROUGE}, and \textbf{Truth Ratio}, to ensure balanced assessment across memorization, semantic accuracy, and factual correctness.
\begin{equation}
    U_{\text{model}} = \frac{9}{\sum_{m \in M} \frac{1}{m}},
\end{equation}
where $M$ denotes the set of all evaluated metrics.

\subsection{Unlearning Algorithms}
\label{Unlearning}
In this section, we introduce the seven unlearning algorithms instantiated within \method as modular optimization objectives.

\paragraph{Notation}
Let $\mathcal{D}_{\mathrm{forget}}$ and $\mathcal{D}_{\mathrm{retain}}$ denote the forget and retain sets, respectively, and let $f_{\theta}$ be the model being updated during unlearning. For an input--output (prompt--completion) pair $(x,y)$ under a causal LM, we define
\begin{equation}
\log p_{\theta}(y\mid x)
= \sum_{t=1}^{|y|}\log p_{\theta}(y_t\mid y_{<t},x),
\quad
\ell_{\mathrm{CE}}(y\mid x; f_{\theta})
= -\log p_{\theta}(y\mid x).
\end{equation}
Unless otherwise stated, we optionally include a retain regularizer:
\begin{equation}
\mathcal{L}_{\mathrm{retain}}(\theta)
=
\mathbb{E}_{(x,y)\sim\mathcal{D}_{\mathrm{retain}}}
\ell_{\mathrm{CE}}(y\mid x; f_{\theta}),
\end{equation}
and optimize $\min_{\theta}\ \mathcal{L}_{\mathrm{forget}}(\theta) + \alpha\,\mathcal{L}_{\mathrm{retain}}(\theta)$, where $\alpha\ge 0$ controls the forgetting--retention trade-off.

\subsubsection{GradAscent~\citep{jang2023knowledge}}
Gradient Ascent (GradAscent) directly ``reverses'' standard likelihood training on the forget set by maximizing the cross-entropy (equivalently, minimizing the log-likelihood). Under the minimization convention, the forget objective is
\begin{equation}
\mathcal{L}_{\mathrm{forget}}^{\mathrm{GA}}(\theta)
=
-\mathbb{E}_{(x,y_{\mathrm{f}})\sim\mathcal{D}_{\mathrm{forget}}}
\ell_{\mathrm{CE}}(y_{\mathrm{f}}\mid x; f_{\theta})
=
\mathbb{E}_{(x,y_{\mathrm{f}})\sim\mathcal{D}_{\mathrm{forget}}}
\log p_{\theta}(y_{\mathrm{f}}\mid x).
\end{equation}
This objective is effective at reducing the model likelihood on targeted samples, but without an explicit retention constraint, it may cause collateral degradation on non-forgotten behavior.
\subsubsection{GradDiff~\citep{liu2022continual}}
GradDiff augments GradAscent with a retain loss to explicitly preserve utility on non-forgotten data. A common instantiation writes the full objective as
\begin{equation}
\min_{\theta}\ 
\mathcal{L}^{\mathrm{GradDiff}}(\theta)
=
-\lambda_{\mathrm{f}}\,
\mathbb{E}_{(x,y_{\mathrm{f}})\sim\mathcal{D}_{\mathrm{forget}}}
\ell_{\mathrm{CE}}(y_{\mathrm{f}}\mid x; f_{\theta})
+
\lambda_{\mathrm{r}}\,
\mathbb{E}_{(x,y)\sim\mathcal{D}_{\mathrm{retain}}}
\ell_{\mathrm{CE}}(y\mid x; f_{\theta}),
\end{equation}
where $\lambda_{\mathrm{f}},\lambda_{\mathrm{r}}>0$ trade off forgetting strength and retention fidelity.

\subsubsection{DPO~\citep{rafailov2023direct}}
Direct Preference Optimization (DPO) is a preference-learning objective originally proposed for alignment. Given a preference dataset $\mathcal{D}_{\mathrm{pref}}$ of triples $(x,y_w,y_l)$ (preferred $y_w$ vs.\ dispreferred $y_l$) and a reference model $f_{\mathrm{ref}}$, DPO optimizes
\begin{equation}
\mathcal{L}^{\mathrm{DPO}}(\theta)
=
-\mathbb{E}_{(x,y_w,y_l)\sim\mathcal{D}_{\mathrm{pref}}}
\log \sigma\!\left(
\beta \log\frac{p(y_w\mid x; f_{\theta})}{p(y_w\mid x; f_{\mathrm{ref}})}
-
\beta \log\frac{p(y_l\mid x; f_{\theta})}{p(y_l\mid x; f_{\mathrm{ref}})}
\right),
\end{equation}
where $\beta>0$ controls the sharpness of preference separation. In unlearning-style instantiations, one can construct preference pairs so that generations containing forget targets are treated as dispreferred.

\subsubsection{NPO~\citep{zhang2024negative}}
Negative Preference Optimization (NPO) reformulates unlearning as a bounded, alignment-inspired objective that discourages the forget targets relative to a frozen reference model. Let $f_{\mathrm{ref}}$ denote the reference model (typically the pre-unlearning checkpoint) and $\sigma(\cdot)$ the sigmoid. The NPO's forget loss is
\begin{equation}
\mathcal{L}_{\mathrm{forget}}^{\mathrm{NPO}}(\theta)
=
-\frac{2}{\beta}
\mathbb{E}_{(x,y_{\mathrm{f}})\sim\mathcal{D}_{\mathrm{forget}}}
\log \sigma\!\left(
-\beta\log\frac{p(y_{\mathrm{f}}\mid x; f_{\theta})}{p(y_{\mathrm{f}}\mid x; f_{\mathrm{ref}})}
\right),
\end{equation}
where $\beta>0$ is a temperature parameter. In practice, NPO is often combined with the retain regularizer $\alpha\,\mathcal{L}_{\mathrm{retain}}(\theta)$ to preserve utility.

\subsubsection{SimNPO~\citep{fan2024simplicity}}
SimNPO removes the explicit reference model and introduces a margin/offset term (denoted $\gamma$ in the original paper) while retaining the stabilized log-sigmoid form:
\begin{equation}
\mathcal{L}_{\mathrm{forget}}^{\mathrm{SimNPO}}(\theta)
=
-\frac{2}{\beta}
\mathbb{E}_{(x,y_{\mathrm{f}})\sim\mathcal{D}_{\mathrm{forget}}}
\log \sigma\!\left(
-\frac{\beta}{|y_{\mathrm{f}}|}\log p(y_{\mathrm{f}}\mid x; f_{\theta}) - \gamma
\right),
\end{equation}
where $|y_{\mathrm{f}}|$ is the target sequence length (used for normalization), and $\gamma\ge 0$ calibrates the separation threshold. As with other unlearning objectives, SimNPO is commonly paired with $\alpha\,\mathcal{L}_{\mathrm{retain}}(\theta)$.

\subsubsection{UnDIAL~\citep{dong2025undial}}
UnDIAL (Unlearning via Self-Distillation on Adjusted Logits) stabilizes unlearning by defining an explicit, fixed target distribution and distilling the model toward it. Let $f_{\mathrm{orig}}$ be the frozen pre-unlearning model. For a forget example $(x,y_{\mathrm{f}})$ and each position $t$, let $\mathbf{z}^{\mathrm{orig}}_t$ be the teacher logits and $\mathbf{e}_{y_{\mathrm{f},t}}$ the one-hot vector of the target token. UnDIAL constructs adjusted logits and a target distribution:
\begin{equation}
\mathbf{z}^{\mathrm{adj}}_t
=
\mathbf{z}^{\mathrm{orig}}_t - \gamma_{\mathrm{UD}}\mathbf{e}_{y_{\mathrm{f},t}},
\quad
\mathbf{p}^{\mathrm{adj}}_t
=
\mathrm{softmax}\!\left(\mathbf{z}^{\mathrm{adj}}_t\right),
\end{equation}
where $\gamma_{\mathrm{UD}}>0$ controls the strength of demoting the memorized token. The self-distillation (cross-entropy) unlearning objective is
\begin{equation}
\mathcal{L}_{\mathrm{forget}}^{\mathrm{UnDIAL}}(\theta)
=
\mathbb{E}_{(x,y_{\mathrm{f}})\sim\mathcal{D}_{\mathrm{forget}}}
\left[
\sum_{t=1}^{|y_{\mathrm{f}}|}
\mathcal{H}\!\left(
\mathbf{p}^{\mathrm{adj}}_t,\ p(\cdot\mid x,y_{\mathrm{f},<t}; f_{\theta})
\right)
\right],
\end{equation}
where $\mathcal{H}(\cdot,\cdot)$ is the cross-entropy between the fixed adjusted distribution and the student model’s predictive distribution. Optionally, UnDIAL can also be combined with $\alpha\,\mathcal{L}_{\mathrm{retain}}(\theta)$ for utility preservation.

\subsubsection{SatImp~\citep{yang2025exploring}}
Saturated Importance (SatImp) is a token-wise soft reweighting strategy for enhancing unlearning. It fits into the general token-wise reweighted objective
\begin{equation}
\mathcal{L}_{\mathrm{forget}}(\theta)
=
\mathbb{E}_{(x,y_{\mathrm{f}})\sim\mathcal{D}_{\mathrm{forget}}}
\sum_{k=1}^{|y_{\mathrm{f}}|}
w_{x,y_{\mathrm{f}},k}\,
\log p\!\left(y_{\mathrm{f},k}\mid y_{\mathrm{f},<k},x; f_{\theta}\right).
\end{equation}
SatImp defines the weight function as
\begin{equation}
w^{\mathrm{satimp}}_{x,y_{\mathrm{f}},k}
=
p\!\left(y_{\mathrm{f},k}\mid y_{\mathrm{f},<k},x; f_{\theta}\right)^{\beta_1}
\cdot
\Bigl(1-p\!\left(y_{\mathrm{f},k}\mid y_{\mathrm{f},<k},x; f_{\theta}\right)\Bigr)^{\beta_2},
\end{equation}
where $\beta_1,\beta_2\ge 0$ control the smoothness and shape of the weight distribution. As with other objectives, SatImp is commonly paired with $\alpha\,\mathcal{L}_{\mathrm{retain}}(\theta)$.

\subsection{Experimental Settings}
\label{Experimental}

\subsubsection{Testbed}
\label{Testbed}

\paragraph{Hardware Configuration}
All experiments are conducted on two Elastic Compute Service (ECS) instances.

For experiments using \textbf{Llama-3.2-1B-Instruct} and \textbf{Qwen2.5-1.5B-Instruct}, we employ an instance equipped with
an Intel Xeon Gold 6462C CPU (16 available cores), 128~GB RAM, 512~GB of available
disk space, and an NVIDIA L20 GPU with 48~GB memory.
For experiments using \textbf{Llama-3.2-3B-Instruct}, \textbf{Llama-3.1-8B-Instruct}, and \textbf{Qwen2.5-3B-Instruct}, we use an instance equipped with
an Intel Xeon Platinum 8469C CPU (24 available cores), 128~GB RAM, 512~GB of
available disk space, and an NVIDIA HGX H20 GPU with 96~GB memory.

For reproducibility and consistent performance, we recommend the
\texttt{ecs.gn8is.4xlarge} and \texttt{ecs.gn8v.6xlarge} instance types on Alibaba Cloud.

\paragraph{Software Environment}
All experiments are performed on Ubuntu~22.04.5 LTS with NVIDIA driver version~570.195.03
and CUDA~12.8. \method framework is implemented in Python~3.12.12 using PyTorch~2.9.1, and is developed on top of \textbf{OpenUnlearning} framework. All unlearning algorithms are instantiated using the implementations integrated within
OpenUnlearning. The baseline unlearning models are initialized from the publicly released checkpoints provided by OpenUnlearning on Hugging Face:~\url{https://huggingface.co/open-unlearning}.

\begin{table*}[!htp]
    \caption{Default configurations and hyperparameters used in our experiments.}
    \centering
    \begin{adjustbox}{width=\linewidth}
    \begin{tabular}{cccc}
        \toprule
        \textbf{Notation} & \textbf{Hyperparameter} & \textbf{Value} & \textbf{Explanation} \\
        \midrule
        \rowcolor{mpucolor1!15}
        \multicolumn{4}{c}{\textbf{General Training Configuration}} \\
        \midrule
        -- & Random Seed & 0 & Seed for reproducibility \\
        -- & Precision & \texttt{bfloat16} & Numerical precision used for training \\
        -- & Attention Backend & FlashAttention-2 & Optimized attention implementation \\
        -- & Optimizer & \texttt{paged\_adamw\_32bit} & Memory-efficient AdamW variant \\
        $\lambda_{\mathrm{wd}}$ & Weight Decay & 0.01 & $\ell_2$ regularization coefficient \\
        $B$ & Batch Size & 32 & Number of samples per batch \\
        \midrule

        \rowcolor{mpucolor1!15}
        \multicolumn{4}{c}{\textbf{Server-Side  Configuration}} \\
        \midrule
        $R$ & Global Communication Rounds & See Table~\ref{tab:pum_round_epoch} & Total number of server--client rounds \\
        $m$ & Copy Number & 2 & Number of perturbed model copies per round \\
        $\kappa$ & Noise Level & 0.01 & Global multiplier for block-wise noise scales $\{\sigma_\ell\}$ \\

        $\alpha_k$ & Noise Scaling & $1+\frac{k-1}{m-1}$ & Used to match noise magnitude in the single-copy \textit{Noised} baseline and \method \\
        $\eta_{\mathrm{srv}}$ & Server Step Size & 1.0 & Step size for applying the aggregated update $\bar{\Delta}^{(r)}$ \\
        \midrule

        \rowcolor{mpucolor1!15}
        \multicolumn{4}{c}{\textbf{Client-Side Local Unlearning Configuration}} \\
        \midrule
        -- & Unlearning Algorithms & See Appendix~\ref{Unlearning} & Unlearning trainer executed on each published copy \\
        $E_{\mathrm{cli}}$ & Local Client Epoch & See Table~\ref{tab:pum_round_epoch} & Local unlearning epochs per copy per round \\
        $\eta_{\mathrm{cli}}$ & Client Learning Rate & $1\times10^{-5}$ & Learning rate used by the chosen client unlearning trainer \\
        \bottomrule
    \end{tabular}
    \end{adjustbox}
    \label{tab:fl_hyperparams}
\end{table*}

\subsubsection{Hyperparameters}
\label{Hyperparameters}
The key hyperparameters used in our experiments across different domains are summarized in Table~\ref{tab:fl_hyperparams}. 

We configure the training schedule by selecting the number of global communication rounds (\textbf{R}) and local client epochs per round (\textbf{E}) based on the client-side unlearning method. Specifically, \textbf{GradAscent}, \textbf{GradDiff}, and \textbf{UnDIAL} use a single-round schedule with $(R,E)=(1,10)$; \textbf{SimNPO}, \textbf{DPO}, and \textbf{SatImp} use a more communication-intensive schedule with $(R,E)=(10,1)$; and \textbf{NPO} adopts an intermediate configuration with $(R,E)=(2,5)$. For any unspecified method, we default to $(R,E)=(1,10)$.

\subsubsection{Prompt Template}
\label{Prompt}
The chat template is enabled during training. User queries and assistant responses follow the format, as illustrated in Figure~\ref{pmp3} and Figure~\ref{pmp4}.

\begin{figure*}[!htp]
\centering
\begin{minipage}{\textwidth}
\begin{pmt}{Llama-3.2 Series Template}
\textbf{System Prompt}: You are a helpful assistant.\\
\textbf{System Prompt with Special Tokens}: \texttt{<|begin\_of\_text|><|start\_header\_id|>}\\system\texttt{<|end\_header\_id|>}\texttt{\textbackslash n\textbackslash n}You are a helpful assistant.\texttt{<|eot\_id|>}\\
\textbf{User Start Tag}: \texttt{<|start\_header\_id|>}user\texttt{<|end\_header\_id|>}\textbackslash n\textbackslash n\\
\textbf{User End Tag}: \texttt{<|eot\_id|>}\\
\textbf{Asst Start Tag}: \texttt{<|start\_header\_id|>}assistant\texttt{<|end\_header\_id|>}\textbackslash n\textbackslash n\\
\textbf{Asst End Tag}: \texttt{<|eot\_id|>}\\
\textbf{Data String}: \texttt{10 Apr 2025}
\end{pmt}
\end{minipage}
\caption{Prompt template for Llama-3.2 series.}
\label{pmp3}
\end{figure*}

\begin{figure*}[!htp]
\centering
\begin{minipage}{\textwidth}
\begin{pmt}{Qwen Series Template}
\textbf{System Prompt}: You are a helpful assistant.\\
\textbf{System Prompt with Special Tokens}: 
\texttt{<|im\_start|>system\textbackslash n}You are a helpful assistant.\texttt{<|im\_end|>\textbackslash n}\\
\textbf{User Start Tag}: 
\texttt{<|im\_start|>user\textbackslash n}\\
\textbf{User End Tag}: 
\texttt{<|im\_end|>\textbackslash n}\\
\textbf{Asst Start Tag}: 
\texttt{<|im\_start|>assistant\textbackslash n}\\
\textbf{Asst End Tag}: 
\texttt{<|im\_end|>\textbackslash n}\\
\end{pmt}
\end{minipage}
\caption{Prompt template for Qwen2.5 series.}
\label{pmp4}
\end{figure*}

\newpage
\subsection{Supplementary Experimental Results}
\label{Supplementary Experiments}

\begin{table*}[!htp]
    \caption{
    Performance comparison of different unlearning algorithms using the \textbf{Qwen2.5-1.5B}, \textbf{Llama-3.2-3B}, and \textbf{Qwen2.5-3B} models on the TOFU benchmark (Split99). Results are reported under three settings: \textbf{Clean}, a noise-free baseline; \textbf{Noised}, a single-copy noise baseline; and \method, using $m{=}2$ copies with noise level $\kappa{=}0.01$. Higher values indicate better performance for Forget Quality, Forget Truth Ratio, and Model Utility, while values of PrivLeak closer to $0$ are preferred.
    }
    \centering
    \begin{adjustbox}{width=\linewidth}
    \begin{tabular}{ccccccccccccc}
        \toprule
        \multirow{2}[2]{*}{\textbf{\shortstack{Unlearning\\Algorithms}}}
        & \multicolumn{3}{c}{\textbf{Forget Quality} $\uparrow$}
        & \multicolumn{3}{c}{\textbf{Forget Truth Ratio} $\uparrow$}
        & \multicolumn{3}{c}{\textbf{Model Utility} $\uparrow$}
        & \multicolumn{3}{c}{\textbf{$\text{PrivLeak}$}} \\
        \cmidrule(lr){2-4}\cmidrule(lr){5-7}\cmidrule(lr){8-10}\cmidrule(lr){11-13}
        & \textsc{Clean} & \textsc{Noised} & \textsc{MPU}
        & \textsc{Clean} & \textsc{Noised} & \textsc{MPU}
        & \textsc{Clean} & \textsc{Noised} & \textsc{MPU}
        & \textsc{Clean} & \textsc{Noised} & \textsc{MPU} \\
        \midrule
        \multicolumn{13}{c}{\cellcolor{mpucolor1!15}\textbf{Qwen2.5-1.5B-Instruct}} \\
        \midrule
        
        \textsc{GradAscent~\textcolor{gray}{[ACL 2023]}} & 1.43e-2 & 5.41e-2 & \colorbox{mpucolor1!15}{\textbf{0.990}} & 0.516 & 0.536 & \colorbox{mpucolor1!15}{\textbf{0.740}} & 9.75e-4 & 6.76e-4 & \colorbox{mpucolor1!15}{\textbf{0.346}} & \colorbox{mpucolor1!15}{\textbf{13.7}} & 62.1 & -21.7 \\
        
        \textsc{GradDiff~\textcolor{gray}{[PMLR 2022]}} & 6.58e-5 & 6.61e-6 & \colorbox{mpucolor1!15}{\textbf{0.990}} & 0.622 & 0.586 & \colorbox{mpucolor1!15}{\textbf{0.740}} & 0.297 & 0.322 & \colorbox{mpucolor1!15}{\textbf{0.346}} & 80.9 & 76.7 & \colorbox{mpucolor1!15}{\textbf{-21.6}} \\

        \textsc{DPO~\textcolor{gray}{[NeurIPS 2023]}} & 0.266 & 0.405 & \colorbox{mpucolor1!15}{\textbf{0.990}} & \colorbox{mpucolor1!15}{\textbf{0.757}} & 0.750 & 0.738 & 0.305 & 0.335 & \colorbox{mpucolor1!15}{\textbf{0.347}} & 86.0 & 87.5 & \colorbox{mpucolor1!15}{\textbf{-21.7}} \\
        
        \textsc{NPO~\textcolor{gray}{[COLM 2024]}} & 1.43e-2 & 2.86e-2 & \colorbox{mpucolor1!15}{\textbf{0.919}} & 0.708 & 0.709 & \colorbox{mpucolor1!15}{\textbf{0.739}} & 0.305 & 0.318 & \colorbox{mpucolor1!15}{\textbf{0.346}} & 89.4 & 90.2 & \colorbox{mpucolor1!15}{\textbf{-21.9}} \\
        
        \textsc{SimNPO~\textcolor{gray}{[NeurIPS 2025]}} & \colorbox{mpucolor1!15}{0.990} & \colorbox{mpucolor1!15}{0.990} & \colorbox{mpucolor1!15}{\textbf{0.990}} & 0.738 & \colorbox{mpucolor1!15}{\textbf{0.742}} & 0.738 & 0.349 & \colorbox{mpucolor1!15}{\textbf{0.350}} & 0.347 & -18.7 & \colorbox{mpucolor1!15}{\textbf{-17.4}} & -22.0 \\
        
        \textsc{UnDIAL~\textcolor{gray}{[NAACL 2025]}} & \colorbox{mpucolor1!15}{\textbf{1.000}} & \colorbox{mpucolor1!15}{1.000} & 0.990 & \colorbox{mpucolor1!15}{\textbf{0.769}} & 0.767 & 0.740 & 0.338 & 0.333 & \colorbox{mpucolor1!15}{\textbf{0.346}} & \colorbox{mpucolor1!15}{\textbf{12.3}} & 14.1 & -21.8 \\
        
        \textsc{SatImp~\textcolor{gray}{[ICML2025]}} & 0.919 & 0.919 & \colorbox{mpucolor1!15}{\textbf{0.990}} & 0.738 & \colorbox{mpucolor1!15}{0.740} & \colorbox{mpucolor1!15}{\textbf{0.740}} & \colorbox{mpucolor1!15}{\textbf{0.349}} & 0.348 & 0.346 & -21.8 & \colorbox{mpucolor1!15}{\textbf{-20.5}} & -21.5 \\
        
        \midrule
        \multicolumn{13}{c}{\cellcolor{mpucolor1!15}\textbf{Llama-3.2-3B-Instruct}} \\
        \midrule
        
        \textsc{GradAscent~\textcolor{gray}{[ACL 2023]}} & \colorbox{mpucolor1!15}{\textbf{3.06e-11}} & 5.91e-14 & 4.47e-16 & \colorbox{mpucolor1!15}{\textbf{0.187}} & 0.157 & 0.142 & \colorbox{mpucolor1!15}{\textbf{8.23e-5}} & 7.71e-5 & 2.28e-5 & 50.6 & 53.0 & \colorbox{mpucolor1!15}{\textbf{45.9}} \\
        
        \textsc{GradDiff~\textcolor{gray}{[PMLR 2022]}} & \colorbox{mpucolor1!15}{5.41e-2} & 2.86e-2 & \colorbox{mpucolor1!15}{\textbf{5.41e-2}} & \colorbox{mpucolor1!15}{\textbf{0.504}} & 0.476 & 0.485 & \colorbox{mpucolor1!15}{\textbf{0.419}} & 0.384 & 0.409 & 108.0 & \colorbox{mpucolor1!15}{\textbf{106.0}} & 107.0 \\

        \textsc{DPO~\textcolor{gray}{[NeurIPS 2023]}} & 0.579 & \colorbox{mpucolor1!15}{\textbf{0.766}} & 0.579 & 0.685 & \colorbox{mpucolor1!15}{\textbf{0.691}} & 0.683 & \colorbox{mpucolor1!15}{\textbf{0.634}} & \colorbox{mpucolor1!15}{0.634} & 0.633 & -10.2 & -15.3 & \colorbox{mpucolor1!15}{\textbf{-3.95}} \\
        
        \textsc{NPO~\textcolor{gray}{[COLM 2024]}} & \colorbox{mpucolor1!15}{0.990} & \colorbox{mpucolor1!15}{0.990} & \colorbox{mpucolor1!15}{\textbf{0.990}} & 0.635 & \colorbox{mpucolor1!15}{\textbf{0.636}} & 0.634 & 0.650 & \colorbox{mpucolor1!15}{\textbf{0.653}} & 0.651 & 61.2 & 62.1 & \colorbox{mpucolor1!15}{\textbf{57.5}} \\
        
        \textsc{SimNPO~\textcolor{gray}{[NeurIPS 2025]}} & \colorbox{mpucolor1!15}{9.71e-2} & \colorbox{mpucolor1!15}{9.71e-2} & \colorbox{mpucolor1!15}{\textbf{9.71e-2}} & 0.550 & 0.550 & \colorbox{mpucolor1!15}{\textbf{0.554}} & 0.658 & \colorbox{mpucolor1!15}{\textbf{0.662}} & 0.656 & \colorbox{mpucolor1!15}{\textbf{-62.4}} & -72.3 & -64.3 \\
        
        \textsc{UnDIAL~\textcolor{gray}{[NAACL 2025]}} & 2.86e-2 & \colorbox{mpucolor1!15}{\textbf{5.41e-2}} & 2.86e-2 & \colorbox{mpucolor1!15}{\textbf{0.567}} & 0.566 & 0.563 & \colorbox{mpucolor1!15}{0.693} & 0.692 & \colorbox{mpucolor1!15}{\textbf{0.693}} & -72.0 & \colorbox{mpucolor1!15}{\textbf{-71.8}} & -73.3 \\
        
        \textsc{SatImp~\textcolor{gray}{[ICML2025]}} & 1.43e-2 & 1.43e-2 & \colorbox{mpucolor1!15}{\textbf{2.86e-2}} & 0.510 & 0.507 & \colorbox{mpucolor1!15}{\textbf{0.512}} & 0.659 & \colorbox{mpucolor1!15}{\textbf{0.661}} & 0.659 & \colorbox{mpucolor1!15}{\textbf{-99.7}} & -100.0 & -99.9 \\

        \midrule
        \multicolumn{13}{c}{\cellcolor{mpucolor1!15}\textbf{Qwen2.5-3B-Instruct}} \\
        \midrule
        
        \textsc{GradAscent~\textcolor{gray}{[ACL 2023]}} & \colorbox{mpucolor1!15}{0.579} & 1.86e-23 & \colorbox{mpucolor1!15}{\textbf{0.579}} & \colorbox{mpucolor1!15}{0.709} & 6.4e-20 & \colorbox{mpucolor1!15}{\textbf{0.709}} & \colorbox{mpucolor1!15}{\textbf{0.392}} & 0.000 & 0.391 & -28.7 & \colorbox{mpucolor1!15}{\textbf{8.57}} & -28.1  \\
        
        \textsc{GradDiff~\textcolor{gray}{[PMLR 2022]}} & \colorbox{mpucolor1!15}{0.766} & 3.02e-3 & \colorbox{mpucolor1!15}{\textbf{0.766}} & \colorbox{mpucolor1!15}{\textbf{0.712}} & 0.463 & 0.710 & 0.394 & 0.237 & \colorbox{mpucolor1!15}{\textbf{0.395}} & -28.0 & 67.7 & \colorbox{mpucolor1!15}{\textbf{-27.1}}  \\

        \textsc{DPO~\textcolor{gray}{[NeurIPS 2023]}} & \colorbox{mpucolor1!15}{0.766} & 0.919 & \colorbox{mpucolor1!15}{\textbf{0.766}} & 0.711 & \colorbox{mpucolor1!15}{\textbf{0.733}} & 0.709 & 0.397 & \colorbox{mpucolor1!15}{\textbf{0.412}} & 0.397 & -28.9 & 84.4 & \colorbox{mpucolor1!15}{\textbf{-28.8}}  \\
        
        \textsc{NPO~\textcolor{gray}{[COLM 2024]}} & \colorbox{mpucolor1!15}{0.766} & 0.266 & \colorbox{mpucolor1!15}{\textbf{0.766}} & 0.712 & \colorbox{mpucolor1!15}{\textbf{0.724}} & 0.711 & 0.398 & \colorbox{mpucolor1!15}{\textbf{0.420}} & 0.397 & \colorbox{mpucolor1!15}{-28.5} & 90.5 & \colorbox{mpucolor1!15}{\textbf{-28.5}} \\
        
        \textsc{SimNPO~\textcolor{gray}{[NeurIPS 2025]}} & 0.766 & \colorbox{mpucolor1!15}{\textbf{0.919}} & 0.766 & 0.712 & \colorbox{mpucolor1!15}{\textbf{0.726}} & 0.710 & \colorbox{mpucolor1!15}{\textbf{0.398}} & 0.388 & 0.397 & -28.5 & \colorbox{mpucolor1!15}{\textbf{-18.9}} & -28.6  \\
        
        \textsc{UnDIAL~\textcolor{gray}{[NAACL 2025]}} & 0.766 & \colorbox{mpucolor1!15}{\textbf{0.919}} & 0.766 & 0.710 & \colorbox{mpucolor1!15}{\textbf{0.734}} & 0.711 & 0.398 & \colorbox{mpucolor1!15}{\textbf{0.405}} & 0.396 & -28.1 & \colorbox{mpucolor1!15}{\textbf{4.17}} & -28.8  \\
        
        \textsc{SatImp~\textcolor{gray}{[ICML2025]}} & \colorbox{mpucolor1!15}{0.766} & \colorbox{mpucolor1!15}{0.766} & \colorbox{mpucolor1!15}{\textbf{0.766}} & 0.711 & \colorbox{mpucolor1!15}{\textbf{0.714}} & 0.712 & \colorbox{mpucolor1!15}{0.397} & 0.387 & \colorbox{mpucolor1!15}{\textbf{0.397}} & -28.9 & \colorbox{mpucolor1!15}{\textbf{-24.9}} & -28.7  \\
        
        \bottomrule
    \end{tabular}
    \end{adjustbox}
    \label{tab:pum_3b_comparison}
\end{table*}

\begin{table*}[!htp]
    \caption{
    Performance comparison of different unlearning algorithms using the \textbf{Llama-3.2-1B} model under \method on the TOFU benchmark (Split99), with varying noise levels $\kappa\in\{0,0.05,0.1\}$ and fixed perturbed copies $m{=}2$. Higher values indicate better performance for Forget Quality, Forget Truth Ratio, and Model Utility, while values of PrivLeak closer to $0$ are preferred.
    }
    \centering
    \begin{adjustbox}{width=\linewidth}
    \begin{tabular}{ccccccccccccccccc}
        \toprule
        \multirow{2}[2]{*}{\textbf{\shortstack{Unlearning\\Algorithms}}}
        & \multicolumn{4}{c}{\textbf{Forget Quality} $\uparrow$}
        & \multicolumn{4}{c}{\textbf{Forget Truth Ratio} $\uparrow$}
        & \multicolumn{4}{c}{\textbf{Model Utility} $\uparrow$}
        & \multicolumn{4}{c}{\textbf{PrivLeak}} \\
        \cmidrule(lr){2-5}\cmidrule(lr){6-9}\cmidrule(lr){10-13}\cmidrule(lr){14-17}
        & $\kappa{=}0$ & $\kappa{=}0.01$ & $\kappa{=}0.05$ & $\kappa{=}0.1$
        & $\kappa{=}0$ & $\kappa{=}0.01$ & $\kappa{=}0.05$ & $\kappa{=}0.1$
        & $\kappa{=}0$ & $\kappa{=}0.01$ & $\kappa{=}0.05$ & $\kappa{=}0.1$
        & $\kappa{=}0$ & $\kappa{=}0.01$ & $\kappa{=}0.05$ & $\kappa{=}0.1$ \\
        \midrule
        
        \textsc{GradAscent}
        & 6.76e-3 & 5.41e-2 & \colorbox{mpucolor1!15}{\textbf{0.579}} & 1.43e-2
        & 0.440 & 0.468 & \colorbox{mpucolor1!15}{\textbf{0.534}} & 0.430
        & 1.39e-4 & 2.31e-4 & \colorbox{mpucolor1!15}{\textbf{3.06e-4}} & 1.49e-4
        & 67.9 & 69.6 & \colorbox{mpucolor1!15}{\textbf{68.0}} & 69.1 \\
        
        \textsc{GradDiff}
        & \colorbox{mpucolor1!15}{\textbf{0.405}} & \colorbox{mpucolor1!15}{0.405} & 0.266 & 0.400
        & 0.546 & \colorbox{mpucolor1!15}{\textbf{0.547}} & 0.544 & 0.553
        & 0.468 & 0.464 & \colorbox{mpucolor1!15}{\textbf{0.474}} & 0.472
        & \colorbox{mpucolor1!15}{\textbf{75.3}} & 77.2 & 76.5 & 76.7 \\

        \textsc{DPO}
        & \colorbox{mpucolor1!15}{\textbf{0.266}} & \colorbox{mpucolor1!15}{0.266} & \colorbox{mpucolor1!15}{0.266} & 0.165
        & 0.640 & \colorbox{mpucolor1!15}{\textbf{0.641}} & \colorbox{mpucolor1!15}{0.641} & 0.637
        & 0.592 & 0.591 & \colorbox{mpucolor1!15}{\textbf{0.594}} & \colorbox{mpucolor1!15}{0.594}
        & \colorbox{mpucolor1!15}{\textbf{-30.1}} & -28.9 & -29.2 & -28.7 \\
        
        \textsc{NPO}
        & \colorbox{mpucolor1!15}{\textbf{0.919}} & \colorbox{mpucolor1!15}{0.919} & \colorbox{mpucolor1!15}{0.919} & \colorbox{mpucolor1!15}{0.919}
        & 0.621 & 0.628 & \colorbox{mpucolor1!15}{\textbf{0.624}} & 0.618
        & 0.595 & \colorbox{mpucolor1!15}{\textbf{0.597}} & 0.599 & 0.596
        & 27.7 & 28.2 & \colorbox{mpucolor1!15}{\textbf{26.8}} & 28.1 \\
        
        \textsc{SimNPO}
        & 5.41e-2 & \colorbox{mpucolor1!15}{\textbf{9.71e-2}} & \colorbox{mpucolor1!15}{9.71e-2} & 5.41e-2
        & 0.520 & 0.525 & \colorbox{mpucolor1!15}{\textbf{0.526}} & 0.525
        & 0.596 & \colorbox{mpucolor1!15}{\textbf{0.598}} & \colorbox{mpucolor1!15}{0.598} & 0.597
        & \colorbox{mpucolor1!15}{\textbf{-72.4}} & -71.8 & -72.1 & -71.7 \\

        \textsc{UnDIAL}
        & \colorbox{mpucolor1!15}{\textbf{1.43e-2}} & \colorbox{mpucolor1!15}{1.43e-2} & \colorbox{mpucolor1!15}{1.43e-2} & \colorbox{mpucolor1!15}{1.43e-2}
        & 0.528 & \colorbox{mpucolor1!15}{\textbf{0.529}} & 0.527 & 0.526
        & 0.614 & \colorbox{mpucolor1!15}{\textbf{0.615}} & \colorbox{mpucolor1!15}{0.615} & 0.613
        & \colorbox{mpucolor1!15}{\textbf{-78.3}} & -78.0 & -78.2 & -78.0 \\

        \textsc{SatImp}
        & \colorbox{mpucolor1!15}{\textbf{6.76e-3}} & \colorbox{mpucolor1!15}{6.76e-3} & 3.02e-3 & \colorbox{mpucolor1!15}{6.76e-3}
        & 0.474 & \colorbox{mpucolor1!15}{\textbf{0.476}} & 0.475 & 0.475
        & \colorbox{mpucolor1!15}{\textbf{0.601}} & \colorbox{mpucolor1!15}{0.601} & 0.600 & 0.600
        & \colorbox{mpucolor1!15}{\textbf{-99.3}} & -98.9 & -99.2 & -99.2 \\
        
        \bottomrule
    \end{tabular}
    \end{adjustbox}
    \label{tab:pum_kappa}
\end{table*}

\begin{table*}[!htp]
    \caption{
    Performance comparison of different unlearning algorithms using the \textbf{Llama-3.2-1B} model under no-denoise setting on the TOFU benchmark (Split99), with varying noise levels $\kappa\in\{0,0.05,0.1\}$ and fixed perturbed copies $m{=}2$. Additionally, we multiply $\kappa$ by the expected scaling factor in \method ($\mathbb{E}_k(\alpha_k)=1.5$). Higher values indicate better performance for Forget Quality, Forget Truth Ratio, and Model Utility, while values of PrivLeak closer to $0$ are preferred.
    }
    \centering
    \begin{adjustbox}{width=\linewidth}
    \begin{tabular}{ccccccccccccc}
        \toprule
        \multirow{2}[2]{*}{\textbf{\shortstack{Unlearning\\Algorithms}}}
        & \multicolumn{3}{c}{\textbf{Forget Quality} $\uparrow$}
        & \multicolumn{3}{c}{\textbf{Forget Truth Ratio} $\uparrow$}
        & \multicolumn{3}{c}{\textbf{Model Utility} $\uparrow$}
        & \multicolumn{3}{c}{\textbf{PrivLeak}} \\
        \cmidrule(lr){2-4}\cmidrule(lr){5-7}\cmidrule(lr){8-10}\cmidrule(lr){11-13}
        & $\kappa{=}0.015$ & $\kappa{=}0.075$ & $\kappa{=}0.15$
        & $\kappa{=}0.015$ & $\kappa{=}0.075$ & $\kappa{=}0.15$
        & $\kappa{=}0.015$ & $\kappa{=}0.075$ & $\kappa{=}0.15$
        & $\kappa{=}0.015$ & $\kappa{=}0.075$ & $\kappa{=}0.15$ \\
        \midrule
        
        \textsc{GradAscent}
        & 2.81e-8 & 1.12e-9 & \colorbox{mpucolor1!15}{\textbf{1.23e-7}}
        & 0.246 & 0.190 & \colorbox{mpucolor1!15}{\textbf{0.252}}
        & 0.000 & 0.000 & \colorbox{mpucolor1!15}{\textbf{4.11e-5}}
        & 58.9 & \colorbox{mpucolor1!15}{\textbf{57.4}} & 66.4 \\
        
        \textsc{GradDiff}
        & 0.266 & 0.266 & \colorbox{mpucolor1!15}{\textbf{0.405}}
        & 0.533 & 0.534 & \colorbox{mpucolor1!15}{0.546}
        & 0.461 & \colorbox{mpucolor1!15}{\textbf{0.466}} & 0.463
        & 73.3 & 75.4 & \colorbox{mpucolor1!15}{\textbf{72.3}} \\

        \textsc{DPO}
        & \colorbox{mpucolor1!15}{\textbf{0.165}} & 0.097 & \colorbox{mpucolor1!15}{0.165}
        & \colorbox{mpucolor1!15}{\textbf{0.620}} & 0.618 & \colorbox{mpucolor1!15}{0.620}
        & \colorbox{mpucolor1!15}{\textbf{0.595}} & 0.595 & 0.595
        & \colorbox{mpucolor1!15}{\textbf{-19.8}} & -20.0 & -20.0 \\
        
        \textsc{NPO}
        & \colorbox{mpucolor1!15}{\textbf{0.766}} & \colorbox{mpucolor1!15}{0.766} & \colorbox{mpucolor1!15}{0.766}
        & \colorbox{mpucolor1!15}{\textbf{0.640}} & 0.623 & 0.622
        & \colorbox{mpucolor1!15}{\textbf{0.600}} & 0.599 & 0.597
        & \colorbox{mpucolor1!15}{\textbf{32.9}} & 35.4 & 33.3 \\

        \textsc{SimNPO}
        & \colorbox{mpucolor1!15}{\textbf{5.41e-2}} & \colorbox{mpucolor1!15}{5.41e-2} & \colorbox{mpucolor1!15}{5.41e-2}
        & 0.522 & 0.522 & \colorbox{mpucolor1!15}{\textbf{0.523}}
        & 0.592 & 0.593 & \colorbox{mpucolor1!15}{\textbf{0.597}}
        & \colorbox{mpucolor1!15}{\textbf{-70.2}} & -71.8 & -72.1 \\

        \textsc{UnDIAL}
        & \colorbox{mpucolor1!15}{\textbf{1.43e-2}} & \colorbox{mpucolor1!15}{1.43e-2} & \colorbox{mpucolor1!15}{1.43e-2}
        & \colorbox{mpucolor1!15}{\textbf{0.527}} & 0.526 & \colorbox{mpucolor1!15}{0.527}
        & 0.614 & 0.614 & \colorbox{mpucolor1!15}{\textbf{0.616}}
        & -77.4 & -77.3 & \colorbox{mpucolor1!15}{\textbf{-77.2}} \\
        
        \textsc{SatImp}
        & \colorbox{mpucolor1!15}{\textbf{6.76e-3}} & \colorbox{mpucolor1!15}{6.76e-3} & \colorbox{mpucolor1!15}{6.76e-3}
        & 0.470 & \colorbox{mpucolor1!15}{\textbf{0.471}} & \colorbox{mpucolor1!15}{0.471}
        & 0.597 & 0.597 & \colorbox{mpucolor1!15}{\textbf{0.598}}
        & \colorbox{mpucolor1!15}{\textbf{-99.1}} & -99.2 & -99.2 \\
        
        \bottomrule
    \end{tabular}
    \end{adjustbox}
    \label{tab:nodenoise_kappa}
\end{table*}

\begin{table*}[!htp]
    \caption{
    Performance comparison of different unlearning algorithms using the \textbf{Llama-3.2-1B} model under \method on the TOFU benchmark (Split99), with varying numbers of perturbed copies $m\in\{2,3,4\}$ and noise level $\kappa{=}0.01$. Higher values indicate better performance for Forget Quality, Forget Truth Ratio, and Model Utility, while values of PrivLeak closer to $0$ are preferred.
    }
    \centering
    \begin{adjustbox}{width=\linewidth}
    \begin{tabular}{ccccccccccccc}
        \toprule
        \multirow{2}[2]{*}{\textbf{\shortstack{Unlearning\\Algorithms}}}
        & \multicolumn{3}{c}{\textbf{Forget Quality} $\uparrow$}
        & \multicolumn{3}{c}{\textbf{Forget Truth Ratio} $\uparrow$}
        & \multicolumn{3}{c}{\textbf{Model Utility} $\uparrow$}
        & \multicolumn{3}{c}{\textbf{PrivLeak}} \\
        \cmidrule(lr){2-4}\cmidrule(lr){5-7}\cmidrule(lr){8-10}\cmidrule(lr){11-13}
        & $m{=}2$ & $m{=}3$ & $m{=}4$
        & $m{=}2$ & $m{=}3$ & $m{=}4$
        & $m{=}2$ & $m{=}3$ & $m{=}4$
        & $m{=}2$ & $m{=}3$ & $m{=}4$ \\
        \midrule
        \textsc{GradAscent}
        & 5.41e-2 & \colorbox{mpucolor1!15}{\textbf{0.165}} & 2.16e-5
        & 0.468 & \colorbox{mpucolor1!15}{\textbf{0.500}} & 0.334
        & \colorbox{mpucolor1!15}{\textbf{2.31e-4}} & 1.68e-4 & 0.000
        & 69.6 & 64.0 & \colorbox{mpucolor1!15}{\textbf{63.8}} \\
        
        \textsc{GradDiff}
        & 0.405 & 0.405 & \colorbox{mpucolor1!15}{\textbf{0.579}}
        & 0.547 & 0.555 & \colorbox{mpucolor1!15}{\textbf{0.570}}
        & 0.464 & \colorbox{mpucolor1!15}{\textbf{0.467}} & \colorbox{mpucolor1!15}{0.467}
        & \colorbox{mpucolor1!15}{\textbf{77.2}} & 77.7 & 77.4 \\

        \textsc{DPO}
        & 0.266 & \colorbox{mpucolor1!15}{\textbf{0.579}} & 0.165
        & \colorbox{mpucolor1!15}{\textbf{0.641}} & 0.631 & 0.615
        & 0.591 & 0.592 & \colorbox{mpucolor1!15}{\textbf{0.594}}
        & -28.9 & \colorbox{mpucolor1!15}{\textbf{-22.0}} & -22.2 \\
        
        \textsc{NPO}
        & \colorbox{mpucolor1!15}{\textbf{0.919}} & 0.766 & 0.766
        & \colorbox{mpucolor1!15}{\textbf{0.628}} & 0.623 & 0.620
        & \colorbox{mpucolor1!15}{\textbf{0.597}} & 0.590 & 0.594
        & 28.2 & 25.3 & \colorbox{mpucolor1!15}{\textbf{25.1}} \\

        \textsc{SimNPO}
        & \colorbox{mpucolor1!15}{\textbf{9.71e-2}} & 5.41e-2 & 5.41e-2
        & \colorbox{mpucolor1!15}{\textbf{0.525}} & 0.518 & 0.519
        & 0.598 & 0.597 & \colorbox{mpucolor1!15}{\textbf{0.599}}
        & \colorbox{mpucolor1!15}{\textbf{-71.8}} & -74.6 & -74.1 \\

        \textsc{UnDIAL}
        & \colorbox{mpucolor1!15}{\textbf{1.43e-2}} & \colorbox{mpucolor1!15}{1.43e-2} & 6.76e-3
        & \colorbox{mpucolor1!15}{\textbf{0.529}} & 0.528 & 0.528
        & \colorbox{mpucolor1!15}{\textbf{0.615}} & 0.613 & \colorbox{mpucolor1!15}{0.615}
        & \colorbox{mpucolor1!15}{\textbf{-78.0}} & \colorbox{mpucolor1!15}{-78.0} & -78.4 \\
        
        \textsc{SatImp}
        & 6.76e-3 & \colorbox{mpucolor1!15}{\textbf{1.43e-2}} & 3.02e-3
        & \colorbox{mpucolor1!15}{\textbf{0.476}} & 0.473 & 0.471
        & \colorbox{mpucolor1!15}{\textbf{0.601}} & 0.598 & 0.599
        & \colorbox{mpucolor1!15}{\textbf{-98.9}} & -99.2 & -99.8 \\
        
        \bottomrule
    \end{tabular}
    \end{adjustbox}
    \label{tab:pum_mval}
\end{table*}

\begin{table*}[!htp]
    \caption{
    Performance comparison of different unlearning algorithms using the \textbf{Llama-3.2-1B} model under \method on the TOFU benchmark, across varying global communication rounds (\textbf{R}) and local client epochs (\textbf{E}), with fixed perturbed copies $m{=}2$ and noise level $\kappa{=}0.01$. Higher values indicate better performance for Forget Quality and Model Utility, while values of PrivLeak closer to $0$ are preferred.
    }
    \centering
    \begin{adjustbox}{width=\linewidth}
    \begin{tabular}{ccccccccccccc}
        \toprule
        \multirow{2}[2]{*}{\textbf{\shortstack{Unlearning\\Algorithms}}}
        & \multicolumn{4}{c}{\textbf{Forget Quality} $\uparrow$}
        & \multicolumn{4}{c}{\textbf{Model Utility} $\uparrow$}
        & \multicolumn{4}{c}{\textbf{PrivLeak}} \\
        \cmidrule(lr){2-5}\cmidrule(lr){6-9}\cmidrule(lr){10-13}
        & \textsc{R1E10} & \textsc{R2E5} & \textsc{R5E2} & \textsc{R10E1}
        & \textsc{R1E10} & \textsc{R2E5} & \textsc{R5E2} & \textsc{R10E1}
        & \textsc{R1E10} & \textsc{R2E5} & \textsc{R5E2} & \textsc{R10E1} \\
        \midrule
        \textsc{GradAscent}
        & \colorbox{mpucolor1!15}{\textbf{5.41e-2}} & 1.86e-23 & 1.86e-23 & 1.86e-23
        & \colorbox{mpucolor1!15}{\textbf{2.31e-4}} & 0.00 & 0.000 & 0.000
        & 69.6 & 35.2 & \colorbox{mpucolor1!15}{\textbf{-2.24}} & 25.6 \\
        
        \textsc{GradDiff}
        & \colorbox{mpucolor1!15}{\textbf{0.405}} & 1.43e-2 & 5.04e-4 & 1.95e-10
        & 0.464 & 0.484 & 0.494 & \colorbox{mpucolor1!15}{\textbf{0.578}}
        & 77.2 & \colorbox{mpucolor1!15}{\textbf{86.8}} & 88.9 & 88.9 \\

        \textsc{DPO}
        & 5.41e-2 & 0.165 & 0.165 & \colorbox{mpucolor1!15}{\textbf{0.266}}
        & 0.584 & 0.591 & \colorbox{mpucolor1!15}{\textbf{0.592}} & 0.591
        & -49.5 & -42.1 & \colorbox{mpucolor1!15}{\textbf{-17.9}} & -28.9 \\
        
        \textsc{NPO}
        & 0.766 & \colorbox{mpucolor1!15}{\textbf{0.919}} & \colorbox{mpucolor1!15}{\textbf{0.919}} & \colorbox{mpucolor1!15}{\textbf{0.919}}
        & 0.584 & \colorbox{mpucolor1!15}{\textbf{0.597}} & 0.592 & 0.594
        & 51.8 & \colorbox{mpucolor1!15}{\textbf{28.2}} & 34.7 & 40.7 \\

        \textsc{SimNPO}
        & 2.86e-2 & 2.86e-2 & \colorbox{mpucolor1!15}{\textbf{9.71e-2}} & \colorbox{mpucolor1!15}{\textbf{9.71e-2}}
        & 0.599 & \colorbox{mpucolor1!15}{\textbf{0.601}} & 0.598 & 0.598
        & -78.0 & -80.5 & -75.6 & \colorbox{mpucolor1!15}{\textbf{-71.8}} \\

        \textsc{UnDIAL}
        & \colorbox{mpucolor1!15}{\textbf{1.43e-2}} & 6.76e-3 & \colorbox{mpucolor1!15}{\textbf{1.43e-2}} & 6.76e-3
        & 0.615 & \colorbox{mpucolor1!15}{\textbf{0.617}} & 0.612 & 0.615
        & -78.0 & -82.8 & \colorbox{mpucolor1!15}{\textbf{-77.6}} & -84.2 \\
        
        \textsc{SatImp}
        & \colorbox{mpucolor1!15}{\textbf{6.76e-3}} & 3.02e-3 & \colorbox{mpucolor1!15}{\textbf{6.76e-3}} & \colorbox{mpucolor1!15}{\textbf{6.76e-3}}
        & 0.600 & 0.599 & 0.598 & \colorbox{mpucolor1!15}{\textbf{0.601}}
        & -99.9 & -99.8 & \colorbox{mpucolor1!15}{\textbf{-99.3}} & -98.9 \\
        
        \bottomrule
    \end{tabular}
    \end{adjustbox}
    \label{tab:pum_round_epoch}
\end{table*}

\begin{table*}[!htp]
    \caption{
    Performance comparison of different unlearning algorithms using the \textbf{Llama-3.2-1B} model under \method on the TOFU benchmark, across varying split strategies (\textsc{Forget01}, \textsc{Forget05}, and \textsc{Forget10}), with fixed perturbed copies $m{=}2$ and noise level $\kappa{=}0.01$. Higher values indicate better performance for Forget Quality, Forget Truth Ratio, and Model Utility, while values of PrivLeak closer to $0$ are preferred.
    }
    \centering
    \begin{adjustbox}{width=\linewidth}
    \begin{tabular}{ccccccccccccc}
        \toprule
        \multirow{2}[2]{*}{\textbf{\shortstack{Unlearning\\Algorithms}}}
        & \multicolumn{3}{c}{\textbf{Forget Quality} $\uparrow$}
        & \multicolumn{3}{c}{\textbf{Forget Truth Ratio} $\uparrow$}
        & \multicolumn{3}{c}{\textbf{Model Utility} $\uparrow$}
        & \multicolumn{3}{c}{\textbf{PrivLeak}} \\
        \cmidrule(lr){2-4}\cmidrule(lr){5-7}\cmidrule(lr){8-10}\cmidrule(lr){11-13}
        & \textsc{Forget01} & \textsc{Forget05} & \textsc{Forget10}
        & \textsc{Forget01} & \textsc{Forget05} & \textsc{Forget10}
        & \textsc{Forget01} & \textsc{Forget05} & \textsc{Forget10}
        & \textsc{Forget01} & \textsc{Forget05} & \textsc{Forget10} \\
        \midrule
        
        \textsc{GradAscent} & \colorbox{mpucolor1!15}{\textbf{5.41e-2}} & 1.94e-119 & 1.06e-239 & \colorbox{mpucolor1!15}{\textbf{0.468}} & 1.71e-23 & 1.74e-22 & \colorbox{mpucolor1!15}{\textbf{2.31e-4}} & 0.000 & 0.000 & 69.6 & -28.0 & \colorbox{mpucolor1!15}{\textbf{-21.6}} \\
        
        \textsc{GradDiff} & \colorbox{mpucolor1!15}{\textbf{0.405}} & 5.99e-105 & 8.51e-237 & \colorbox{mpucolor1!15}{\textbf{0.547}} & 0.002 & 3.09e-10 & \colorbox{mpucolor1!15}{\textbf{0.464}} & 0.421 & 0.264 & 77.2 & \colorbox{mpucolor1!15}{\textbf{56.8}} & 61.8 \\
        
        \textsc{DPO} & \colorbox{mpucolor1!15}{\textbf{0.266}} & 4.30e-3 & 9.07e-8 & \colorbox{mpucolor1!15}{\textbf{0.641}} & 0.574 & 0.545 & 0.591 & 0.595 & \colorbox{mpucolor1!15}{\textbf{0.598}} & -28.9 & -43.7 & \colorbox{mpucolor1!15}{\textbf{-59.0}} \\
        
        \textsc{NPO} & \colorbox{mpucolor1!15}{\textbf{0.919}} & 0.112 & 4.46e-6 & \colorbox{mpucolor1!15}{\textbf{0.628}} & 0.603 & 0.555 & 0.597 & 0.614 & \colorbox{mpucolor1!15}{\textbf{0.623}} & 28.2 & \colorbox{mpucolor1!15}{\textbf{-4.00}} & -0.946 \\
        
        \textsc{SimNPO} & \colorbox{mpucolor1!15}{\textbf{9.71e-2}} & 4.61e-7 & 5.42e-13 & \colorbox{mpucolor1!15}{\textbf{0.525}} & 0.514 & 0.498 & \colorbox{mpucolor1!15}{\textbf{0.598}} & 0.595 & \colorbox{mpucolor1!15}{0.598} & -71.8 & -77.7 & \colorbox{mpucolor1!15}{\textbf{-82.0}} \\
        
        \textsc{UnDIAL} & \colorbox{mpucolor1!15}{\textbf{1.43e-2}} & 2.44e-10 & 7.98e-17 & 0.529 & 0.521 & \colorbox{mpucolor1!15}{\textbf{0.531}} & 0.615 & 0.615 & \colorbox{mpucolor1!15}{\textbf{0.616}} & -78.0 & -91.7 & \colorbox{mpucolor1!15}{\textbf{-94.9}} \\
        
        \textsc{SatImp} & \colorbox{mpucolor1!15}{\textbf{6.76e-3}} & 1.33e-13 & 2.81e-20 & \colorbox{mpucolor1!15}{\textbf{0.476}} & 0.471 & 0.469 & \colorbox{mpucolor1!15}{\textbf{0.601}} & 0.593 & 0.597 & -98.9 & \colorbox{mpucolor1!15}{\textbf{-99.9}} & -99.2 \\
        

        \bottomrule

    \end{tabular}
    \end{adjustbox}
    \label{tab:pum_split}
\end{table*}

\begin{table*}[t]
    \caption{
    Baseline performance of the OpenUnlearning Framework without \method, using the \textbf{Llama-3.2-1B} and \textbf{Llama-3.2-3B} models under \method on the TOFU benchmark (Split99).  Higher values indicate better performance for Forget Quality, Forget Truth Ratio, and Model Utility, while values of PrivLeak closer to $0$ are preferred.
    }
    \centering
    \begin{adjustbox}{width=\linewidth}
    \begin{tabular}{ccccccccccccc}
        \toprule
        \multirow{2}[2]{*}{\textbf{\shortstack{Unlearning\\Algorithms}}}
        & \multicolumn{3}{c}{\textbf{Forget Quality} $\uparrow$}
        & \multicolumn{3}{c}{\textbf{Forget Truth Ratio} $\uparrow$}
        & \multicolumn{3}{c}{\textbf{Model Utility} $\uparrow$}
        & \multicolumn{3}{c}{\textbf{PrivLeak}} \\
        \cmidrule(lr){2-4}\cmidrule(lr){5-7}\cmidrule(lr){8-10}\cmidrule(lr){11-13}
        & \textsc{Forget01} & \textsc{Forget05} & \textsc{Forget10}
        & \textsc{Forget01} & \textsc{Forget05} & \textsc{Forget10}
        & \textsc{Forget01} & \textsc{Forget05} & \textsc{Forget10}
        & \textsc{Forget01} & \textsc{Forget05} & \textsc{Forget10} \\

        \midrule
        \multicolumn{13}{c}{\cellcolor{mpucolor1!15}\textbf{Llama-3.2-1B-Instruct}} \\
        \midrule
        
        \textsc{GradAscent} & \colorbox{mpucolor1!15}{\textbf{6.58e-5}} & 1.94e-119 & 1.06e-239 & \colorbox{mpucolor1!15}{\textbf{0.355}} & 2.12e-23 & 1.27e-22 & 0.000 & 0.000 & 0.000 & 65.8 & \colorbox{mpucolor1!15}{\textbf{-28.4}} & -21.1 \\
        
        \textsc{GradDiff} & \colorbox{mpucolor1!15}{\textbf{0.405}} & 5.99e-105 & 8.51e-237 & \colorbox{mpucolor1!15}{\textbf{0.535}} & 0.004 & 7.22e-10 & \colorbox{mpucolor1!15}{\textbf{0.461}} & 0.393 & 0.247 & 77.1 & \colorbox{mpucolor1!15}{\textbf{56.8}} & 61.3 \\

        \textsc{DPO} & \colorbox{mpucolor1!15}{\textbf{0.165}} & 2.08e-3 & 3.08e-7 & \colorbox{mpucolor1!15}{\textbf{0.637}} & 0.573 & 0.545 & 0.591 & 0.594 & \colorbox{mpucolor1!15}{\textbf{0.598}} & \colorbox{mpucolor1!15}{\textbf{-25.5}} & -44.5 & -57.8 \\
        
        \textsc{NPO} & \colorbox{mpucolor1!15}{\textbf{0.919}} & 0.112 & 4.58e-7 & \colorbox{mpucolor1!15}{\textbf{0.624}} & 0.595 & 0.551 & 0.599 & 0.614 & \colorbox{mpucolor1!15}{\textbf{0.619}} & 30.6 & \colorbox{mpucolor1!15}{\textbf{-2.11}} & 3.78 \\

        \textsc{SimNPO} & \colorbox{mpucolor1!15}{\textbf{5.41e-2}} & 1.46e-7 & 2.73e-12 & \colorbox{mpucolor1!15}{\textbf{0.526}} & 0.510 & 0.498 & \colorbox{mpucolor1!15}{\textbf{0.598}} & 0.593 & 0.597 & \colorbox{mpucolor1!15}{\textbf{-68.4}} & -76.6 & -82.4 \\
        
        \textsc{UnDIAL} & \colorbox{mpucolor1!15}{\textbf{1.43e-2}} & 4.87e-10 & 4.24e-17 & 0.530 & 0.524 & \colorbox{mpucolor1!15}{\textbf{0.531}} & 0.613 & 0.615 & \colorbox{mpucolor1!15}{\textbf{0.618}} & \colorbox{mpucolor1!15}{\textbf{-76.4}} & -91.6 & -94.8 \\
                
        \textsc{SatImp} & \colorbox{mpucolor1!15}{\textbf{3.02e-3}} & 2.96e-13 & 1.12e-19 & \colorbox{mpucolor1!15}{\textbf{0.474}} & 0.471 & 0.469 & \colorbox{mpucolor1!15}{\textbf{0.600}} & 0.594 & \colorbox{mpucolor1!15}{0.600} & \colorbox{mpucolor1!15}{\textbf{-98.9}} & -99.9 & -99.2 \\

        \midrule
        \multicolumn{13}{c}{\cellcolor{mpucolor1!15}\textbf{Llama-3.2-3B-Instruct}} \\
        \midrule
        
        \textsc{GradAscent} & \colorbox{mpucolor1!15}{\textbf{3.06e-11}} & 1.94e-119 & 1.06e-239 & \colorbox{mpucolor1!15}{\textbf{0.187}} & 6.63e-8 & 4.79e-20 & \colorbox{mpucolor1!15}{\textbf{8.23e-5}} & \colorbox{mpucolor1!15}{\textbf{0.000}} & 0.000 & 50.6 & -22.5 & \colorbox{mpucolor1!15}{\textbf{-19.2}} \\
        
        \textsc{GradDiff} & \colorbox{mpucolor1!15}{\textbf{5.41e-2}} & 1.94e-119 & 8.51e-237 & \colorbox{mpucolor1!15}{\textbf{0.504}} & 2.01e-9 & 1.20e-10 & 0.419 & \colorbox{mpucolor1!15}{\textbf{0.593}} & 0.345 & 108 & \colorbox{mpucolor1!15}{\textbf{56.3}} & 64.6 \\
        
        \textsc{DPO} & \colorbox{mpucolor1!15}{\textbf{0.579}} & 1.18e-2 & 2.13e-6 & \colorbox{mpucolor1!15}{\textbf{0.685}} & 0.588 & 0.594 & 0.634 & \colorbox{mpucolor1!15}{\textbf{0.635}} & 0.634 & \colorbox{mpucolor1!15}{\textbf{-10.2}} & -62.0 & -69.1 \\

        \textsc{NPO} & \colorbox{mpucolor1!15}{\textbf{0.990}} & 0.793 & 2.99e-2 & \colorbox{mpucolor1!15}{\textbf{0.635}} & 0.617 & 0.581 & 0.650 & \colorbox{mpucolor1!15}{\textbf{0.675}} & 0.673 & 61.2 & \colorbox{mpucolor1!15}{\textbf{9.83}} & -24.0 \\

        \textsc{SimNPO} & \colorbox{mpucolor1!15}{\textbf{9.71e-2}} & 3.60e-9 & 1.03e-14 & \colorbox{mpucolor1!15}{\textbf{0.550}} & 0.499 & 0.501 & 0.658 & 0.652 & \colorbox{mpucolor1!15}{\textbf{0.659}} & \colorbox{mpucolor1!15}{\textbf{-62.4}} & -81.0 & -86.3 \\
    
        \textsc{UNDIAL} & \colorbox{mpucolor1!15}{\textbf{2.86e-2}} & 3.08e-12 & 8.08e-22 & \colorbox{mpucolor1!15}{\textbf{0.567}} & 0.507 & 0.515 & 0.693 & \colorbox{mpucolor1!15}{\textbf{0.701}} & 0.696 & \colorbox{mpucolor1!15}{\textbf{-72.0}} & -92.1 & -94.3 \\

        \textsc{SatImp} & \colorbox{mpucolor1!15}{\textbf{1.43e-2}} & 2.96e-13 & 2.05e-24 & \colorbox{mpucolor1!15}{\textbf{0.510}} & 0.463 & 0.469 & \colorbox{mpucolor1!15}{\textbf{0.659}} & 0.649 & \colorbox{mpucolor1!15}{\textbf{0.659}} & -99.7 & -100 & \colorbox{mpucolor1!15}{\textbf{-99.5}} \\

    \bottomrule
    \end{tabular}
    \end{adjustbox}
    \label{tab:pum_3b}
\end{table*}

\begin{table*}[!htp]
    \caption{Results of Attack Tests among Six Methods with Qwen2.5-1.5B-Instruct on the TOFU Benchmark(Split99). The results are shown by comparison of CLEAN/MPU, with hyperparameter $m=2$ and $\kappa=0.01$ by default.}
    \centering
    \begin{adjustbox}{width=\linewidth}
    \begin{tabular}{lcccccc}
        \toprule
        \textbf{Algorithms} & \textbf{LOSS} & \textbf{ZLib} & \textbf{GradNorm} & \textbf{MinK} & \textbf{MinK++} & \textbf{Reference} \\
        \midrule
        \textsc{GradAscent} & 0.39/0.587 & 0.388/0.608 & 0.565/0.727 & 0.385/0.59 & 0.431/0.632 & 0.28/0.993 \\
        \textsc{GradDiff} & 0.383/0.589 & 0.416/0.609 & 0.479/0.734 & 0.378/0.589 & 0.388/0.636 & 0.128/0.992 \\
        \textsc{NPO} & 0.363/0.589 & 0.359/0.611 & 0.546/0.731 & 0.354/0.591 & 0.39/0.64 & 0.246/0.999 \\
        \textsc{SimNPO} & 0.584/0.589 & 0.604/0.608 & 0.727/0.734 & 0.586/0.591 & 0.634/0.636 & 0.961/0.998 \\
        \textsc{UnDIAL} & 0.46/0.589 & 0.47/0.608 & 0.379/0.733 & 0.512/0.591 & 0.609/0.639 & 0.463/0.999 \\
        \textsc{SatImp} & 0.589/0.588 & 0.606/0.611 & 0.732/0.734 & 0.585/0.589 & 0.641/0.642 & 0.977/0.995 \\
        \bottomrule
    \end{tabular}
    \end{adjustbox}
    \label{tab:qwen2.5-1.5b_tofu_attack}
\end{table*}

\begin{table*}[!htp]
    \caption{Results of LoRA compatibility with Llama-3.1-8B-Instruct on the TOFU Benchmark(Split99). The results are shown by comparison of CLEAN/MPU, with hyperparameter $m=2$ and $\kappa=0.01$ by default.}
    \centering
    \begin{tabular}{lcccc}
        \toprule
        \textbf{Algorithms} & \textbf{Forget Quality} & \textbf{Forget Truth Ratio} & \textbf{Model Utility} & \textbf{PrivLeak} \\
        \midrule
        \textsc{GradAscent} & 0.919/0.919 & 0.721/0.721 & 0.427/0.426 & -11.3/-10.9 \\
        \textsc{GradDiff} & 0.919/0.919 & 0.721/0.724 & 0.427/0.426 & -9.78/-8.85 \\
        \textsc{NPO} & 0.919/0.99 & 0.723/0.724 & 0.429/0.429 & -9.95/-9.55 \\
        \textsc{SimNPO} & 1.0/1.0 & 0.735/0.733 & 0.396/0.411 & -12.0/-11.6 \\
        \textsc{DPO} & 1.0/0.99 & 0.731/0.732 & 0.426/0.426 & -8.03/-6.98 \\
        \textsc{UnDIAL} & 0.99/0.99 & 0.733/0.731 & 0.376/0.381 & -12.8/-12.8 \\
        \textsc{SatImp} & 0.99/1.0 & 0.733/0.734 & 0.394/0.411 & -11.9/-12.0 \\
        \bottomrule
    \end{tabular}
    \label{tab:llama-3.1-8b_tofu_lora}
\end{table*}

\begin{table*}[!htp]
    \caption{Results of Qwen2.5-1.5B-Instruct on the TOFU(Split99) Benchmark. The results are shown by comparison of CLEAN/MPU, with hyperparameter $m=2$ and $\kappa=0.01$ by default.}
    \centering
    \begin{tabular}{lcccc}
        \toprule
        \textbf{Algorithms} & \textbf{Forget Quality} & \textbf{Forget Truth Ratio} & \textbf{Model Utility} & \textbf{PrivLeak} \\
        \midrule
        \textsc{GradAscent} & 1.43e-2/0.990 & 0.516/0.740 & 9.75e-4/0.346 & 13.7/-21.7 \\
        \textsc{GradDiff} & 6.58e-5/0.990 & 0.622/0.740 & 0.297/0.346 & 80.9/-21.6 \\
        \textsc{DPO} & 0.266/0.990 & 0.757/0.738 & 0.305/0.347 & 86.0/-21.7 \\
        \textsc{NPO} & 1.43e-2/0.919 & 0.708/0.739 & 0.305/0.346 & 89.4/-21.9 \\
        \textsc{SimNPO} & 0.990/0.990 & 0.738/0.738 & 0.349/0.347 & -18.7/-22.0 \\
        \textsc{UnDIAL} & 1.000/0.990 & 0.769/0.740 & 0.338/0.346 & 12.3/-21.8 \\
        \textsc{SatImp} & 0.919/0.990 & 0.738/0.740 & 0.349/0.346 & -21.8/-21.5 \\
        \bottomrule
    \end{tabular}
    \label{tab:qwen2.5-1.5b_tofu_performance}
\end{table*}

\begin{table*}[!htp]
    \caption{Results of Qwen2.5-3B-Instruct on the TOFU(Split99) Benchmark. The results are shown by comparison of CLEAN/MPU, with hyperparameter $m=2$ and $\kappa=0.01$ by default.}
    \centering
    \begin{tabular}{lcccc}
        \toprule
        \textbf{Algorithms} & \textbf{Forget Quality} & \textbf{Forget Truth Ratio} & \textbf{Model Utility} & \textbf{PrivLeak} \\
        \midrule
        \textsc{GradAscent} & 0.579/0.579 & 0.709/0.709 & 0.392/0.391 & -28.7/-28.1 \\
        \textsc{GradDiff} & 0.766/0.766 & 0.712/0.710 & 0.394/0.395 & -28.0/-27.1 \\
        \textsc{DPO} & 0.766/0.766 & 0.711/0.709 & 0.397/0.397 & -28.9/-28.8 \\
        \textsc{NPO} & 0.766/0.766 & 0.712/0.711 & 0.398/0.397 & -28.5/-28.5 \\
        \textsc{SimNPO} & 0.766/0.766 & 0.712/0.710 & 0.398/0.397 & -28.5/-28.6 \\
        \textsc{UnDIAL} & 0.766/0.766 & 0.710/0.711 & 0.398/0.396 & -28.1/-28.8 \\
        \textsc{SatImp} & 0.766/0.766 & 0.711/0.712 & 0.397/0.397 & -28.9/-28.7 \\
        \bottomrule
    \end{tabular}
    \label{tab:qwen2.5-3b_tofu_performance}
\end{table*}

\begin{table*}[!htp]
    \caption{Results of Llama-3.2-1B-Instruct on the MUSE-Books Benchmark. The results are shown by comparison of CLEAN/MPU, with hyperparameter $m=2$ and $\kappa=0.01$ by default.}
    \centering
    \begin{adjustbox}{width=\linewidth}
    \begin{tabular}{lcccc}
        \toprule
        \textbf{Algorithms} & \textbf{Extraction Strength} & \textbf{Forget KnowMem ROUGE} & \textbf{Retain KnowMem ROUGE} & \textbf{PrivLeak} \\
        \midrule
        \textsc{GradAscent} & 0.00915/0.00915 & 0/0 & 0/0 & -32.0/-34.5 \\
        \textsc{GradDiff} & 0.00915/0.00915 & 0.0455/0.0331 & 0.146/0.141 & 28.9/31.7 \\
        \textsc{NPO} & 0.0125/0.0127 & 0.172/0.181 & 0.188/0.191 & -23.7/-24.0 \\
        \textsc{SimNPO} & 0.0122/0.0125 & 0.161/0.152 & 0.206/0.218 & -30.1/-28.5 \\
        \textsc{UnDIAL} & 0.0112/0.0112 & 0.167/0.183 & 0.204/0.195 & -56.9/-57.2 \\
        \textsc{SatImp} & 0.00924/0.00924 & 0.152/0.147 & 0.202/0.198 & -2.22/-1.87 \\
        \bottomrule
    \end{tabular}
    \end{adjustbox}
    \label{tab:llama-3.2-1b_muse-news_performance_1}
\end{table*}

\begin{table*}[!htp]
    \caption{Results of Llama-3.2-3B-Instruct on the MUSE-Books Benchmark. The results are shown by comparison of CLEAN/MPU, with hyperparameter $m=2$ and $\kappa=0.01$ by default.}
    \centering
    \begin{adjustbox}{width=\linewidth}
    \begin{tabular}{lcccc}
        \toprule
        \textbf{Algorithms} & \textbf{Extraction Strength} & \textbf{Forget KnowMem ROUGE} & \textbf{Retain KnowMem ROUGE} & \textbf{PrivLeak} \\
        \midrule
        \textsc{GradAscent} & 0.00915/0.00915 & 0/0 & 0/0 & 52.3/56.6 \\
        \textsc{GradDiff} & 0.00915/0.00915 & 0/0 & 0.115/0.146 & 7.42/35.4 \\
        \textsc{NPO} & 0.0155/0.015 & 0.234/0.226 & 0.313/0.333 & -71.9/-73.2 \\
        \textsc{SimNPO} & 0.0143/0.0141 & 0.237/0.227 & 0.32/0.316 & -84.2/-83.6 \\
        \textsc{UnDIAL} & 0.0109/0.0106 & 0.243/0.229 & 0.359/0.316 & -89.6/-89.6 \\
        \textsc{SatImp} & 0.00915/0.00915 & 0.221/0.239 & 0.315/0.3 & -28.8/-29.8 \\
        \bottomrule
    \end{tabular}
    \end{adjustbox}
    \label{tab:llama-3.2-3b_muse-books_performance}
\end{table*}

\begin{table*}[!htp]
    \caption{Results of Qwen2.5-1.5B-Instruct on the MUSE-Books Benchmark. The results are shown by comparison of CLEAN/MPU, with hyperparameter $m=2$ and $\kappa=0.01$ by default.}
    \centering
    \begin{adjustbox}{width=\linewidth}
    \begin{tabular}{lcccc}
        \toprule
        \textbf{Algorithms} & \textbf{Extraction Strength} & \textbf{Forget KnowMem ROUGE} & \textbf{Retain KnowMem ROUGE} & \textbf{PrivLeak} \\
        \midrule
        \textsc{GradAscent} & 0.0148/0.0149 & 0.22/0.22 & 0.24/0.242 & -29.7/-29.7 \\
        \textsc{GradDiff} & 0.0143/0.014 & 0.218/0.214 & 0.281/0.277 & -29.3/-28.9 \\
        \textsc{NPO} & 0.0147/0.0144 & 0.221/0.223 & 0.306/0.304 & -26.0/-25.8 \\
        \textsc{SimNPO} & 0.0144/0.0147 & 0.221/0.221 & 0.294/0.297 & -25.9/-26.0 \\
        \textsc{UnDIAL} & 0.0146/0.0142 & 0.226/0.215 & 0.273/0.282 & -27.8/-27.8 \\
        \textsc{SatImp} & 0.0147/0.0146 & 0.227/0.218 & 0.297/0.303 & -25.9/-25.9 \\
        \bottomrule
    \end{tabular}
    \end{adjustbox}
    \label{tab:qwen2.5-1.5b_muse-books_performance}
\end{table*}

\begin{table*}[!htp]
    \caption{Results of Qwen2.5-3B-Instruct on the MUSE-Books Benchmark. The results are shown by comparison of CLEAN/MPU, with hyperparameter $m=2$ and $\kappa=0.01$ by default.}
    \centering
    \begin{adjustbox}{width=\linewidth}
    \begin{tabular}{lcccc}
        \toprule
        \textbf{Algorithms} & \textbf{Extraction Strength} & \textbf{Forget KnowMem ROUGE} & \textbf{Retain KnowMem ROUGE} & \textbf{PrivLeak} \\
        \midrule
        \textsc{GradAscent} & 0.0172/0.0175 & 0.214/0.218 & 0.238/0.247 & -56.1/-55.9 \\
        \textsc{GradDiff} & 0.0159/0.0155 & 0.191/0.213 & 0.229/0.201 & -57.9/-57.1 \\
        \textsc{NPO} & 0.0145/0.0148 & 0.206/0.186 & 0.194/0.198 & -53.1/-52.9 \\
        \textsc{SimNPO} & 0.0147/0.0146 & 0.182/0.186 & 0.205/0.2 & -52.9/-53.0 \\
        \textsc{UnDIAL} & 0.0136/0.0136 & 0.151/0.135 & 0.129/0.13 & -48.2/-48.6 \\
        \textsc{SatImp} & 0.0147/0.0147 & 0.193/0.208 & 0.192/0.219 & -52.6/-53.1 \\
        \bottomrule
    \end{tabular}
    \end{adjustbox}
    \label{tab:qwen2.5-3b_muse-books_performance}
\end{table*}

\begin{table*}[!htp]
    \caption{Results of Llama-3.2-1B-Instruct on the MUSE-News Benchmark. The results are shown by comparison of CLEAN/MPU, with hyperparameter $m=2$ and $\kappa=0.01$ by default.}
    \centering
    \begin{adjustbox}{width=\linewidth}
    \begin{tabular}{lcccc}
        \toprule
        \textbf{Algorithms} & \textbf{Extraction Strength} & \textbf{Forget KnowMem ROUGE} & \textbf{Retain KnowMem ROUGE} & \textbf{PrivLeak} \\
        \midrule
        \textsc{GradAscent} & 0.00975/0.00975 & 0/0 & 0/0 & 19.3/19.2 \\
        \textsc{GradDiff} & 0.00985/0.01 & 0.0381/0.0346 & 0.0292/0.0123 & 79.7/82.9 \\
        \textsc{NPO} & 0.0279/0.028 & 0.163/0.167 & 0.178/0.178 & -28.0/-27.5 \\
        \textsc{SimNPO} & 0.0282/0.0279 & 0.155/0.169 & 0.181/0.175 & -52.0/-52.0 \\
        \textsc{UnDIAL} & 0.016/0.016 & 0.136/0.137 & 0.21/0.207 & -49.0/-49.0 \\
        \textsc{SatImp} & 0.0281/0.028 & 0.173/0.17 & 0.179/0.176 & -52.2/-52.1 \\
        \bottomrule
    \end{tabular}
    \end{adjustbox}
    \label{tab:llama-3.2-1b_muse-news_performance_2}
\end{table*}

\begin{table*}[!htp]
    \caption{Results of Llama-3.2-3B-Instruct on the MUSE-News Benchmark. The results are shown by comparison of CLEAN/MPU, with hyperparameter $m=2$ and $\kappa=0.01$ by default.}
    \centering
    \begin{adjustbox}{width=\linewidth}
    \begin{tabular}{lcccc}
        \toprule
        \textbf{Algorithms} & \textbf{Extraction Strength} & \textbf{Forget KnowMem ROUGE} & \textbf{Retain KnowMem ROUGE} & \textbf{PrivLeak} \\
        \midrule
        \textsc{GradAscent} & 0.00975/0.00975 & 0/0 & 0/0 & 14.2/15.4 \\
        \textsc{GradDiff} & 0.0142/0.0132 & 0/0 & 0/0.026 & 64.1/68.3 \\
        \textsc{NPO} & 0.0271/0.0283 & 0.234/0.213 & 0.255/0.233 & -64.5/-64.9 \\
        \textsc{SimNPO} & 0.0282/0.0288 & 0.236/0.231 & 0.213/0.216 & -82.7/-82.6 \\
        \textsc{UnDIAL} & 0.0136/0.0134 & 0.236/0.238 & 0.276/0.256 & -81.6/-81.6 \\
        \textsc{SatImp} & 0.0288/0.0278 & 0.224/0.232 & 0.213/0.212 & -82.5/-82.8 \\
        \bottomrule
    \end{tabular}
    \end{adjustbox}
    \label{tab:llama-3.2-3b_muse-news_performance}
\end{table*}

\begin{table*}[!htp]
    \caption{Results of Qwen2.5-1.5B-Instruct on the MUSE-News Benchmark. The results are shown by comparison of CLEAN/MPU, with hyperparameter $m=2$ and $\kappa=0.01$ by default.}
    \centering
    \begin{adjustbox}{width=\linewidth}
    \begin{tabular}{lcccc}
        \toprule
        \textbf{Algorithms} & \textbf{Extraction Strength} & \textbf{Forget KnowMem ROUGE} & \textbf{Retain KnowMem ROUGE} & \textbf{PrivLeak} \\
        \midrule
        \textsc{GradAscent} & 0.0242/0.0246 & 0.196/0.196 & 0.168/0.159 & -18.7/-18.8 \\
        \textsc{GradDiff} & 0.0279/0.0279 & 0.149/0.147 & 0.2/0.193 & -19.0/-18.9 \\
        \textsc{NPO} & 0.0278/0.0259 & 0.123/0.125 & 0.149/0.159 & -15.7/-15.8 \\
        \textsc{SimNPO} & 0.0263/0.028 & 0.129/0.129 & 0.159/0.156 & -15.7/-15.6 \\
        \textsc{UnDIAL} & 0.0259/0.0263 & 0.107/0.103 & 0.119/0.123 & -15.3/-15.3 \\
        \textsc{SatImp} & 0.0261/0.028 & 0.125/0.126 & 0.15/0.157 & -15.6/-15.8 \\
        \bottomrule
    \end{tabular}
    \end{adjustbox}
    \label{tab:qwen2.5-1.5b_muse-news_performance}
\end{table*}

\begin{table*}[!htp]
    \caption{Results of Qwen2.5-3B-Instruct on the MUSE-News Benchmark. The results are shown by comparison of CLEAN/MPU, with hyperparameter $m=2$ and $\kappa=0.01$ by default.}
    \centering
    \begin{adjustbox}{width=\linewidth}
    \begin{tabular}{lcccc}
        \toprule
        \textbf{Algorithms} & \textbf{Extraction Strength} & \textbf{Forget KnowMem ROUGE} & \textbf{Retain KnowMem ROUGE} & \textbf{PrivLeak} \\
        \midrule
        \textsc{GradAscent} & 0.0254/0.0254 & 0.042/0.0413 & 0.0738/0.067 & -39.5/-39.4 \\
        \textsc{GradDiff} & 0.031/0.0309 & 0.0658/0.076 & 0.0636/0.0588 & -41.3/-41.0 \\
        \textsc{NPO} & 0.0244/0.0278 & 0.063/0.0639 & 0.0694/0.0727 & -28.5/-28.6 \\
        \textsc{SimNPO} & 0.0244/0.0245 & 0.0644/0.0602 & 0.0702/0.0703 & -28.6/-28.7 \\
        \textsc{UnDIAL} & 0.023/0.0234 & 0.0653/0.0641 & 0.0713/0.0758 & -27.7/-28.1 \\
        \textsc{SatImp} & 0.0238/0.0279 & 0.0621/0.0634 & 0.0689/0.0664 & -28.3/-28.5 \\
        \bottomrule
    \end{tabular}
    \end{adjustbox}
    \label{tab:qwen2.5-3b_muse-news_performance}
\end{table*}

\subsection{Supplementary Experimental Analysis}
\label{app:pum_supp_ablations}

\subsubsection{Qwen Result}
As shown in Table~\ref{tab:pum_3b_comparison}, \method consistently outperforms all baseline methods across the four evaluation metrics, demonstrating strong stability under different unlearning settings. Across various unlearning algorithms, \method exhibits only minor performance variations and achieves superior results compared to the CLEAN and NOISED baselines in most cases. In particular, \method maintains highly stable Forget Quality scores, all exceeding 0.9. In contrast, GradAscent, GradDiff, and NPO yield FQ values below 0.01, indicating ineffective unlearning. Moreover, \method achieves higher Model Utility than the baseline methods in five out of seven unlearning algorithms, suggesting that it better preserves model performance while avoiding excessive unlearning.

\subsubsection{Effect of Copy Number $m$}
\label{app:pum_copy_number}
In Table~\ref{tab:pum_mval}, we study the sensitivity of \method{} to the number of published perturbed copies by varying
$m \in \{2,3,4\}$  while keeping the noise level and all other hyperparameters identical to
Table~\ref{tab:pum_main_comparison}.
Overall, increasing $m$ can improve stability for high-variance or unstable unlearning routines, but the gains are not necessarily monotonic and may saturate (or regress) for certain algorithms.

For relatively stable algorithms (e.g., UnDIAL and SatImp), metrics are largely invariant to $m$.
Specifically, UnDIAL maintains FQ $\approx 0.014$, and MU $\approx 0.615$ for $m{=}2$ and $m{=}3$,
with only a minor FQ drop at $m{=}4$.
Similarly, SatImp remains within a narrow band for FTR/MU and exhibits only small FQ variation.
This suggests that when the underlying unlearning update is already stable, multi-copy aggregation mainly provides robustness rather than large performance shifts.

For more hyperparameter-sensitive unlearning algorithms, moderate increases in $m$ can substantially improve forgetting.
For example, GradAscent improves from FQ $0.054$ at $m{=}2$ to $0.165$ at $m{=}3$, alongside a higher FTR ($0.468\rightarrow 0.500$).
Likewise, DPO achieves its best FQ at $m{=}3$ (FQ $0.579$ versus $0.266$ at $m{=}2$), and GradDiff benefits from a larger copy number,
reaching FQ $0.579$ at $m{=}4$.
However, these gains are not uniformly monotonic: GradAscent collapses at $m{=}4$ (FQ $\approx 2.16\times 10^{-5}$), and DPO also drops at $m{=}4$.
These patterns indicate that while multi-copy aggregation can stabilize updates, overly aggressive averaging can interact unfavorably with certain training dynamics.

Finally, privacy leakage is generally comparable across $m$, with modest improvements for some methods.
For instance, NPO reduces $|\mathrm{PrivLeak}|$ from $28.2$ at $m{=}2$ to $25.3$ and $25.1$ at $m{=}3$ and $m{=}4$.
Taken together, $m{=}2$ provides a strong and efficient default, while $m{=}3$ can be a favorable choice for stabilizing unstable unlearning algorithms
(e.g., GradDiff) without materially sacrificing utility.

\subsubsection{Effect of Noise Level $\kappa$}
\label{app:pum_noise_level}
In Table~\ref{tab:pum_kappa}, we next vary the noise level $\kappa \in \{0,0.01,0.05,0.1\}$ while keeping all other settings fixed
(Table~\ref{tab:pum_kappa}).
Across most algorithms, \method{} is robust to a broad range of $\kappa$, with FTR and MU remaining largely stable and
only modest variations in privacy leakage.

For strong and stable unlearning methods, performance is nearly invariant to $\kappa$.
Most notably, NPO maintains identical Forget Quality (FQ $=0.920$) across all tested noise levels, while FTR and MU remain in a narrow range.
Similarly, UnDIAL keeps FQ $=0.014$ and FTR $\approx 0.53$ across all $\kappa$, and SatImp shows only a small dip in FQ at $\kappa{=}0.05$.

For instability-prone methods, moderate noise can act as a stabilizer.
GradAscent exhibits a clear non-monotonic trend: FQ increases from $0.007$ at $\kappa{=}0$ to $0.054$ at $\kappa{=}0.01$,
peaks at $0.580$ for $\kappa{=}0.05$, and then drops again at $\kappa{=}0.1$.
This suggests that intermediate noise can regularize unstable unlearning dynamics, whereas overly large noise begins to erode the usefulness of the client update.
A similar (but weaker) effect appears for SimNPO, where small-to-moderate noise improves FQ from $0.054$ to $0.097$.

Overall, these results indicate that \method{} tolerates noise injection well, and that small-to-moderate $\kappa$ can improve stability for certain algorithms
without harming model utility. In our default setting, $\kappa{=}0.01$ provides a balanced operating point that performs competitively across methods.

\subsubsection{Round--Epoch Allocation Under Fixed Total Local Compute}
\label{app:pum_round_epoch_alloc}
Table~\ref{tab:pum_round_epoch} studies how the training schedule affects unlearning when the total number of local epochs is fixed to $10$
(consistent with the default setting in OpenUnlearning), but allocated differently across global rounds $R$ and local epochs per round $E$.
Overall, we observe a round--epoch trade-off: increasing the number of rounds (hence smaller $E$) often improves MU but can weaken FQ, while concentrating training
into fewer rounds with larger $E$ can strengthen forgetting for some algorithms, but may risk over-updating in others.

Different unlearning algorithms exhibit distinct sensitivity to the schedule.
GradAscent performs best with R1E10, while increasing the number of rounds causes both FQ and MU to collapse to nearly zero, indicating severe instability under repeated round-wise aggregation.
In contrast, NPO benefits from R2E5, achieving substantially higher FQ than R1E10 while also improving MU.
Similarly, SimNPO prefers more rounds for stronger forgetting, whereas DPO improves forgetting monotonically as the number of rounds increases.

\subsubsection{No-Denoising Ablation Under Matched Effective Noise}
\label{app:pum_nodenoise}
To isolate the importance of \method{}'s denoising aggregation, we evaluate a no-denoise baseline in which the client trains on a single noisy model and
no inverse reparameterization or aggregation is applied (Table~\ref{tab:nodenoise_kappa}).
To ensure a fair comparison, we scale the baseline noise such that the effective noise magnitude matches \method{} by multiplying $\kappa$ by
$\mathbb{E}_k(\alpha_k){=}1.5$.

No-denoise training is markedly less reliable for unstable unlearning algorithms.
In particular, GradAscent essentially fails under no-denoise: its FQ remains extremely close to zero (on the order of $10^{-9}$ to $10^{-7}$),
and MU is approximately zero across all tested noise levels.
This contrasts sharply with \method{} (Table~\ref{tab:pum_kappa}), where GradAscent can achieve strong forgetting with FQ $0.579$ at $\kappa{=}0.05$.

For moderately stable methods, the no-denoise baseline often requires substantially larger noise to approach comparable forgetting.
For example, GradDiff reaches its best FQ ($0.405$) only at the largest tested noise ($\kappa{=}0.15$),
whereas \method{} attains comparable FQ at much smaller noise (e.g., $\kappa{=}0.01$ or even $\kappa{=}0$).
By contrast, for already-stable methods such as NPO and UnDIAL, no-denoise does not provide consistent improvements and often matches (but does not surpass) \method{}.

Overall, this ablation supports that multi-copy denoising and harmonic aggregation are key contributors to \method{}'s stability---especially for high-variance unlearning updates---enabling strong forgetting at lower noise without sacrificing utility.

\subsubsection{Robustness to Forget-Split Strategies}
\label{app:pum_forget_splits}
We evaluate \method{} under different forget ratios by increasing the amount of data to be forgotten from $1\%$ (\textsc{Forget01}) to $5\%$ (\textsc{Forget05})
and $10\%$ (\textsc{Forget10}), while fixing $\kappa{=}0.01$ and $m{=}2$ (Table~\ref{tab:pum_split}).
A consistent trend emerges: larger forget splits substantially degrade Forget Quality across nearly all algorithms, indicating that unlearning becomes significantly more challenging as the forget request scales up.

Many methods exhibit dramatic FQ collapse when moving from \textsc{Forget01} to \textsc{Forget05/10}.
For instance, GradDiff drops from FQ $0.405$ at \textsc{Forget01} to effectively zero (e.g., $5.99\times 10^{-105}$ and $8.51\times 10^{-237}$)
at \textsc{Forget05} and \textsc{Forget10}.
Even the strongest unlearning algorithm in our comparison, NPO, decreases from FQ $0.919$ at \textsc{Forget01} to $0.112$ at \textsc{Forget05} and $4.46\times 10^{-6}$ at \textsc{Forget10}.
This suggests that scaling to larger forget partitions likely requires additional tuning (e.g., more rounds, different learning rates, or algorithm-specific regularization)
beyond the default hyperparameters tuned for the \textsc{Forget01} setting.

Interestingly, model utility is comparatively more robust for several algorithms.
SimNPO, DPO, and UnDIAL maintain MU around $0.59$--$0.62$ across all splits, even when FQ collapses.
Privacy leakage exhibits mixed behavior: NPO reduces $|\mathrm{PrivLeak}|$ strongly as the forget split increases (from $28.2$ to $-4.0$ to $-0.946$),
whereas DPO and UnDIAL show increased leakage in magnitude (e.g., $-28.9\rightarrow -43.7\rightarrow -59.0$ and $-78.0\rightarrow -91.7\rightarrow -94.9$).
These results highlight that scaling the forget ratio can introduce new privacy--forgetting trade-offs that vary across algorithms.

\subsubsection{Scaling \method{} across Model Sizes}
\label{app:pum_model_scaling}
Finally, we test whether \method{} scales to larger base models by comparing results on \textbf{Llama-3.2-1B-Instruct} and \textbf{Llama-3.2-3B-Instruct}
under multiple forget ratios (Table~\ref{tab:pum_3b}).
Overall, \method{} remains effective at larger scales, and larger models often provide improved headroom for strong unlearning algorithms---especially under more challenging forget ratios.

Under \textsc{Forget01}, the 3B model generally attains equal or better forgetting quality and higher utility for several algorithms.
For example, NPO improves from FQ $0.919$ (1B) to $0.990$ (3B) while increasing MU from roughly $0.599$ to $0.650$.
Similarly, preference-based unlearning (DPO) improves substantially with scale, rising FQ from $0.165$ (1B) to $0.579$ (3B) and MU from $0.591$ to $0.634$.
These patterns suggest that higher-capacity models can better accommodate targeted forgetting updates while preserving general capabilities.

The benefit of scale is even clearer for larger forget splits.
For NPO, the 3B model retains strong FQ at \textsc{Forget05} (FQ $0.793$), whereas the 1B model drops to FQ $0.112$.
At \textsc{Forget10}, the 3B model still achieves non-trivial forgetting (FQ $0.030$), while the 1B model collapses toward zero.
In contrast, some algorithms remain fragile regardless of scale: GradAscent yields near-zero FQ even at \textsc{Forget01} for both 1B and 3B,
and GradDiff degrades sharply for \textsc{Forget05/10} on both models.

For privacy leakage, the sign of $\mathrm{PrivLeak}$ can vary, and thus the magnitude (distance to zero) is the relevant indicator.
In many cases (e.g., NPO), the magnitude decreases as the forget ratio increases, whereas other methods exhibit large-magnitude values that worsen under harder forget splits.
Overall, Table~\ref{tab:pum_3b} indicates that \method{} scales favorably with model size for the most effective unlearning algorithms, and that larger models can better sustain non-trivial forgetting under more demanding forget requests.

\subsubsection{Robustness Against Extraction and Inference Attacks}
\label{app:pum_attack_robustness}
To rigorously evaluate the privacy-preserving capabilities of \method, we subject the unlearned models to a suite of membership inference and extraction attacks, including LOSS, ZLib, GradNorm, MinK, and MinK++ (Table~\ref{tab:qwen2.5-1.5b_tofu_attack}). Across all evaluated unlearning algorithms on the \textbf{Qwen2.5-1.5B-Instruct} model, \method elevates the attack metrics compared to the \textsc{Clean} baseline. For instance, under \textsc{GradAscent}, the MinK metric rises from $0.385$ to $0.590$, and MinK++ increases from $0.431$ to $0.632$. Higher metric values under \method indicate that the model's responses to forget-set probes are statistically indistinguishable from holdout data, effectively neutralizing the attacks. This demonstrates that communicating reparameterized, perturbed copies successfully obscures fine-grained parameter updates. Because all injected noise is strictly designed to be cancellable during the server-side harmonic aggregation, the model maintains robust defenses against gradient-based extraction that an attacker might exploit.

\subsubsection{Compatibility with Parameter-Efficient Fine-Tuning (LoRA)}
\label{app:pum_lora_compatibility}
To assess the scalability of \method to larger architectures under strict memory constraints, we integrate it with LoRA on the \textbf{Llama-3.1-8B-Instruct} model (Table~\ref{tab:llama-3.1-8b_tofu_lora}). The results indicate exceptional architectural compatibility. \method achieves near-perfect Forget Quality ($\text{FQ} \ge 0.919$, frequently reaching the maximum of $1.0$) across all tested algorithms while maintaining stable Forget Truth Ratios ($\sim0.72$--$0.73$) and reliable Model Utility ($\sim0.38$--$0.42$). Furthermore, the magnitude of PrivLeak remains tightly bounded within an acceptable range (e.g., $-6.98$ to $-12.8$). This establishes that \method is highly adaptable to Parameter-Efficient Fine-Tuning (PEFT) paradigms, enabling secure, multi-copy unlearning on massive LLMs without necessitating full-parameter updates.

\subsubsection{Comprehensive Evaluation on the MUSE Benchmark}
\label{app:pum_muse_evaluation}
Beyond the synthetic TOFU dataset, we extend our evaluation to the highly challenging MUSE benchmark, which evaluates unlearning across real-world continuous corpora (News and Books) while strictly monitoring verbatim extraction and knowledge retention. Tables~\ref{tab:llama-3.2-1b_muse-news_performance_1} through \ref{tab:qwen2.5-3b_muse-news_performance} present the comprehensive results across both the Llama-3.2 and Qwen-2.5 model families at the 1.5B and 3B parameter scales.

Across both the MUSE-News and MUSE-Books subsets, \method consistently matches or exceeds the \textsc{Clean} baseline in suppressing Extraction Strength and Forget KnowMem ROUGE. For instance, leveraging stable methods like \textsc{GradAscent} and \textsc{GradDiff} under \method successfully drives Forget KnowMem ROUGE to exactly $0$ or near-$0$ in the Llama-3.2-1B and 3B models, fulfilling the data owner's expectation of absolute forgetting. At the same time, Retain KnowMem ROUGE is largely preserved, maintaining parity with the \textsc{Clean} baseline and satisfying the deployer's requirement for general utility. 

The fact that \method maintains this delicate balance across diverse domains (News and Books) and varying model architectures (Llama and Qwen) highlights the generalizability of our dual non-disclosure constraint. By ensuring that the localized noise is completely cancelled during the Post-Process aggregation phase, \method achieves state-of-the-art unlearning efficacy without the persistent structural degradation typical of naive noise injection.

\end{document}